\def\eqref#1{equation~\ref{#1}}
\def\1{\bm{1}}
\DeclareMathAlphabet{\mathsfit}{\encodingdefault}{\sfdefault}{m}{sl}
\SetMathAlphabet{\mathsfit}{bold}{\encodingdefault}{\sfdefault}{bx}{n}
\newtheorem{theorem}{Theorem}
\newtheorem{apptheorem}{Theorem}
\newtheorem{corollary}{Corollary}
\newtheorem{lemma}{Lemma}
\newcolumntype{P}[1]{>{\centering\arraybackslash}m{#1}}
\newcolumntype{M}[1]{>{\arraybackslash}m{#1}}
\newcommand{\tv}{\mathsf{TV}}
\newcommand{\newcontent}[1]{\textcolor{black}{#1}}
\newcommand{\revision}[1]{\textcolor{black}{#1}}
\title{Can AI-Generated Text be Reliably Detected?
}
\author{\name Vinu Sankar Sadasivan \email vinu@umd.edu \\
      \addr Department of Computer Science,
      University of Maryland
      \AND
      \name Aounon Kumar \email aokumar@hbs.edu \\
      \addr Department of Computer Science,
      Harvard University
      \AND
      \name Sriram Balasubramanian \email sriramb@umd.edu \\
      \addr Department of Computer Science,
      University of Maryland
      \AND
      \name Wenxiao Wang \email wwx@umd.edu \\
      \addr Department of Computer Science,
      University of Maryland
      \AND
      \name Soheil Feizi \email sfeizi@umd.edu \\
      \addr Department of Computer Science,
      University of Maryland}
\begin{document}

\maketitle

\begin{abstract}
\newcontent{
Large Language Models (LLMs) can perform impressively well in various applications, such as document completion and question-answering. 
However, the potential for misuse of these models in activities such as plagiarism, generating fake news, and spamming has raised concerns about their responsible use.  
Consequently, the reliable detection of AI-generated text has become a critical area of research.
Recent works have attempted to address this challenge through various methods, including the identification of model signatures in generated text outputs and the application of watermarking techniques to detect AI-generated text.
These detectors have shown to be effective under their specific settings.
In this paper, we stress-test the robustness of these AI text detectors in the presence of an attacker.
We introduce {\it recursive paraphrasing} attack to stress test a wide range of detection schemes, including the ones using the watermarking as well as neural network-based detectors, zero-shot classifiers, and retrieval-based detectors.
Our experiments conducted on passages, each approximately 300 tokens long, reveal the varying sensitivities of these detectors to our attacks.
We also observe that these paraphrasing attacks add slight degradation to the text quality.
We analyze the trade-offs between our attack strength and the resulting text quality, measured through human studies, perplexity scores, and accuracy on text benchmarks. 
Our findings indicate that while our recursive paraphrasing method can significantly reduce detection rates, it only slightly degrades text quality in many cases, highlighting potential vulnerabilities in current detection systems in the presence of an attacker.
Additionally, we investigate the susceptibility of watermarked LLMs to spoofing attacks aimed at misclassifying human-written text as AI-generated. 
We demonstrate that an attacker can infer hidden AI text signatures without white-box access to the detection method, potentially leading to reputational risks for LLM developers.
Finally, we provide a theoretical framework connecting the AUROC of the best possible detector to the Total Variation distance between human and AI text distributions. 
This analysis offers insights into the fundamental challenges of reliable detection as language models continue to advance.
Our code is publicly available at \url{https://github.com/vinusankars/Reliability-of-AI-text-detectors}.}
\end{abstract}

\section{Introduction}

Artificial Intelligence (AI) has made tremendous advances in recent years, from generative models in computer vision \citep{stablediff, imagen} to generative models in natural language processing (NLP) \citep{gpt3, opt, t5}. Large Language Models (LLMs) can now generate texts of supreme quality with the potential in many applications. For example, the recent model of ChatGPT \citep{chatgpt} can generate human-like texts for various tasks such as writing codes for computer programs, lyrics for songs, completing documents, and question answering; its applications are endless. The trend in NLP shows that these LLMs will even get better with time. However, this comes with a significant challenge in terms of authenticity and regulations. AI tools have the potential to be misused by users for unethical purposes such as plagiarism, generating fake news, spamming, generating fake product reviews, and manipulating web content for social engineering in ways that can have negative impacts on society \citep{adelani2020generating, weiss2019deepfake}. Some news articles rewritten by AI have led to many fundamental errors in them \citep{cnet}. Hence, there is a need to ensure the responsible use of these generative AI tools. In order to aid this, a lot of recent research focuses on detecting AI-generated texts. 

\begin{figure}[t]
    \centering
    \includegraphics[width=1\linewidth]{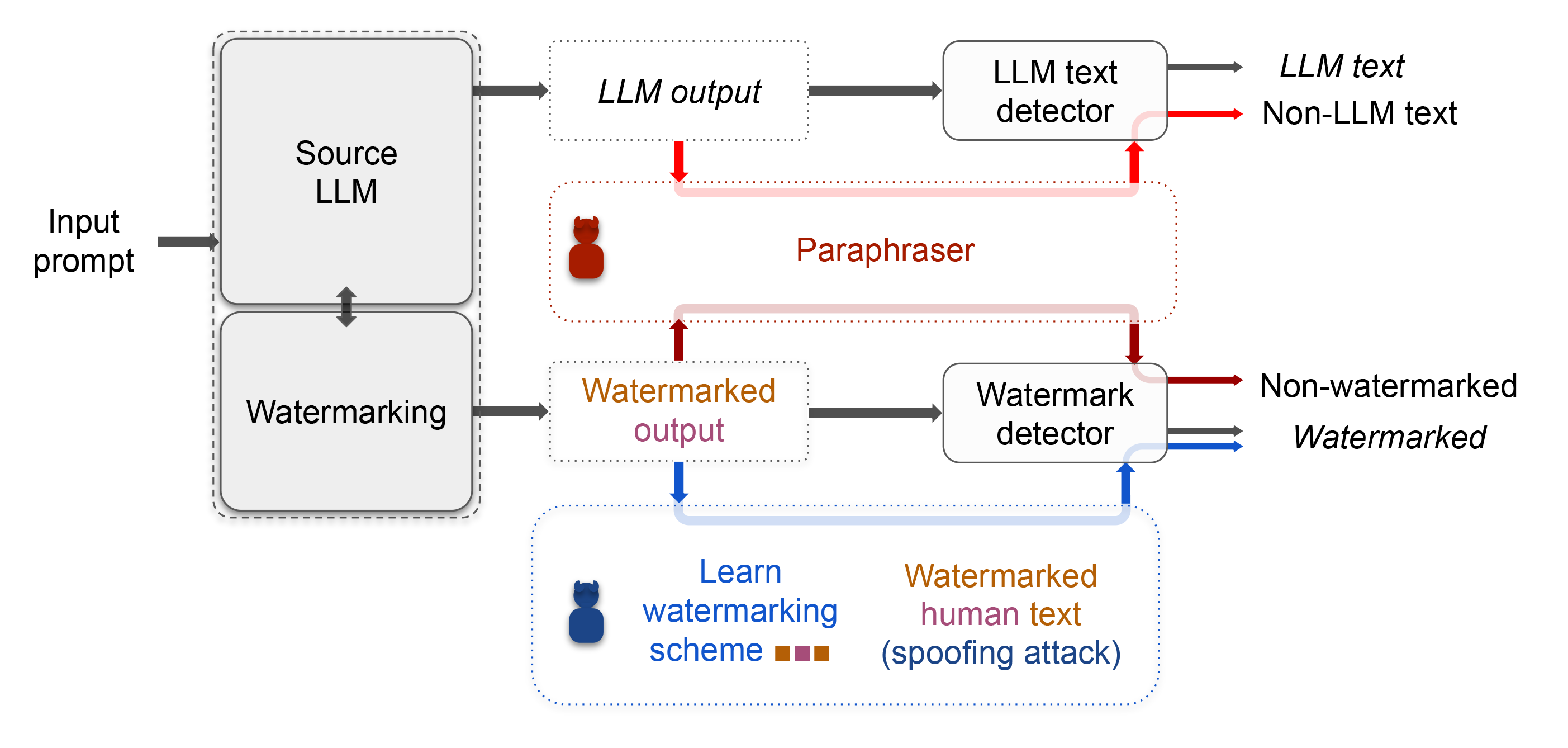}
    % \vspace{-2mm}
    \caption{An illustration of vulnerabilities of existing AI-text detectors. We consider both watermarking-based and non-watermarking-based detectors and show that they are not reliable in practical scenarios. Colored arrow paths show the potential pipelines for adversaries to avoid detection. In \textcolor{red}{red}: an attacker can use a paraphraser to remove the LLM signatures from an AI-generated text to avoid detection. In \textcolor{blue}{blue}: an adversary can query the watermarked LLM multiple times to learn its watermarking scheme. This information can be used to spoof the watermark detector. 
    }  
    \label{fig:title}
    % \vspace{-5mm}
\end{figure}  

\newcontent{
Recent works propose several ways, such as using trained neural network-based detectors, zero-shot detectors, watermarking, and retrieval-based detectors for flagging AI-generated text.
These detectors, especially watermarking, have shown to be effective in various settings.
Neural network-based detectors approach the detection problem as a binary classification task \citep{openaidetectgpt2, jawahar2020automatic, mitchell2023detectgpt, bakhtin2019real, fagni2020tweepfake, li2024magemachinegeneratedtextdetection}. For example, OpenAI fine-tunes RoBERTa-based \citep{liu2019roberta} GPT-2 detector models to distinguish between non-AI generated and GPT-2 generated texts \citep{openaidetectgpt2}. 
This requires such a detector to be fine-tuned with supervision on each newly released LLM for reliable detection.
Zero-shot detectors address this downside of trained detectors by performing the detection task without any additional training overhead \citep{solaiman2019release, ippolito2019automatic, gehrmann2019gltr}. These works evaluate the expected per-token log probability of texts and perform thresholding to detect AI-generated texts. \citet{mitchell2023detectgpt} observe that AI-generated passages tend to lie in negative curvature of log probability of texts. To leverage this observation, they propose DetectGPT, a zero-shot LLM text detection method. 
However, zero-shot detectors require access to the original model used to generate the AI text to achieve the best performance.
Additionally, neural network-based and zero-shot detectors rely on a deep net for their detection, and they can be vulnerable to adversarial and poisoning attacks \citep{goodfellow2014explaining, sadasivan2023cuda, kumar2022certifying, wang2022improved}.
In comparison to these methods, watermarking significantly eases the detection of AI-generated text by imprinting specific patterns on them that are imperceptible to humans  \citep{atallah2001natural, wilson2014linguistic, kirchenbauer2023watermark, zhao2023protecting, kuditipudi2024robustdistortionfreewatermarkslanguage, zhao2023provablerobustwatermarkingaigenerated}.
Soft watermarking proposed in \citet{kirchenbauer2023watermark} partitions tokens into ``green'' and ``red'' lists, as they define, to help create these patterns. A watermarked LLM samples a token, with high probability, from the green list determined by a pseudo-random generator seeded by its prefix token. The watermarking detector would classify a passage with a large number of tokens from the green list as AI-generated.
The soft watermarking approach of \citet{kirchenbauer2023watermark} has been shown to be effective in various settings and remains one of the popular approaches for detecting AI-generated text. 
For example, their watermarking scheme can achieve a high true positive rate of 99.8\% at a false positive rate of 1\% on a task for classifying AI text against human-written news articles.
However, for watermarking to be an effective tool for preventing AI misuse, it has to be enforced on all the major LLM generators. Otherwise, an adversary could use a non-watermarking LLM for their purposes.
\citet{krishna2023paraphrasing} introduces an information retrieval-based detector by storing the outputs of the LLM in a database. For a candidate passage, their algorithm searches this database for semantically similar matches for detection. However, storing user-LLM conversations might cause serious privacy concerns.
}

\newcontent{Several recent news \citep{news1, news2, news3, news4, news5} show that some of these popular AI-text detectors fail in practical settings.
In this paper, through several experiments, we stress-test state-of-the-art AI-text detectors to evaluate their robustness in the presence of an attacker \citep{wolff, scottaaronson, liang2023gpt, deepfaketext, m4}.
In \S \ref{sec:aigentextnotdetected}, we have developed a {\it recursive paraphrasing attack} that use neural network-based paraphrasing to recursively paraphrase the source LLM's output text. 
We perform experiments with our automated recursive paraphrasing to show the sensitivity of a range of AI text detectors to {\it type-II error} (detecting AI text as human text).
For instance, {\bf recursive paraphrasing attack on watermarked texts, even over relatively long passages of 300 tokens in length, can drop the detection rate (true positive rate at \bm{$1\%$} false positive rate or TPR@\bm{$1\%$}FPR) from $\bm{99.3\%}$ to $\bm{9.7\%}$.} 
We find that our attack can add slight degradations to the AI text quality.
Hence, we analyze the trade-offs between our attack and the resulting text quality, measured through human studies, perplexity scores, and accuracy of text benchmarks.
Our attack differs from the relatively weaker attack from \cite{kirchenbauer2023watermark} where they perform span replacement by replacing random tokens (in-place) using an LLM. Thus, our experiments show the sensitivity of the watermarking scheme against paraphrasing attacks in the presence of a stronger attacker.
}
\revision{
\cite{zhang2024watermarkssandimpossibilitystrong} and \cite{sico} are also substitution-based attacks. 
\cite{zhang2024watermarkssandimpossibilitystrong} have a different attack objective when compared to our attack. Their attack is performed to maintain the quality of the text alone and not semantic similarity. Hence, their attack can change the original content.
\cite{sico} employs in-context optimization through iterative generation of word- or sentence-level substitutions using a proxy AI text detector to guide the substitutions. This positions their attack as an adversarial algorithm for evading text detection. In contrast, our approach focuses on non-adversarial iterative or recursive text paraphrasing attacks. 
}

After paraphrasing, the area under the receiver operating characteristic (AUROC) curves of zero-shot detectors \citep{mitchell2023detectgpt} drops from $96.5\%$ to $25.2\%$. We also observe that the performance of neural network-based trained detectors \citep{openaidetectgpt2} deteriorates significantly after our paraphrasing attack. For instance, the TPR@$1\%$FPR of the RoBERTa-Large-Detector from OpenAI drops from $100\%$ to $60\%$ after paraphrasing. In addition, we show that the retrieval-based detector by \citet{krishna2023paraphrasing} designed to evade paraphrase attacks is vulnerable to our recursive paraphrasing. In fact, the accuracy of their detector falls from $100\%$ to below $60\%$ with our recursive paraphrase attack.

% We also observe that the quality of the paraphrased passages degrades, but only slightly, compared to the original ones.
\newcontent{
To quantify the drop in text quality after recursive paraphrasing, we perform MTurk human evaluation studies and measure other automated metrics such as perplexity and text benchmark accuracy. \textbf{Our human evaluation study shows that $77\%$ of the recursively paraphrased passages are rated high quality in terms of content preservation, and $89\%$ of them are rated high quality in terms of grammar or text quality.} 
We also show that our \textbf{recursive paraphrasing, when applied to a text benchmark such as a question-answering dataset, does not affect the performance}, providing additional evidence that recursive paraphrasing does not hurt the content of the original text.}
\newcontent{Though an attacker may further improve the text quality with manual interventions, paraphrasing attacks can be sufficient for an adversary to perform social engineering tasks such as spamming, phishing, or spreading propaganda.}

% With this intuition in mind, in \S \ref{sec:aigentextnotdetected}, we use light-weight neural network-based paraphrasers ($2.3\times$ and $5.5\times$ smaller than the source LLM in terms of the number of parameters) to rephrase the source LLM's output text. Our experiments show that this automated paraphrasing attack can drastically reduce the accuracy of various detectors, including those using soft watermarking. For example, a PEGASUS-based paraphraser \citep{zhang2019pegasus} can drop the soft watermarking detector's \citep{kirchenbauer2023watermark} accuracy from $97\%$ to $80\%$ with just a degradation of 3.5 in the perplexity score. The area under the receiver operating characteristic (AUROC) curves of zero-shot detectors \citep{mitchell2023detectgpt} drop from $96.5\%$ to $25.2\%$ using a T5-based paraphraser \citep{prithivida2021parrot}. We also observe that the performance of neural network-based trained detectors \citep{openaidetectgpt2} deteriorate significantly after our paraphrasing attack. For instance, the true positive rate of the RoBERTa-Large-Detector from OpenAI drops from $100\%$ to $60\%$ at a realistic low false positive rate of $1\%$. In addition, we show that retrieval-based detector by \citet{krishna2023paraphrasing} designed to evade paraphrase attacks are vulnerable to {\it recursive} paraphrasing. In fact, the accuracy of their detector falls from $100\%$ to $25\%$ with our {\it recursive} paraphrase attack.

Moreover, we show the possibility of {\bf spoofing attacks} on various AI text detectors in  \S \ref{sec:humantextdetected}. In this setting, an attacker generates a non-AI text that is detected to be AI-generated, thus adversarially increasing \textit{type-I error} (falsely detecting human text as AI text). An adversary can potentially launch spoofing attacks to produce derogatory texts that are detected to be AI-generated to affect the reputation of the target LLM's developers. In particular, we show that an adversary can infer hidden AI text signatures without having white-box access to the detection method. For example, though the pseudo-random generator used for generating watermarked text is private, we develop an attack that adaptively queries the target LLM multiple times to learn its watermarking scheme. An {\it adversarial human} can then use this information to compose texts that are detected to be watermarked. Figure \ref{fig:title} shows an illustration of some of the vulnerabilities of the existing AI-text detectors.
\revision{
\cite{gu2024learnabilitywatermarkslanguagemodels} build upon our spoofing attacks by employing watermarked data distillation to train a student model to replicate the next-token distribution. 
}

Finally, in \S \ref{sec:impossibilityresult}, we present a theoretical result regarding the hardness of AI-text detection. Our main result in Theorem \ref{thm:ROC_bound} states that the AUROC of the best possible detector differentiating two distributions $\mathcal{H}$ (e.g., human text) and $\mathcal{M}$ (e.g., AI-generated text) reduces as the total variation distance $\mathsf{TV}(\mathcal{M}, \mathcal{H})$ between them decreases. Note that this result is true for any two arbitrary distributions $\mathcal{H}$ and $\mathcal{M}$. For example, $\mathcal{H}$ could be the text distribution for a person or group and $\mathcal{M}$ could be the output text distribution of a general LLM or an LLM trained by an adversary to mimic the text of a particular set of people. Essentially, adversaries can train LLMs to mimic human text as they get more sophisticated, potentially reducing the TV distance between human and AI text, leading to an increasingly more difficult detection problem according to our Theorem \ref{thm:ROC_bound}. Although estimating the exact TV between text distributions from a finite set of samples is a challenging problem, we provide some empirical evidence, over simulated data or via TV estimations, showing that more advanced LLMs can potentially lead to smaller TV distances. Thus, \textbf{our Theorem \ref{thm:ROC_bound} would indicate an increasingly more difficult reliable detection problem} in such cases. 
Our theory also indicates that if a detector becomes more robust to type-I errors, type-II errors will increase, revealing a fundamental tradeoff between type-I and type-II errors for the AI text detection problem.
Similar tradeoffs have been explored in other domains as well.
For example, \citet{KhajaviK16} study the relationship between detection performance and KL divergence between input distributions in the context of covariance selection.
\citet{thapliyal2022datadriven} show that undetectable cyberattacks can be generated by mimicking the input-output data distribution of network control systems.
Although not a surprising result, Theorem~\ref{thm:ROC_bound} is the first to link this tradeoff to the detection of AI-generated content to our knowledge.

Identifying AI-generated text is a critical problem to avoid its misuse by users for unethical purposes such as plagiarism, generating fake news, and spamming. However, \newcontent{blindly relying on these} detectors may {\it not} be the right solution to tackle this issue since it can cause its own damages, such as falsely accusing a human of plagiarism. Our results highlight the sensitivities of a wide range of detectors to both evasion and spoofing attacks and indicate the difficulty of developing reliable detectors \newcontent{in the presence of an attacker}. 
\newcontent{We hope to reveal the sensitivity of AI text detectors to various attacks in our stress tests experiments.}

In summary, we make the following contributions in this work.
\begin{itemize}
    \item \newcontent{Our work is the {\it first to comprehensively analyze} the robustness of four different classes of detectors, including watermarking-based, neural network-based, zero-shot, and retrieval-based detectors, and stress-test them in the presence of an attacker (in \S\ref{sec:aigentextnotdetected}). In particular, the {\it recursive paraphrasing attack} that we develop is the first method that can break watermarking \citep{kirchenbauer2023watermark} and retrieval-based \citep{krishna2023paraphrasing} detectors. We also provide experiments to analyze the resulting text quality after our attack to find that recursive paraphrasing only leads to a slight text quality degradation in many cases.} 
    
    \item Our work is the first to show that existing detectors are vulnerable against {\it spoofing attacks} where an adversarial human aims to write a (potentially derogatory) passage falsely detected as AI-generated {\it without} having a white-box access to the detection methods (in \S\ref{sec:humantextdetected}). For instance, as proof of concept, we show that an adversary can infer the watermarking signatures by probing the watermarked LLM and analyzing the statistics of the generated tokens.     
    
    \item Our work is the first to establish a theoretical connection between the AUROC of the best possible detector and the TV distance between human and AI-text distributions that can be used to study the hardness of the reliable text detection problem (in \S\ref{sec:impossibilityresult}).  Our theory also reveals a fundamental tradeoff between type-I and type-II errors for the AI text detection problem.
%    . Acknowledging difficulties in estimating exact TV distances for complex text distributions, we provide empirical evidences, over simulated data or via TV estimates, showing that more advanced LLMs can potentially lead to smaller TV distances, resulting in a fundamentally more difficult detection problem. 
\end{itemize}

%initiate an honest dialogue within the community concerning the ethical and dependable utilization of AI-generated text.
\section{Evading AI-Detectors using Paraphrasing Attacks}
\label{sec:aigentextnotdetected}

In this section, we first present the experimental setup for our paraphrasing attacks in \S\ref{sec:exp-setup}. \newcontent{We also provide experiments in \S\ref{sec:quality} to study the trade-off between evasion success and text quality after the attack.} \S\ref{sec:watermark_evade} and \S\ref{sec:paraphrase_nonwatermark} show the effect of the paraphrasing attacks on watermarking and non-watermarking detectors, respectively. 
\revision{
In Appendix~\ref{app:additionalexp}, we provide attack experiments with Llama-2-13B as the target model on additional detectors.}

\subsection{Attack Setup and Paraphrasing Methods}
\label{sec:exp-setup}

\newcontent{For evasion attacks, we consider a scenario where an adversary takes an AI response generated by a target model and then modifies the AI response in an automated and scalable fashion to evade detection. In this work, we propose the adversary modify the AI text from model $\mathcal{L}$ using an AI paraphraser $\mathcal{P}$ to evade detection.} 
\begin{wrapfigure}{r}{0.34\textwidth}
    \centering
    \vspace{-1mm}
    % \hspace{5mm}
    \includegraphics[width=0.58\linewidth]{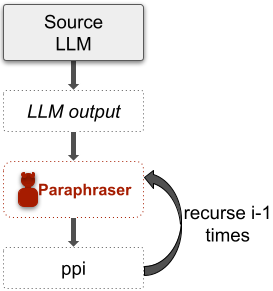}
    \caption{Recursive paraphrasing}
    \label{fig:rec-para}
    \vspace{-2mm}
\end{wrapfigure}
\newcontent{
Note that the adversary might be incentivized to use a detectable model $\mathcal{L}$ (say, watermarked) if $\mathcal{L}$ is powerful or might have been fine-tuned for specific applications. In these cases where $\mathcal{L}$ could answer a user prompt better, an adversary would still prefer to use the watermarked $\mathcal{L}$ to generate a response and then use a less detectable AI model $\mathcal{P}$ for paraphrasing to evade detection. We quantify the text quality using automated metrics such as perplexity and human studies. As shown in \S\ref{sec:quality}, our evasion attacks only lead to a slight degradation in text quality while successfully evading detectors most of the time. Note that the amount of degradation that can be tolerated is application-specific. For example, an adversary could tolerate more quality degradation when generating a social media post than when generating a news article.\looseness=-1}

\begin{table}[t]
    \centering\renewcommand\cellalign{lc}
    \setcellgapes{3pt}\makegapedcells
    \begin{adjustbox}{max width=0.75\textwidth}
    \begin{tabular}{c|c||c|c|c|c|c||c}
    \toprule
    \multicolumn{2}{c||}{\textbf{ppi}} & i=1 & i=2 & i=3 & i=4 & i=5 & All ppi \\ \midrule \midrule
    \multirow{2}{*}{\makecell{\textbf{Content}\\\textbf{preservation}}} & Avg. rating & {$4.0\pm0.8$} & {$4.1\pm0.8$} & {$3.9\pm0.9$} & {$4.2\pm0.9$} & {$3.7\pm1.1$} & {$4.0\pm0.9$} \\ \cline{2-8}
    {} & Ratings 5\&4 & {$70.2\%$} & {$77.2\%$} & {$63.2\%$} & {$80.0\%$} & {$61.4\%$} & {{$70.4\%$}} \\ \midrule
    \multirow{2}{*}{\makecell{\textbf{Grammar or}\\\textbf{text quality}}} & Avg. rating & {$4.28 \pm 0.67$} & {$4.12 \pm 0.50$} & {$4.12 \pm 0.53$} & {$4.11 \pm 0.64$} & {$4.07 \pm 0.53$} & {{$ 4.14\pm 0.58$}}\\ \cline{2-8}
    {} & Ratings 5\&4 & {$87.72\%$} & {$92.98\%$} & {$91.23\%$} & {$84.21\%$} & {$89.47\%$} & {{$89.12\%$}}\\ 
    % \midrule \midrule
    % \multicolumn{2}{c||}{\textbf{Mean perplexity} (5.5)} & 7.7 & 7.8 & 8.5 & 7.7  & 8.7 & 8.4 \\ \midrule
    % \multicolumn{2}{c||}{\textbf{QA performance} ($97\%$)} & $97\%$ & $96\%$ & $96\%$ & $96\%$  & $95.5\%$ & - \\
    \bottomrule
    \end{tabular}
    \end{adjustbox}
    \caption{Summary of the MTurk human evaluation study on content preservation and grammar or text quality of the recursive paraphrases with DIPPER that we use for our attacks. Ratings are on a Likert scale of 1 to 5. See Appendix~\ref{app:human_study} for details.}
    \label{tab:summary-mturk}
    
    % \vspace{1mm}
    \centering\renewcommand\cellalign{lc}
    \setcellgapes{3pt}\makegapedcells
    \begin{adjustbox}{max width=0.75\textwidth}
    \begin{tabular}{c|c||c|c|c|c|c||c}
    \toprule
    \multicolumn{2}{c||}{\textbf{ppi}} & i=1 & i=2 & i=3 & i=4 & i=5 & {All ppi} \\ \midrule \midrule
    \multirow{2}{*}{\makecell{\textbf{Content}\\\textbf{preservation}}} & Avg. rating & {$4.37\pm0.63$} & {$4.18\pm0.67$} & {$3.93\pm0.71$} & {$3.9\pm0.75$} & {$3.85\pm0.78$} & {{$4.05\pm0.2$}} \\ \cline{2-8}
    {} & Ratings 5\&4 & {$91.67\%$} & {$85.0\%$} & {$80.0\%$} & {$78.3\%$} & {$80.0\%$} & {{$83.0\%$}} \\ \midrule
    \multirow{2}{*}{\makecell{\textbf{Grammar or}\\\textbf{text quality}}} & Avg. rating & {$4.62 \pm 0.55$} & {$4.28\pm 0.73$} & {$4.26 \pm 0.65$} & {$4.22 \pm 0.64$} & {$4.17\pm0.74$} & {{$ 4.31\pm 0.35$}}\\ \cline{2-8}
    {} & Ratings 5\&4 & {$96.67\%$} & {$83.33\%$} & {$88.33\%$} & {$88.3\%$} & {$83.33\%$} & {{$88.0\%$}}\\ 
    % \midrule \midrule
    % \multicolumn{2}{c||}{\textbf{Mean perplexity} (5.5)} & 8.1 & 9.3 & 9.0 & 10.3  & 10.5 & 11.5 \\ \midrule
    % \multicolumn{2}{c||}{\textbf{QA performance} ($97\%$)} & $97\%$ & $97\%$ & $97\%$ & $97\%$  & $97\%$ & - \\
    \bottomrule
    \end{tabular}
    \end{adjustbox} 
    \caption{
    Summary of the MTurk human evaluation study of the recursive paraphrases with LLaMA-2-7B-Chat.}
    \label{tab:summary-mturk-llama}

    % \vspace{2mm}
    % \begin{adjustbox}{max width=.6\textwidth}
    % \begin{tabular}{l|l|c|c|c|c|c|c}
    % \toprule
    % \multicolumn{1}{c|}{\textbf{Paraphraser}} &
    %   \multicolumn{1}{c|}{\textbf{Evaluation}} &
    %   \textbf{AI text} &
    %   \textbf{pp1} &
    %   \textbf{pp2} &
    %   \textbf{pp3} &
    %   \textbf{pp4} &
    %   \textbf{pp5} \\ \midrule \midrule
    % \multirow{2}{*}{DIPPER}     & Mean perplexity       & 5.2  & 7.7  & 7.8  & 8.5  & 7.7  & 8.7    \\ \cmidrule{2-8} 
    %                             & QA performance & 97\% & 97\% & 96\% & 96\% & 96\% & 95.5\% \\ \midrule
    % \multirow{2}{*}{LLaMA-2-7B-Chat} & Mean perplexity       & 5.2  & 8.1  & 9.3  & 9.0  & 10.3 & 10.5   \\ \cmidrule{2-8} 
    %                             & QA Performance & 97\% & 97\% & 97\% & 97\% & 97\% & 97\%   \\ \bottomrule
    % \end{tabular}
    % \end{adjustbox}
    % \caption{
    % Automated evaluation of the text quality of recursive paraphrases using perplexity measures with respect to OPT-13B and question-answering benchmark accuracy.}
    % \label{tab:quality-eval}
    \vspace{-0.5cm}
    
\end{table}

We use the ``document'' features of the XSum dataset \citep{xsum} containing 1000 long news articles ($\sim$300 tokens in length) for our experiments. In Appendix~\ref{app:moreexps}, we perform experiments with additional datasets -- a medical text dataset PubMedQA \citep{jin-etal-2019-pubmedqa} and a dataset with articles from 10 different domains Kafkai \citep{kafkai}. 
As target LLMs, we use OPT-1.3B and OPT-13B \citep{opt} language models with 1.3B and 13B parameters, respectively.
In Appendix~\ref{app:moreexps}, we also evaluate our attacks with GPT-2 Medium \citep{gpt2} as the target model.
We use three different neural network-based paraphrasers -- DIPPER with 11B parameters \citep{krishna2023paraphrasing}, LLaMA-2-7B-Chat with 7B parameters \citep{llama}, and T5-based paraphraser \citep{prithivida2021parrot} with 222M parameters.
Suppose a passage $S = (s_1, s_2, ..., s_n)$ where $s_i$ is the $i^{th}$ sentence. DIPPER and LLaMA-2-7B-Chat paraphrase $S$ to be $S' = f_{strong}(S)$ in one-shot while the light-weight T5-based paraphraser would output $S' = (f_{weak}(s_1), f_{weak}(s_2), ..., f_{weak}(s_n))$ where they can only paraphrase sentence-by-sentence. DIPPER and LLaMA-2-7B-Chat also have the ability to input a context prompt text $C$ to generate higher-quality paraphrasing $S' = f_{strong}(S, C)$. We can also vary two different hyperparameters of DIPPER to generate a diverse number of paraphrases for a single input passage.

\begin{wraptable}{r}{10cm}
    \begin{adjustbox}{max width=.55\textwidth}
    \begin{tabular}{l|l|c|c|c|c|c|c}
    \toprule
    \multicolumn{1}{c|}{\textbf{Paraphraser}} &
      \multicolumn{1}{c|}{\textbf{Evaluation}} &
      \textbf{AI text} &
      \textbf{pp1} &
      \textbf{pp2} &
      \textbf{pp3} &
      \textbf{pp4} &
      \textbf{pp5} \\ \midrule \midrule
    \multirow{2}{*}{DIPPER}     & Mean perplexity       & 5.2  & 7.7  & 7.8  & 8.5  & 7.7  & 8.7    \\ \cmidrule{2-8} 
                                & QA performance & 97\% & 97\% & 96\% & 96\% & 96\% & 95.5\% \\ \midrule
    \multirow{2}{*}{LLaMA-2-7B-Chat} & Mean perplexity       & 5.2  & 8.1  & 9.3  & 9.0  & 10.3 & 10.5   \\ \cmidrule{2-8} 
                                & QA Performance & 97\% & 97\% & 97\% & 97\% & 97\% & 97\%   \\ \bottomrule
    \end{tabular}
    \end{adjustbox}
    \caption{
    Automated evaluation of the text quality of recursive paraphrases using perplexity measures with respect to OPT-13B and question-answering benchmark accuracy.}
    \label{tab:quality-eval}
    \vspace{-0.4cm}
\end{wraptable} 
We use DIPPER and LLaMA-2-7B-Chat for recursive paraphrasing attacks since they provide high-quality paraphrasing when compared to the 222M parameter T5 model.
Let an LLM $\mathcal{L}$ generate AI text output $S = \mathcal{L}(C)$ for an input prompt $C$. DIPPER or LLaMA-2-7B-Chat can be used to generate a paraphrase $\texttt{pp1}(S) = f_{strong}(S, C)$. This paraphrasing can be performed in recursion (see Figure~\ref{fig:rec-para}). That is, $\texttt{pp2}(S)= f_{strong}(\texttt{pp1}(S), C)$ and so on. 

While DIPPER is explicitly trained to be a paraphraser, LLaMA-2-7B-Chat is an instruction-tuned model for chat purposes.
We design a system prompt (see Appendix~\ref{app:systemprompts}) with the LLaMA-2-7B-Chat model to use it as a paraphraser.
In \S\ref{sec:watermark_evade} and \S\ref{sec:paraphrase_nonwatermark}, we show that recursive paraphrasing is effective in evading the strong watermark and retrieval-based detectors when compared to a single round of paraphrasing. 
Using human and other automated evaluation techniques in \S\ref{sec:quality}, we show that our recursive paraphrasing method only degrades the text quality slightly most of the time.

\subsection{Quality of the Paraphrases}
\label{sec:quality}

% \noindent{\bf Quality of the paraphrased passages.}
In order to reliably study the quality of the recursive paraphrases we use in our experiments using DIPPER and LLaMA-2-7B-Chat, we perform human evaluations using MTurk and other automated techniques. 
The AI-text used in this study is generated using a watermarked OPT-13B model. 
Tables~\ref{tab:summary-mturk}~and~\ref{tab:summary-mturk-llama} provide a summary of the study.  We investigate the content preservation and text quality or grammar of the recursive paraphrases with respect to the AI-generated texts (see Tables~\ref{tab:mturk-content}-\ref{tab:mturk-grammar-llama} in Appendix \ref{app:human_study} for more details). 
{\bf In terms of content preservation with DIPPER, \bm{$70\%$} of the paraphrases were rated high quality and \bm{$23\%$} somewhat equivalent. In terms of text quality or grammar, \bm{$89\%$} of the paraphrases were rated high quality.} 
On a Likert scale of 1 to 5, the DIPPER paraphrases that we use received an average rating of $4.14 \pm 0.58$ for text quality or grammar and $4.0 \pm 0.9$ for content preservation. 
\textbf{Similarly, \bm{$83\%$} of the recursive paraphrases we obtain with LLaMA-2-7B-Chat were rated high quality.}
See Appendix \ref{app:human_study} for more details on the human study.

For automated text quality evaluations, we use perplexity measures and a question-answering (QA) benchmark in Table~\ref{tab:quality-eval}.  
We measure the perplexity scores using OPT-13B. 
As shown in the table, the perplexity scores degrade from 5.5 to 8.7 and 10.5, respectively, for DIPPER and LLaMA-2-7B-Chat after 5 rounds of paraphrasing.
We also use a QA benchmark SQuAD-v2 \citep{squadv2} to evaluate the effect of recursive paraphrasing. For this, we use two hundred data points from SQuAD-v2, 
\revision{which have context text length of at least 300 tokens. The length of context text passages we use in the study is $328 \pm 28$ tokens.}
Each data point consists of a context passage, a question, and an answer.
We evaluate a QA model on the SquAD-v2 benchmark to observe that it achieves $97\%$ accuracy in the QA task.
For the QA model, we use the LLaMA-2-13B-Chat model with a carefully written system prompt (see Appendix~\ref{app:systemprompts}).
To evaluate the quality of paraphrases, we paraphrase the context passages recursively and use these to evaluate the QA accuracy with the QA model.
If the QA model can answer the question correctly based on the paraphrased context, then the information is preserved in the paraphrase.
As we observe, the QA performance with recursive paraphrasing is similar to that with the clean context passage.
These results substantiate that AI text detectors can be effectively attacked using recursive paraphrasing with only a slight degradation in text quality.\looseness=-1

\newcontent{We note that the amount of acceptable quality degradation can be application-specific. For example, an adversary might be okay with a higher quality drop when writing a social media post than when writing a fake news article. Our human studies rate our recursive paraphrases to have a score of either 5 or 4 over 70\% of the time. Though this might be acceptable for some adversaries, others might not tolerate a score of 4 for their applications. Since a score of 4 denotes minor degradation, we presume that the adversaries could manually edit them for their attacks. Nevertheless, our paraphrases get a perfect score 35\% of the time, indicating that it is still practical for adversaries to use it to perform their attacks successfully. However, we believe this tradeoff in text quality degradation and detection evasion would diminish as paraphrasers improve in the future.}

%Along with quantifying text quality with perplexity measured using OPT-13B, we also perform a human study to ensure that our recursive paraphrasing method only degrades the text quality slightly.

% \SF{this should be a separate sub-section; explain what this is and how the recursion works; beef this up; perhaps add a small flow chart for it?}

\begin{figure}[t]
\centering
\begin{subfigure}{.495\textwidth}
  \centering
  \adjustimage{width=1\linewidth, left}{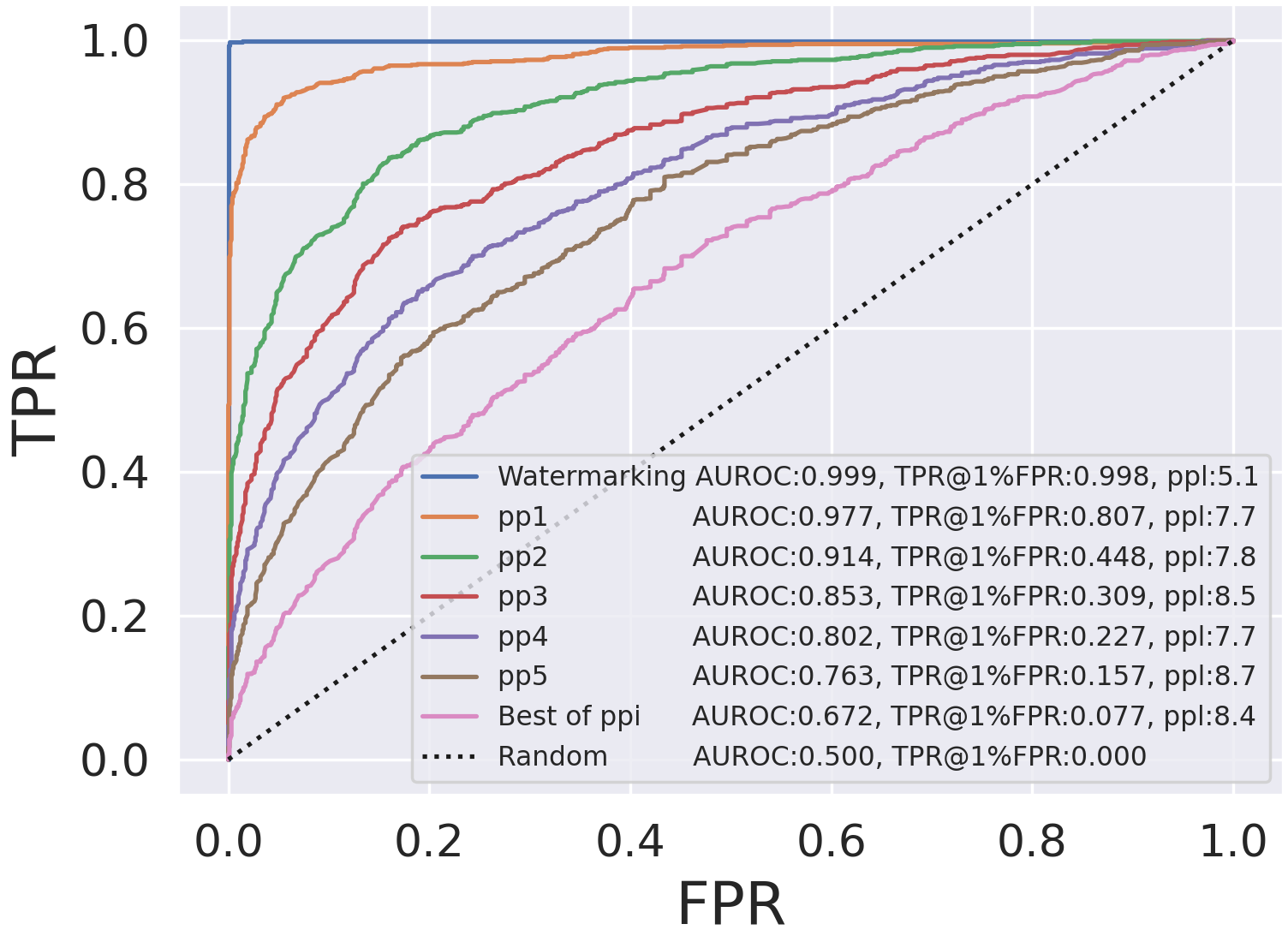}
  % \vspace{-2mm}
  \caption{Recursive paraphrasing with DIPPER}
  \label{fig:watermark-opt13-dipper}
\end{subfigure}% 
\hfill
\begin{subfigure}{.495\textwidth}
  \centering
  \adjustimage{width=1\linewidth, right}{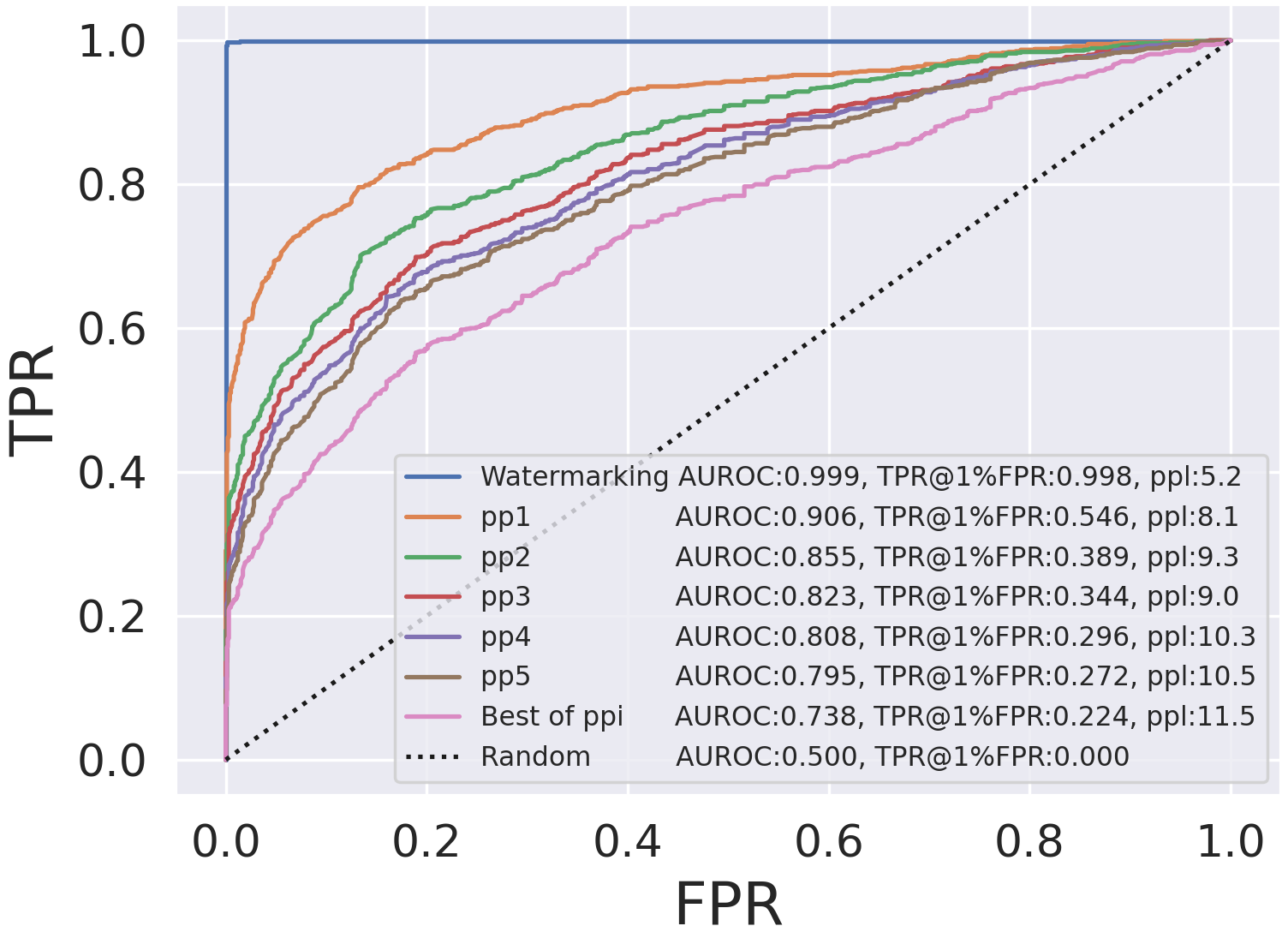}
  % \vspace{-2mm}
  \caption{Recursive paraphrasing with LLaMA-2-7B-Chat}
  \label{fig:watermark-opt13-llama}
\end{subfigure}
% \vspace{-2mm}
\caption{ROC plots for soft watermarking with recursive paraphrasing attacks. AUROC, TPR@1$\%$FPR, and perplexity scores measured using OPT-13B are given in the legend. The target LLM OPT-13B is used to generate watermarked output that are 300 tokens in length.}
\label{fig:watermarkroc-op13b}
\vspace{-0mm}
\end{figure}
\begin{figure}[t]
\centering
\begin{subfigure}{.495\textwidth}
  \centering
  \adjustimage{width=1\linewidth, left}{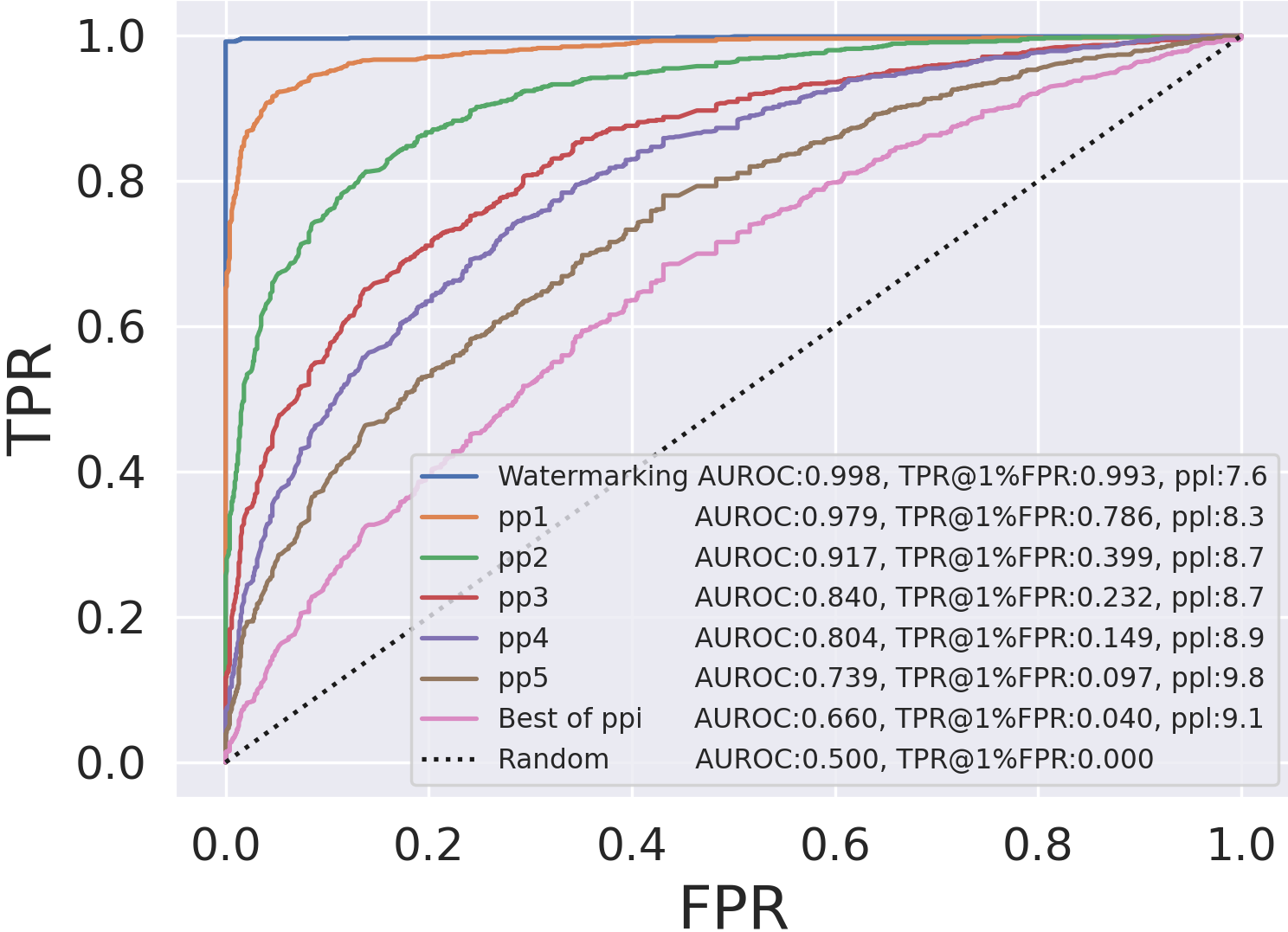}
  % \vspace{-2mm}
  \caption{Watermarked text with mean token length 300}
  \label{fig:watermarkroc1}
\end{subfigure}% 
\hfill
\begin{subfigure}{.495\textwidth}
  \centering
  \adjustimage{width=1\linewidth, right}{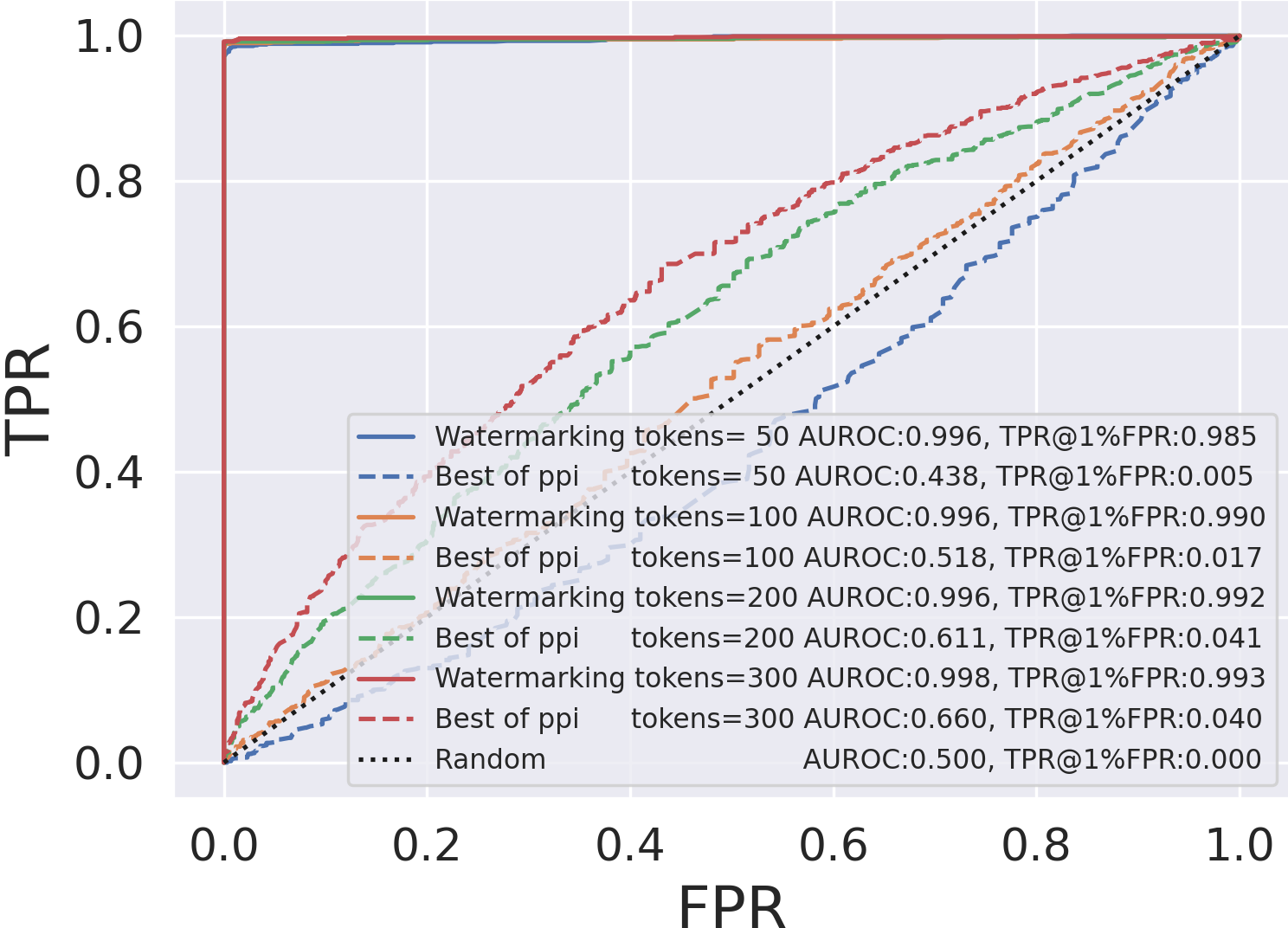}
  % \vspace{-2mm}
  \caption{Watermarked text with varying token lengths}
  \label{fig:watermarkroc2}
\end{subfigure}
% \vspace{-2mm}
\caption{ROC plots for soft watermarking with recursive paraphrasing attacks. AUROC, TPR@1$\%$FPR, and perplexity scores measured using OPT-13B are given in the legend. The target LLM is OPT-1.3B. (a) Even for 300 tokens long watermarked passages, recursive paraphrasing is effective. As paraphrasing rounds proceed, detection rates degrade significantly with a slight trade-off in text quality. (b) Attacking watermarked passages become easier as their length reduces.}
\label{fig:watermarkroc}
\vspace{-0mm}
\end{figure}

\subsection{Paraphrasing Attacks on Watermarked AI Text}
\label{sec:watermark_evade}

In this section, we evaluate our recursive paraphrasing attacks on the soft watermarking scheme proposed in \cite{kirchenbauer2023watermark}.  Soft watermarking encourages LLMs to output token $s^{(t)}$ at time-step $t$ that belongs to a ``green list''. The green list for $s^{(t)}$ is created using a private pseudo-random generator that is seeded with the prior token $s^{(t-1)}$. A watermarked output from the LLM is designed to have tokens that are majorly selected from the green list. Hence, a watermark detector with the pseudo-random generator checks the number of \textit{green} tokens in a candidate passage to detect whether it is watermarked or not. 
Here, we target watermarked OPT-13B with 13B parameters in Figure~\ref{fig:watermarkroc-op13b} and watermarked OPT-1.3B in Figure~\ref{fig:watermarkroc} for our experiments. In Appendix~\ref{app:moreexps_wm}, we also evaluate our attacks on GPT-2 Medium \citep{gpt2} and other datasets -- PubMedQA \citep{jin-etal-2019-pubmedqa} and Kafkai \citep{kafkai}. 

\noindent{\bf Dataset.} We perform our experiments on 2000 text passages that are around 300 tokens in length \textcolor{black}{(1000 passages per human and AI text classes)}. We pick 1000 long news articles from the XSum ``document'' feature. For each article, the first $\sim$300 tokens are input to the target OPT-1.3B \textcolor{black}{to generate 1000 watermarked AI text passages that are each $\sim$300 tokens in length. The second 300 tokens from the 1000 news articles in the dataset are treated as baseline human text.} We note that our considered dataset has more and longer passages compared to the experiments in \cite{kirchenbauer2023watermark}.

\begin{figure}[t]
    \centering
    \includegraphics[width=1\textwidth]{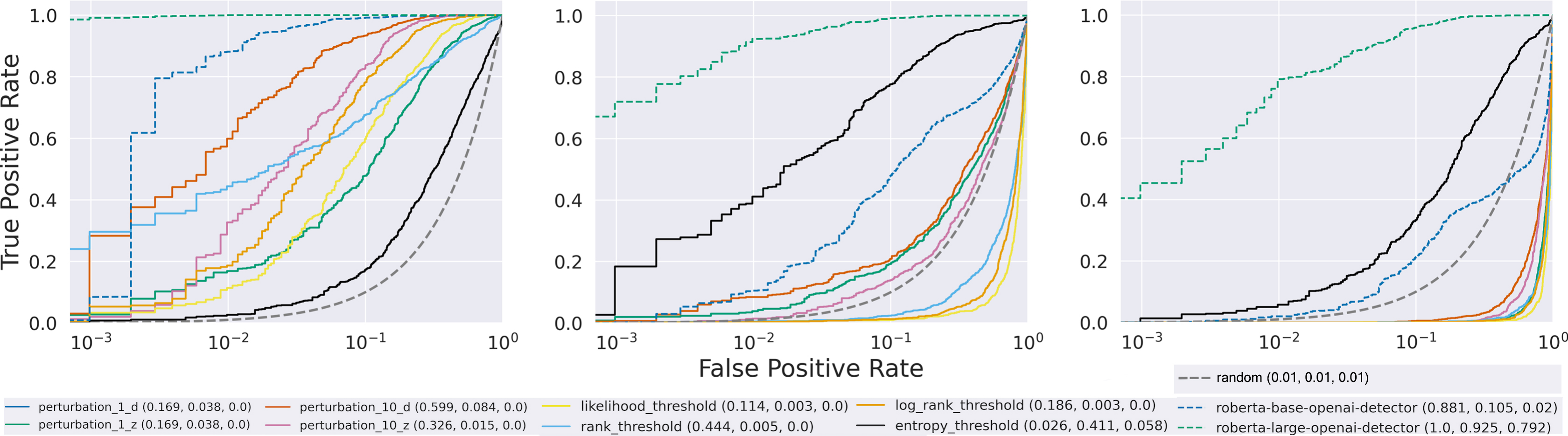}
    \vspace{-2mm}
   \caption{ROC curves for various trained and zero-shot detectors. \textbf{Left:} Without attack. \textbf{Middle:} After paraphrasing attack using T5-based paraphraser. The performance of zero-shot detectors drops significantly. \textbf{Right:} Here, we assume we can query the detector ten times for the paraphrasing attack. We generate ten paraphrasings for each passage and query multiple times to evade detection. Notice how all detectors have low TPR@$1\%$FPR. In the plot legend -- \texttt{perturbation} refers to the zero-shot methods in \cite{mitchell2023detectgpt}; \texttt{threshold} refers to the zero-shot methods in \cite{solaiman2019release, gehrmann2019gltr, ippolito2019automatic}; \texttt{roberta} refers to OpenAI's trained detectors \citep{openaidetectgpt2}. The TPR@$1\%$FPR scores of different detectors before the attack, after the attack, and after the attack with multiple queries, respectively, are provided in the plot legend.}
    \label{fig:roc_nonwatermark}
    \vspace{-5mm}
\end{figure}

\noindent{\bf Detection results after paraphrasing attack.}
Weaker paraphrasing attacks discussed in \cite{kirchenbauer2023watermark} are not effective in removing watermarks. They perform ``span replacement'' by replacing random tokens (in-place) using a language model. 
However, after a single round of paraphrasing (\texttt{pp1}) with a watermarked OPT-13B as the target LLM, TPR@$1\%$FPR of watermark detector degrades from $99.8\%$ to $80.7\%$ and $54.6\%$, respectively, with DIPPER and LLaMA-2-7B-Chat paraphrasers. 
% We also show that our stronger recursive paraphrasing attacks can effectively evade watermark detectors with only a slight degradation in text quality. 
\newcontent{Though watermarking is a worthwhile endeavor to prevent AI plagiarism, with our stress test, we show that an adversary can find their way to evade detection via paraphrasing.}
As shown in Figures~\ref{fig:watermarkroc-op13b}-\ref{fig:watermarkroc}, the recursive paraphrase attack further degrades the detection rate of the detector to below $20\%$ after 5 rounds of paraphrasing (\texttt{pp5}). 
Note that in all the settings \texttt{pp2} or 2 rounds of paraphrasing is sufficient to degrade TPR@$1\%$FPR to below $50\%$. 
As shown in Figure~\ref{fig:watermarkroc-op13b}, DIPPER shows a clearer and more consistent trend in improving attack performance over recursions of paraphrasing in comparison to LLaMA-2. This is because DIPPER is trained explicitly to be a paraphraser with hyperparameters that can control the quality of paraphrasing. Therefore, we mainly employ DIPPER for our recursive paraphrase attacks.
\texttt{Best of ppi} in the figure refers to the method where, for each passage, we select the paraphrase out of all the \texttt{ppi}'s that has the worst detector score. For \texttt{Best of ppi} with OPT-1.3B,  the detection rate reduces drastically from $99.8\%$ to $4.0\%$ with a trade-off of $1.5$ in the perplexity score (Figure~\ref{fig:watermarkroc1}). 
\texttt{Best of ppi}, unlike the \texttt{ppi} attacks, assume black box query access to the detector.
Figure \ref{fig:watermarkroc2} shows that the watermarking detector becomes weaker as the length of the watermarked text reduces. Note that for watermarked texts that are 50 or 100 tokens long, the detection performance after the recursive paraphrasing attack is similar to that of a random detector. 
\newcontent{As the plot indicates, watermarking could be more reliable for preventing AI plagiarism in tasks that require longer texts. However, this does not guarantee that watermarking will be a foolproof defense in such settings. This requires more investigation, and we leave this for future work.}
We provide examples of paraphrased text that we use for our attacks in Appendix \ref{app:example-paraphrases}.\looseness=-1

\subsection{Paraphrasing Attacks on Non-Watermarked AI Text}
\label{sec:paraphrase_nonwatermark}

% Non-watermarking detectors such as trained classifiers \citep{openaidetectgpt2}, \textcolor{red}{retrieval-based detectors \citep{krishna2023paraphrasing}, and} zero-shot classifiers \citep{mitchell2023detectgpt, gehrmann2019gltr, ippolito2019automatic, solaiman2019release} use the presence of LLM-specific signatures in AI-generated texts for their detection. 

Neural network-based trained detectors such as RoBERTa-Large-Detector from OpenAI \citep{openaidetectgpt2} are trained or fine-tuned for binary classification with datasets containing human and AI-generated texts. Zero-shot classifiers leverage specific statistical properties of the source LLM outputs for their detection. Retrieval-based methods search for a candidate passage in a database that stores the LLM outputs. Here, we perform experiments on these non-watermarking detectors to show they are vulnerable to our paraphrasing attack.\looseness=-1

\textbf{Trained and Zero-shot detectors.} We use a pre-trained GPT-2 Medium
% \footnote{\url{https://huggingface.co/gpt2-medium}}
model \citep{gpt2} with 355M parameters as the target LLM to evaluate our attack on \textcolor{black}{1000} long passages from the XSum dataset \citep{xsum}. We use the T5-based paraphrasing model \citep{prithivida2021parrot} with 222M parameters to rephrase the \textcolor{black}{1000} output texts generated using the target GPT-2 Medium model. 
\begin{wrapfigure}{r}{0.5\textwidth}
        \centering
        \vspace{-4mm}
        \includegraphics[width=1.03\linewidth]{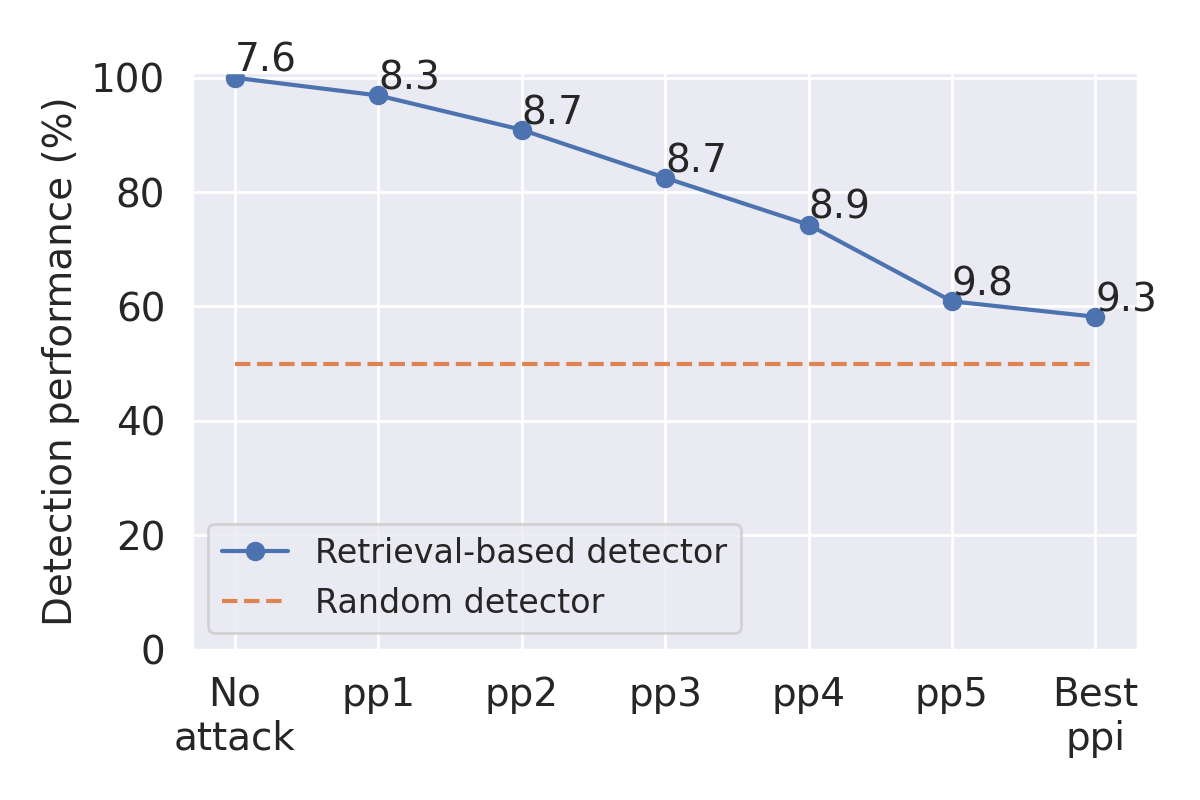}
        \caption{\textcolor{black}{Recursive paraphrasing breaks the retrieval-based detector \citep{krishna2023paraphrasing} with only slight degradation in text quality. \texttt{ppi} refers to $\texttt{i}$ recursion(s) of paraphrasing. Numbers next to markers denote the perplexity scores of the paraphraser output.}}
        \vspace{-6mm}
    \label{fig:ir-attack}
\end{wrapfigure}Figure~\ref{fig:roc_nonwatermark} shows the effectiveness of the paraphrasing attack over these detectors. {\bf The AUROC scores of DetectGPT \citep{mitchell2023detectgpt} drop from \bm{$96.5\%$} (before the attack) to \bm{$59.8\%$} (after the attack).} Note that AUROC of $50\%$ corresponds to a random detector. The rest of the zero-shot detectors \citep{solaiman2019release, gehrmann2019gltr, ippolito2019automatic} also perform poorly after our attack. Though the performance of the trained neural network-based detectors \citep{openaidetectgpt2} is better than that of zero-shot detectors, they are also not reliable. For example, TPR@$1\%$FPR of OpenAI's RoBERTa-Large-Detector drops from $100\%$ to around $92\%$ after our attack.

In another setting, we assume the attacker may have multiple access to the detector. That is, the attacker can query the detector with an input AI text passage, and the detector would reveal the detection score to the attacker. For this scenario, we generate ten different paraphrases for an input passage and query the detector for the detection scores. For each AI text passage, we then select the paraphrase with the worst detection score for evaluating the ROC curves. As shown in Figure~\ref{fig:roc_nonwatermark}, {\bf with multiple queries to the detector, an adversary can paraphrase more efficiently to bring down TPR@\bm{$1\%$}FPR of the RoBERTa-Large-Detector from \bm{$100\%$} to \bm{$80\%$}.} In Appendix~\ref{app:moreexps_zs}, we show more experiments with more datasets and target LLMs.

\revision{
As seen in the results, the detection of the entropy threshold detector improves with paraphrasing.
LLMs are trained on human-written texts, and for this reason, they might have low entropy scores on human-written samples we use in our experiments due to memorization. 
Therefore, the entropy detector might have poor detection scores before the paraphrasing attack. However, after paraphrasing with a different AI model, the entropy scores for these human-written samples might increase, improving the detection scores. Despite this, the entropy threshold detector has poor detection rates before and after the attack.
}

\revision{
Though the performance of performance of trained detectors degrades after each round of paraphrasing, they seem to be more robust to paraphrase attacks than the other detectors we study. We hypothesize that this might be due to these detectors being trained on human-written samples we use for our study. 
For example, the MAGE dataset \citep{li2024magemachinegeneratedtextdetection} includes passages from the XSum dataset we use. \cite{gameiro2024llmdetectorsfallshort} argues that while trained detectors can generalize better to unseen LLMs, they may overfit to this training distribution of human text. They also show that some of these detectors fail to generalize to out-of-distribution human-written text. This is an aspect that we do not consider in our work, but would still make these detectors unreliable for real-world applications.
}

% Table \ref{tab:paraphrase_2} shows an example of outputs from the GPT-2 model before and after paraphrasing. 
% The output of the paraphraser reads well and means the same as the detected GPT-2 text (example in appendix). 
% We measure the perplexities of the GPT-2 output text before attack, after paraphrase attack, and after multiple query paraphrase attack to be 16.3, 27.2, and 18.3, respectively.
% (Figure~\ref{fig:roc_ai1}). GPT-2 is a relatively old LLM, and it performs poorly when compared to more recent LLMs. The perplexity of the GPT-2 text after paraphrasing is 27.2 (Figure~\ref{fig:roc_ai2}). The perplexity score only degrades by 2 with multiple queries to the detector (Figure~\ref{fig:roc_ai3}).

\textbf{Retrieval-based detectors.} Detector in \citet{krishna2023paraphrasing} is designed to be robust against paraphrase attacks. However, we show that they can suffer from the recursive paraphrase attacks that we develop using DIPPER. 
% We use their codes\footnote{\url{https://github.com/martiansideofthemoon/ai-detection-paraphrases}}, and DIPPER \cite{krishna2023paraphrasing}, an 11B parameter paraphraser for our experiments. 
\textcolor{black}{We use 2000 passages (1000 generated by OPT-1.3B and 1000 human passages) from the XSum dataset. AI outputs are stored in the AI database by the detector. As shown in Figure \ref{fig:ir-attack}, this detector detects almost all of the AI outputs even after a round of paraphrasing. However, \textbf{the detection accuracy drops below \bm{$\sim60\%$} after five rounds of recursive paraphrasing.} As marked in the plot, the perplexity score of the paraphrased text only degrades by $1.7$ at a detection accuracy of $\sim60\%$.} 
% Using a heavy-duty paraphraser DIPPER helps preserve the perplexity scores. 
Moreover, retrieval-based detectors are concerning since they might lead to \textbf{serious privacy issues} from storing users' LLM conversations. In Appendix~\ref{app:moreexps_retrieval}, we show more experiments with more datasets and target LLMs.
\section{Spoofing Attacks on Generative AI-text Models}
\label{sec:humantextdetected}

An AI language detector without a low type-I error can cause harm as it might wrongly accuse a human of plagiarizing using an LLM. Moreover, an attacker ({\it adversarial human}) can generate a non-AI text to be detected as AI-generated. This is called the {\it spoofing attack}. An adversary can potentially launch spoofing attacks to produce derogatory texts to damage the reputation of the target LLM's developers. In this section, as a proof-of-concept, we show that current text detectors can be spoofed to detect texts composed by adversarial humans as AI-generated. More details on the spoofing experiments are presented in Appendix \ref{app:spoofing}.

\textbf{Soft watermarking.} As discussed in \S \ref{sec:aigentextnotdetected}, soft watermarked LLMs \citep{kirchenbauer2023watermark} generate tokens from the ``green list'' that are determined by a pseudo-random generator seeded by the prefix token. Though the pseudo-random generator is private, an attacker can estimate the green lists by observing multiple token pairs in the watermarked texts from the target LLM. An adversarial human can then leverage the estimated green lists to compose texts by themselves that are detected to be watermarked. In our experiments, we estimate the green lists for 181 most commonly used words in the English vocabulary. We query the target watermarked OPT-1.3B model one million times to observe the token pair distributions within this smaller vocabulary subset we select. 
Note that this attack on a watermarked model only needs to be performed once to learn the watermarking pattern or the proxy green list to spoof it thereafter. 
\begin{wrapfigure}{r}{0.4\textwidth}
    \centering
    \vspace{-6mm}
    % \hspace{5mm}
   \includegraphics[width=1\linewidth,]{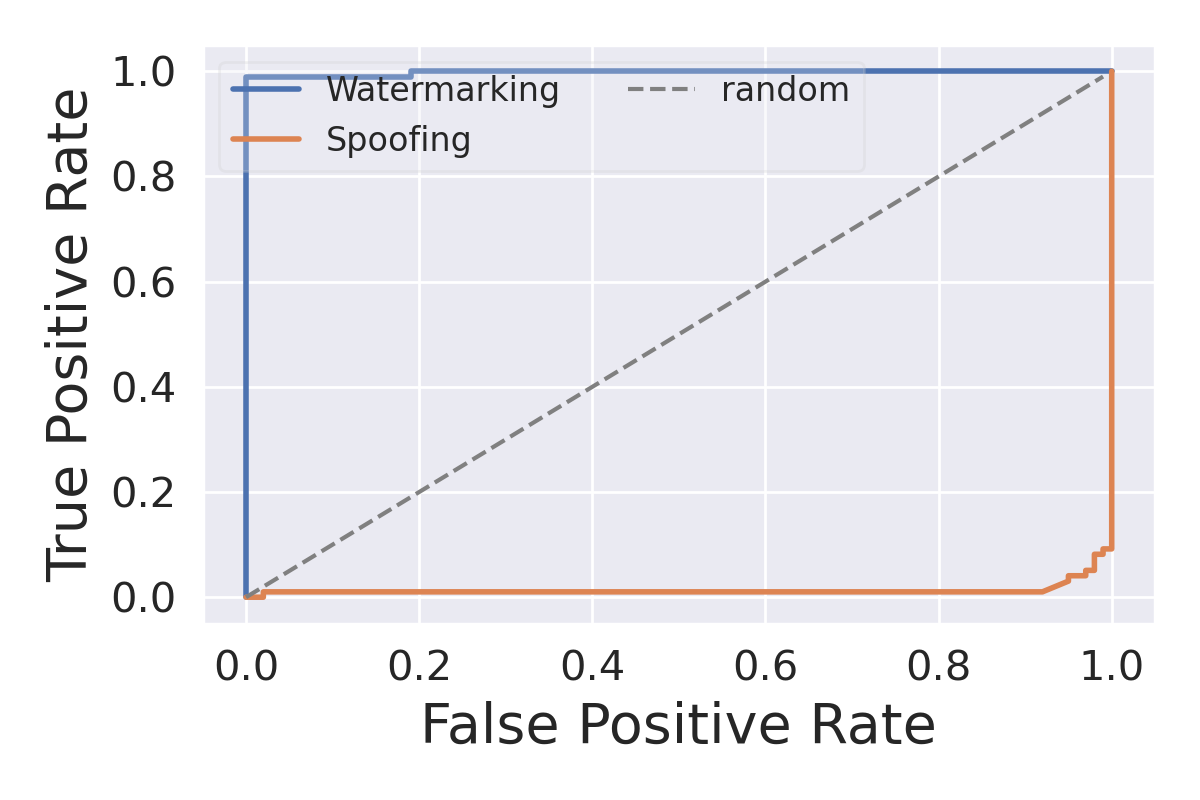}
   \vspace{-8mm}
    \caption{ROC curve of a soft watermarking-based detector \citep{kirchenbauer2023watermark} after our spoofing attack.}
    \label{fig:watermark-roc-spoof}
    \vspace{-4mm}
\end{wrapfigure}
Based on the frequency of tokens that follow a prefix token in the observed generative outputs, we estimate green lists for each of the 181 common words. We build a tool that helps adversarial humans create watermarked
sentences by providing them with the proxy green list that we learn with only access to a watermarked text corpora obtained from the target watermarked LLM.
We observe that the \textbf{soft watermarking scheme can be spoofed to degrade its detection AUROC from \bm{$99.8\%$} to \bm{$1.3\%$}} (see Figure~\ref{fig:watermark-roc-spoof}).

\textbf{Retrieval-based detectors.} \cite{krishna2023paraphrasing} use a database to store LLM outputs to detect AI-text by retrieval. We find in our experiments (see Figure~\ref{fig:irapp}) that an \textbf{adversary can spoof this detector \bm{$100\%$} of the time, even if the detector maintains a private database}. Suppose an adversary, say a teacher, has access to a human written document $S$, say a student's essay. The adversary can prompt the target LLM to paraphrase $S$ to get $S'$. This results in the LLM, by design, storing its output $S'$ in its private database for detection purposes. Now, the detector would classify the original human text $S$ as AI-generated since a semantically similar copy $S'$ is present in its database. In this manner, a teacher can purposefully allege an innocent student to have plagiarised using the retrieval-based detector. 
Note that manipulating retrieval-based detectors is easier using this approach compared to watermarking techniques. 
This observation implies a practical tradeoff between type-I and type-II errors. 
When a detector is strengthened against type-II errors, it tends to result in a deterioration of its performance in terms of type-I errors.

\textbf{Zero-shot and neural network-based detectors.} In this setting, a malicious adversary could write a short text in a collaborative work, which may lead to the entire text being classified as AI-generated. To simulate this, we prepend a human-written
text marked as AI-generated by the detector to all the other human-generated text for spoofing. In other words, from 200 long passages in the XSum dataset, we pick the human text with the worst detection score for each detector considered in \S \ref{sec:paraphrase_nonwatermark}. We then prepend this text to all the other human
texts, ensuring that the length of the prepended text does not exceed the length of the original text. Our experiments show that the \textbf{AUROC of all these detectors drops after spoofing} (see plots in Appendix \ref{app:spoofing}). After this
na\"ive spoofing attack, the TPR@$1\%$FPR of most of these detectors drop significantly.
% \begin{figure}[t]
%     \begin{minipage}{0.55\textwidth}
%     \centering
%         \includegraphics[width=0.85\textwidth]{images/IR_attack.png}
%         \caption{Recursive paraphrasing breaks the retrieval-based detector \citep{krishna2023paraphrasing} with only slight degradation in text quality. \texttt{ppi} refers to $\texttt{i}$ recursion(s) of paraphrasing. Numbers next to markers denote the perplexity scores of the paraphraser output.}
%     \label{fig:ir-attack}
%     \end{minipage}
%     \hfill 
%     \begin{minipage}{0.41\textwidth}
%     \centering
%         \includegraphics[width=\textwidth]{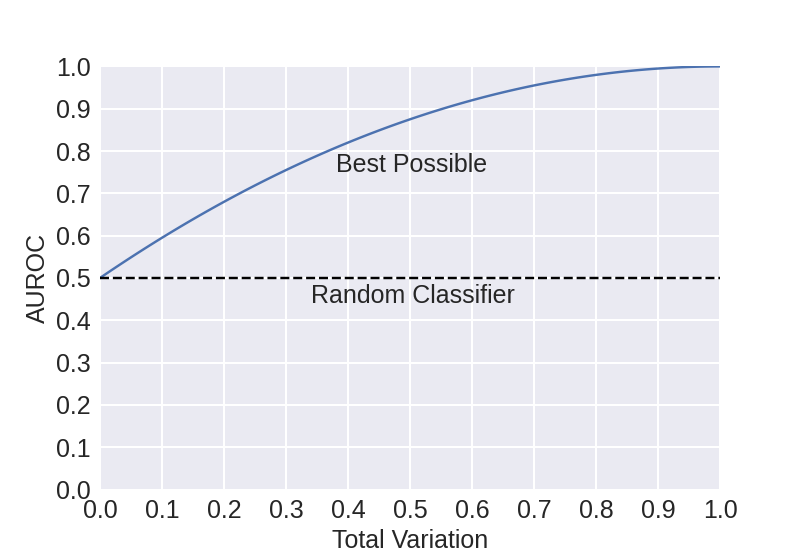}
%         \caption{Comparing the performance, in terms of AUROC, of the best possible detector to that of the baseline performance corresponding to a random classifier.}
%     \label{fig:roc_bound}
%     \end{minipage}
%     \vspace{-3mm}
% \end{figure}

% \section{Impossibility Results for Reliable Detection of AI-Generated Text}
\section{Hardness of Reliable AI Text Detection}
\label{sec:impossibilityresult}

In this section, we formally upper bound the AUROC of an arbitrary detector in terms of the TV between the distributions for $\mathcal{M}$ (e.g., AI text) and $\mathcal{H}$ (e.g., human text) over the set of all possible text sequences $\Omega$. We note that this result holds for any two arbitrary
distributions $\mathcal{H}$ and $\mathcal{M}$. For example, $\mathcal{H}$ could be the text distribution for a person or group, while $\mathcal{M}$ could be the output text distribution of a general LLM or an LLM trained by an adversary to mimic
the text of a particular set of people.

We use $\mathsf{TV}(\mathcal{M}, \mathcal{H})$ to denote the TV between these two distributions and model a detector as a function $D: \Omega \rightarrow \mathbb{R}$ that maps every sequence in $\Omega$ to a real number.
% detection parameter $\gamma$.
Sequences are classified into AI-generated or human-generated by applying a threshold $\gamma$ on this number.
By adjusting the parameter $\gamma$, we can tune the sensitivity of the detector to AI and human-generated texts to obtain an ROC curve.

\begin{theorem}
\label{thm:ROC_bound}
The area under the ROC of any detector $D$ is bounded as
\[\mathsf{AUROC}(D) \leq \frac{1}{2} + \mathsf{TV}(\mathcal{M}, \mathcal{H}) - \frac{\mathsf{TV}(\mathcal{M}, \mathcal{H})^2}{2}.\]
\end{theorem}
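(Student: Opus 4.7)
The plan is to express the AUROC as the area under the ROC curve in the unit square and then show that the total variation constraint pins this curve below a simple envelope whose enclosed area evaluates to the stated bound.

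First I would set up the geometry. For each threshold $\gamma$ define the true and false positive rates $T(\gamma) = \mathcal{M}(\{s \in \Omega : D(s) \geq \gamma\})$ and $F(\gamma) = \mathcal{H}(\{s : D(s) \geq \gamma\})$. The ROC curve is the image of the map $\gamma \mapsto (F(\gamma), T(\gamma))$, traversed as $\gamma$ decreases from $+\infty$ to $-\infty$, and $\mathsf{AUROC}(D)$ is the area under this curve, i.e.\ $\int_0^1 T(u)\, du$ where $T(u)$ is the largest $T(\gamma)$ achievable at $F(\gamma) = u$ (handling ties/jumps in $\gamma$ by taking the upper-left closure so that the curve becomes a nondecreasing function from $[0,1]$ to $[0,1]$ passing through $(0,0)$ and $(1,1)$).

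Next I would invoke the definition of total variation. For every measurable event $A \subseteq \Omega$, $\mathcal{M}(A) - \mathcal{H}(A) \leq \mathsf{TV}(\mathcal{M},\mathcal{H})$. Applying this to $A = \{s : D(s) \geq \gamma\}$ for every $\gamma$ gives $T(\gamma) - F(\gamma) \leq \mathsf{TV}(\mathcal{M},\mathcal{H})$, so every point $(u,v)$ on the ROC curve obeys $v \leq u + \mathsf{TV}(\mathcal{M},\mathcal{H})$. Together with the trivial bounds $0 \leq v \leq 1$, this confines the entire curve to the region $\{(u,v) : 0 \leq u \leq 1,\ 0 \leq v \leq \min(1,\, u + \Delta)\}$, where I write $\Delta = \mathsf{TV}(\mathcal{M},\mathcal{H})$.

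Finally I would upper-bound the area under the ROC curve by the area of this enclosing region and compute:
\[
\mathsf{AUROC}(D) \;\leq\; \int_0^{1-\Delta}(u+\Delta)\,du \;+\; \int_{1-\Delta}^{1} 1\, du \;=\; \frac{(1-\Delta)^2}{2} + \Delta(1-\Delta) + \Delta \;=\; \frac{1}{2} + \Delta - \frac{\Delta^2}{2},
\]
which is exactly the claimed inequality. The main obstacle is largely conceptual/notational rather than computational: making the ROC ``curve'' rigorously well-defined as a function $u \mapsto T(u)$ when $D$ has atoms under $\mathcal{M}$ or $\mathcal{H}$ (ties, jumps, randomized thresholds). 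The cleanest fix is to pass to the upper-left closure of the set of achievable $(F(\gamma), T(\gamma))$ pairs, which corresponds to allowing randomized tie-breaking and only makes the AUROC larger, so the upper bound still holds. After that technicality is dispatched, the proof reduces to the one-line TV estimate and the elementary integral above.
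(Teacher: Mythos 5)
Your proposal is correct and follows essentially the same route as the paper's proof: bound $\mathsf{TPR}_\gamma - \mathsf{FPR}_\gamma$ by $\mathsf{TV}(\mathcal{M},\mathcal{H})$ using the definition of total variation, cap the ROC curve by the envelope $\min(u+\Delta,1)$, and integrate to get $\tfrac{1}{2}+\Delta-\tfrac{\Delta^2}{2}$. Your added care about making the ROC curve well-defined under ties and atoms (via the upper-left closure) is a reasonable technical refinement that the paper's proof glosses over, but it does not change the argument.
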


\begin{wrapfigure}{r}{0.4\textwidth}
    \centering
    \vspace{-4mm}
        \includegraphics[width=1.05\linewidth]{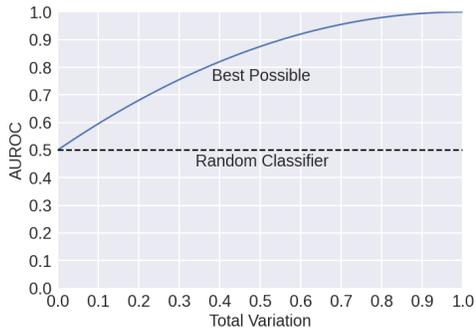}
        \caption{Comparing the performance, in terms of AUROC, of the best possible detector to that of the baseline performance corresponding to a random classifier.}
    \label{fig:roc_bound}
    \vspace{-5mm}
\end{wrapfigure}
The proof is deferred to Appendix~\ref{app:proof_roc_bnd}. Figure~\ref{fig:roc_bound} shows how the above bound grows as a function of the TV distance.
This theorem states that as the TV distance between AI and human text distributions reduces, the AUROC of the best possible detector decreases. Based on our theory, an adversary can use advanced LLMs to mimic human text to  reduce the TV distance between human and AI text distributions to evade text detection systems.

For a detector to have a good performance (say, AUROC $> 0.9$), the distributions of human and AI-generated texts must be very different from each other (TV $> 0.5$ based on the figure). As $\mathcal{M}$ gets more similar to $\mathcal{H}$ (say, TV $< 0.2$), the performance of even the best-possible detector becomes unreliable (AUROC $< 0.7$). For some applications, say AI-text plagiarism, reliable detection should have a low false positive rate (say, $<0.01$) and a  high true positive rate (say, $>0.9$). Based on our theory, this cannot be achieved even when the overlap between the distributions is relatively low, say $11\%$ (or TV $= 0.9-0.01 = 0.89$, {based on equation \ref{eq:TPR_FPR_TV_bound} in Appendix \ref{app:proof_roc_bnd}).

%This indicates that distinguishing the text produced by AI from a human-generated one is a fundamentally difficult task as LLMs get powerful and more human-like.

Note that, for a watermarked model, the above bound can be close to one as the TV between the watermarked distribution and human-generated distribution can be high. Corollary~\ref{corollary:rephrase} in Appendix~\ref{app:corollaries} discusses how paraphrasing attacks can be effective in evading watermarks using Theorem~\ref{thm:ROC_bound}. In Appendix~\ref{app:tightness}, we also present a tightness analysis of the bound in Theorem~\ref{thm:ROC_bound}, where we show that for any distribution $\mathcal{H}$ there exists $\mathcal{M}$ and a detector $D$ for which the bound holds with equality. We also discuss general trade-offs between true positive and false positive rates of detection in Corollaries \ref{corollary:rephrasing_wm} and \ref{corollary:rephrasing_no_wm} in Appendix \ref{app:corollaries}. Theorem~\ref{thm:ROC_bound_computational} in Appendix~\ref{app:PRG_bound}  extends Theorem~\ref{thm:ROC_bound} to bound the AUROC of the best possible detector by a function of the TV distance between LLM outputs generated using pseudorandomness and human text distributions.

\begin{figure}[t]
\begin{minipage}{0.46\textwidth}
    \centering
        \includegraphics[width=0.85\textwidth]{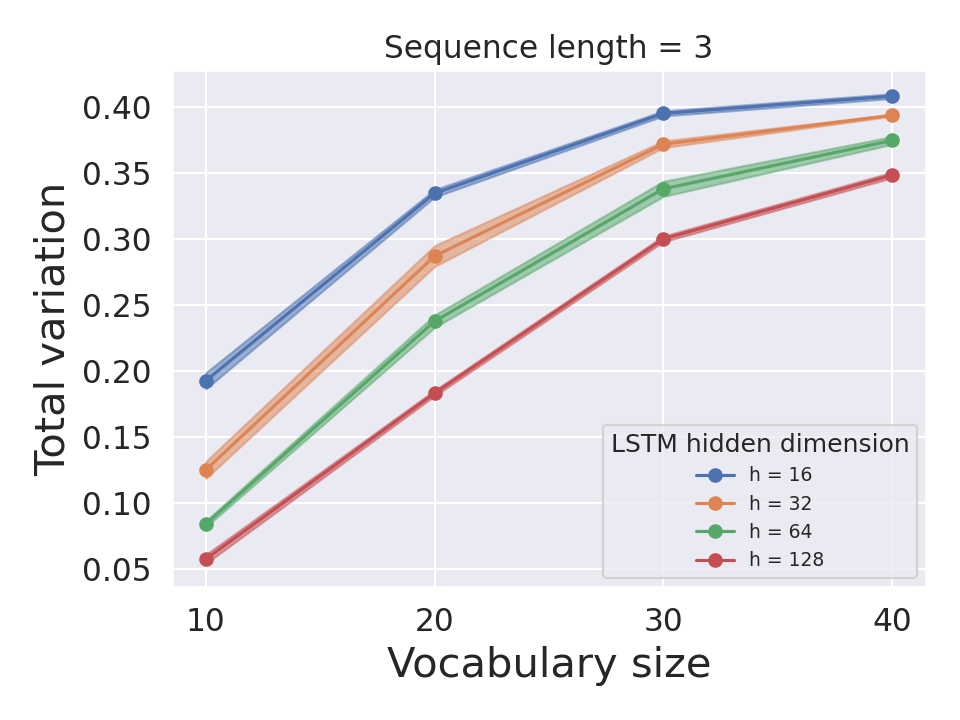}
        \vspace{-3mm}
        \caption{Increasing model size reduces the exact TV between the true synthetic data distribution and the learned distribution. Error bars report standard deviations after 5 independent trials.}
    \label{fig:synthetic_tvd}
    \end{minipage}
    \hfill 
    \begin{minipage}{0.42\textwidth}
    \centering
        \includegraphics[width=1\textwidth]{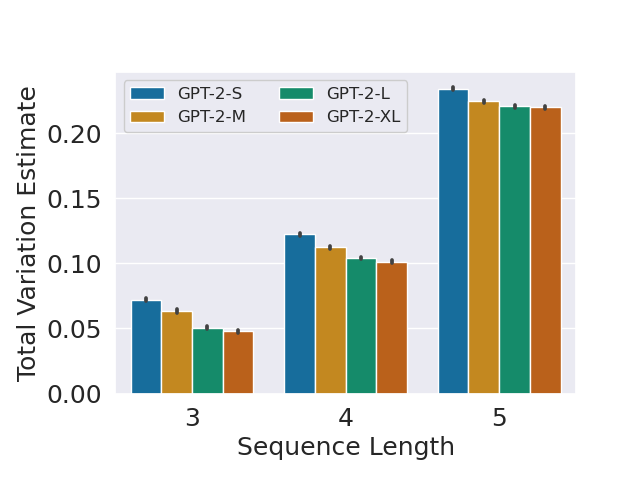}
        \vspace{-8mm}
        \caption{Estimated TV distances of %different
        GPT-2 output datasets from the WebText dataset using meta-token sequences of varying lengths. TV decreases with model size for each length.}
    \label{fig:tvd-gpt-synthetic}
    \end{minipage}
    \vspace{-8mm}
\end{figure}

In studying the hardness of the detection problem, we consider the following assumption that for a given human-text distribution $\mathcal{H}$, more advanced LLMs mimicking $\mathcal{H}$ can lead to smaller TV. Thus, using Theorem \ref{thm:ROC_bound}, the detection problem becomes increasingly more difficult. This is the core argument of our hardness result on AI text detection. Although the underlying assumption seems to be intuitive given the capabilities of LLMs such as GPT-4 \citep{gpt4}, a precise analysis of this assumption is quite difficult because estimating the true TV of the text distributions from a finite set of samples is extremely challenging. Nevertheless, we provide some empirical evidence supporting this assumption using two sets of experiments. In all the experiments, we consistently observe that the TV distance estimates between human and AI text distributions reduce as language models get more advanced, indicating the increasing difficulty associated with AI text detection.\looseness=-1

\textbf{(i) Using synthetic text data.} We perform experiments on a toy synthetic text dataset where the exact TV distance can be calculated. We use the Markov assumption to generate the synthetic text data with sequence length 3 using a randomly generated token transition matrix for varying vocabulary sizes. We use single-layer LSTMs of different hidden unit sizes to train on a dataset of size 20,000 sampled from this synthetic data distribution using a default AdamW optimizer \citep{adamw}. We compute the learned token transition matrix for the LSTM output distribution using the softmax logit values of the trained model. Using transition matrices of both distributions, we compute the exact TV. Figure \ref{fig:synthetic_tvd} shows that the exact TV distances between the learned and true synthetic distributions reduce as the LSTM model size increases.

\textbf{(ii) Using projection.} For discrete distributions, the TV distance can be computed as 1/2 of the sum of the point-wise differences between their probability density functions (PDFs). While this is mathematically simple since texts can be considered as token sequences with bounded length, it is not practical to compute true TV distances directly through estimating PDFs due to the size of the sample space, which is approximately \textit{the size of the token set} to the power of \textit{sequence length}.
%{\bf ***} In this experiment, we divide the original token set into 5 parts and map each part of the tokens to a different meta-token. Since the size of the meta-token set is small, estimating PDFs becomes feasible. We collect the corresponding meta-token sequences of multiple lengths from both the WebText and AI-generated (GPT-2) texts.
%Figure \ref{fig:tvd-gpt-synthetic} shows the estimation of TV distances through PDF estimation for meta-token sequences, where we again observe that the TV estimate decreases with an increase in model size for all sequence lengths.
%\ak{[rewrite suggestion for ***]
To tackle this issue, we split the original token set into five roughly equal partitions and assign a meta-token to each partition.
Given a sequence of tokens from the original set, we construct a new sequence by replacing each token with the corresponding meta-token.
We estimate the PDFs of the sequences of meta-tokens created using texts from the WebText and GPT-2 output datasets.
Since the set of meta-tokens is significantly smaller than the original token set, estimating PDFs becomes much more tractable.
We then use these PDFs to estimate the total variation distances of the output distributions of different GPT-2 models (GPT-2-Small, GPT-2-Medium, GPT-2-Large, and GPT-2-XL) from the WebText dataset.
Figure \ref{fig:tvd-gpt-synthetic} plots these TV estimates for different sequence lengths, averaged over 30 runs of the experiment.
We observe that the TV distance consistently decreases with increasing model size for all sequence lengths.
%}

These experiments provide empirical evidence that more advanced LLMs can lead to smaller TV distances. Thus, based on Theorem~\ref{thm:ROC_bound}, reliable AI text detection would become increasingly difficult.

\section*{Acknowledgments and Disclosure of Funding}

This project was supported in part by NSF CAREER AWARD 1942230, ONR YIP award N00014-22-1-2271, NIST 60NANB20D134, Meta award 23010098, HR001119S0026 (GARD), Army Grant No. W911NF2120076, a capital one grant, and the NSF award CCF2212458 and an Amazon Research Award. Sadasivan is also supported by the Kulkarni Research Fellowship. The authors would like to thank Keivan Rezaei and Mehrdad Saberi for their insights on this work. The authors also acknowledge the use of OpenAI's ChatGPT to improve clarity and readability.

\bibliography{main}

\begin{thebibliography}{59}
\providecommand{\natexlab}[1]{#1}
\providecommand{\url}[1]{\texttt{#1}}
\expandafter\ifx\csname urlstyle\endcsname\relax
  \providecommand{\doi}[1]{doi: #1}\else
  \providecommand{\doi}{doi: \begingroup \urlstyle{rm}\Url}\fi

\bibitem[Aaronson(2022)]{scottaaronson}
Scott Aaronson.
\newblock My ai safety lecture for ut effective altruism.
\newblock November 2022.
\newblock URL \url{https://scottaaronson.blog/?p=6823}.

\bibitem[Adelani et~al.(2020)Adelani, Mai, Fang, Nguyen, Yamagishi, and Echizen]{adelani2020generating}
David~Ifeoluwa Adelani, Haotian Mai, Fuming Fang, Huy~H Nguyen, Junichi Yamagishi, and Isao Echizen.
\newblock Generating sentiment-preserving fake online reviews using neural language models and their human-and machine-based detection.
\newblock In \emph{Advanced Information Networking and Applications: Proceedings of the 34th International Conference on Advanced Information Networking and Applications (AINA-2020)}, pp.\  1341--1354. Springer, 2020.

\bibitem[Al-Sibai(2023)]{news5}
Noor Al-Sibai.
\newblock Ai plagiarism detection software keeps falsely accusing students of cheating.
\newblock \emph{Futurism}, 2023.
\newblock URL \url{https://futurism.com/ai-plagiarism-software-false-accusing-students}.

\bibitem[Atallah et~al.(2001)Atallah, Raskin, Crogan, Hempelmann, Kerschbaum, Mohamed, and Naik]{atallah2001natural}
Mikhail~J Atallah, Victor Raskin, Michael Crogan, Christian Hempelmann, Florian Kerschbaum, Dina Mohamed, and Sanket Naik.
\newblock Natural language watermarking: Design, analysis, and a proof-of-concept implementation.
\newblock In \emph{Information Hiding: 4th International Workshop, IH 2001 Pittsburgh, PA, USA, April 25--27, 2001 Proceedings 4}, pp.\  185--200. Springer, 2001.

\bibitem[Bakhtin et~al.(2019)Bakhtin, Gross, Ott, Deng, Ranzato, and Szlam]{bakhtin2019real}
Anton Bakhtin, Sam Gross, Myle Ott, Yuntian Deng, Marc'Aurelio Ranzato, and Arthur Szlam.
\newblock Real or fake? learning to discriminate machine from human generated text.
\newblock \emph{arXiv preprint arXiv:1906.03351}, 2019.

\bibitem[Blum et~al.(1982)Blum, Blum, and Shub]{bbs_prg}
Lenore Blum, Manuel Blum, and Mike Shub.
\newblock Comparison of two pseudo-random number generators.
\newblock In \emph{Advances in Cryptology: Proceedings of CRYPTO '82}, pp.\  61--78. Plenum, 1982.

\bibitem[Blum \& Micali(1984)Blum and Micali]{blum_micali_prg}
Manuel Blum and Silvio Micali.
\newblock How to generate cryptographically strong sequences of pseudorandom bits.
\newblock \emph{SIAM Journal on Computing}, 13\penalty0 (4):\penalty0 850--864, 1984.
\newblock \doi{10.1137/0213053}.
\newblock URL \url{https://doi.org/10.1137/0213053}.

\bibitem[Brown et~al.(2020)Brown, Mann, Ryder, Subbiah, Kaplan, Dhariwal, Neelakantan, Shyam, Sastry, Askell, et~al.]{gpt3}
Tom Brown, Benjamin Mann, Nick Ryder, Melanie Subbiah, Jared~D Kaplan, Prafulla Dhariwal, Arvind Neelakantan, Pranav Shyam, Girish Sastry, Amanda Askell, et~al.
\newblock Language models are few-shot learners.
\newblock \emph{Advances in neural information processing systems}, 33:\penalty0 1877--1901, 2020.

\bibitem[Chakraborty et~al.(2023)Chakraborty, Bedi, Zhu, An, Manocha, and Huang]{chakraborty2023possibilities}
Souradip Chakraborty, Amrit~Singh Bedi, Sicheng Zhu, Bang An, Dinesh Manocha, and Furong Huang.
\newblock On the possibilities of ai-generated text detection, 2023.

\bibitem[Christian(2023)]{cnet}
Jon Christian.
\newblock Cnet secretly used ai on articles that didn’t disclose that fact, staff say.
\newblock January 2023.
\newblock URL \url{https://futurism.com/cnet-ai-articles-label}.

\bibitem[Damodaran(2021)]{prithivida2021parrot}
Prithiviraj Damodaran.
\newblock Parrot: Paraphrase generation for nlu., 2021.

\bibitem[Das(2023)]{news3}
Mehul~Reuben Das.
\newblock Turn-it-in: Ai fails students for not using ai.
\newblock \emph{Firstpost}, 2023.
\newblock URL \url{https://www.firstpost.com/world/plagiarism-detector-turnitin-keeps-falsely-accusing-students-of-cheating-using-ai-12704662.html}.

\bibitem[Fagni et~al.(2020)Fagni, Falchi, Gambini, Martella, and Tesconi]{fagni2020tweepfake}
T~Fagni, F~Falchi, M~Gambini, A~Martella, and M~Tesconi.
\newblock Tweepfake: About detecting deepfake tweets. arxiv.
\newblock \emph{arXiv preprint arXiv:2008.00036}, 2020.

\bibitem[Fowler(2023)]{news1}
Geoffrey~A. Fowler.
\newblock We tested a new chatgpt-detector for teachers. it flagged an innocent student.
\newblock \emph{The Washington Post}, 2023.
\newblock URL \url{https://www.washingtonpost.com/technology/2023/04/01/chatgpt-cheating-detection-turnitin/}.

\bibitem[Gameiro et~al.(2024)Gameiro, Kucharavy, and Dolamic]{gameiro2024llmdetectorsfallshort}
Henrique Da~Silva Gameiro, Andrei Kucharavy, and Ljiljana Dolamic.
\newblock Llm detectors still fall short of real world: Case of llm-generated short news-like posts, 2024.
\newblock URL \url{https://arxiv.org/abs/2409.03291}.

\bibitem[Gehrmann et~al.(2019)Gehrmann, Strobelt, and Rush]{gehrmann2019gltr}
Sebastian Gehrmann, Hendrik Strobelt, and Alexander~M Rush.
\newblock Gltr: Statistical detection and visualization of generated text.
\newblock \emph{arXiv preprint arXiv:1906.04043}, 2019.

\bibitem[Goodfellow et~al.(2014)Goodfellow, Shlens, and Szegedy]{goodfellow2014explaining}
Ian~J Goodfellow, Jonathon Shlens, and Christian Szegedy.
\newblock Explaining and harnessing adversarial examples.
\newblock \emph{arXiv preprint arXiv:1412.6572}, 2014.

\bibitem[Gu et~al.(2024)Gu, Li, Liang, and Hashimoto]{gu2024learnabilitywatermarkslanguagemodels}
Chenchen Gu, Xiang~Lisa Li, Percy Liang, and Tatsunori Hashimoto.
\newblock On the learnability of watermarks for language models, 2024.
\newblock URL \url{https://arxiv.org/abs/2312.04469}.

\bibitem[Hill(2022)]{news2}
Kashmir Hill.
\newblock Accused of cheating by an algorithm, and a professor she had never met.
\newblock \emph{The New York Times}, 2022.
\newblock URL \url{https://www.nytimes.com/2022/05/27/technology/college-students-cheating-software-honorlock.html}.

\bibitem[Ippolito et~al.(2019)Ippolito, Duckworth, Callison-Burch, and Eck]{ippolito2019automatic}
Daphne Ippolito, Daniel Duckworth, Chris Callison-Burch, and Douglas Eck.
\newblock Automatic detection of generated text is easiest when humans are fooled.
\newblock \emph{arXiv preprint arXiv:1911.00650}, 2019.

\bibitem[Jawahar et~al.(2020)Jawahar, Abdul-Mageed, and Lakshmanan]{jawahar2020automatic}
Ganesh Jawahar, Muhammad Abdul-Mageed, and Laks~VS Lakshmanan.
\newblock Automatic detection of machine generated text: A critical survey.
\newblock \emph{arXiv preprint arXiv:2011.01314}, 2020.

\bibitem[Jin et~al.(2019)Jin, Dhingra, Liu, Cohen, and Lu]{jin-etal-2019-pubmedqa}
Qiao Jin, Bhuwan Dhingra, Zhengping Liu, William Cohen, and Xinghua Lu.
\newblock {P}ub{M}ed{QA}: A dataset for biomedical research question answering.
\newblock In Kentaro Inui, Jing Jiang, Vincent Ng, and Xiaojun Wan (eds.), \emph{Proceedings of the 2019 Conference on Empirical Methods in Natural Language Processing and the 9th International Joint Conference on Natural Language Processing (EMNLP-IJCNLP)}, pp.\  2567--2577, Hong Kong, China, November 2019. Association for Computational Linguistics.
\newblock \doi{10.18653/v1/D19-1259}.
\newblock URL \url{https://aclanthology.org/D19-1259}.

\bibitem[Kafkai(2020)]{kafkai}
Kafkai.
\newblock “kafkai: Ai writer \& ai content generator”.
\newblock 2020.
\newblock URL \url{https://kafkai.com/}.

\bibitem[Khajavi \& Kuh(2016)Khajavi and Kuh]{KhajaviK16}
Navid~Tafaghodi Khajavi and Anthony Kuh.
\newblock The quality of the covariance selection through detection problem and {AUC} bounds.
\newblock \emph{CoRR}, abs/1605.05776, 2016.
\newblock URL \url{http://arxiv.org/abs/1605.05776}.

\bibitem[Kirchenbauer et~al.(2023{\natexlab{a}})Kirchenbauer, Geiping, Wen, Katz, Miers, and Goldstein]{kirchenbauer2023watermark}
John Kirchenbauer, Jonas Geiping, Yuxin Wen, Jonathan Katz, Ian Miers, and Tom Goldstein.
\newblock A watermark for large language models.
\newblock \emph{arXiv preprint arXiv:2301.10226}, 2023{\natexlab{a}}.

\bibitem[Kirchenbauer et~al.(2023{\natexlab{b}})Kirchenbauer, Geiping, Wen, Shu, Saifullah, Kong, Fernando, Saha, Goldblum, and Goldstein]{kirchenbauer2023reliability}
John Kirchenbauer, Jonas Geiping, Yuxin Wen, Manli Shu, Khalid Saifullah, Kezhi Kong, Kasun Fernando, Aniruddha Saha, Micah Goldblum, and Tom Goldstein.
\newblock On the reliability of watermarks for large language models, 2023{\natexlab{b}}.

\bibitem[Krishna et~al.(2023)Krishna, Song, Karpinska, Wieting, and Iyyer]{krishna2023paraphrasing}
Kalpesh Krishna, Yixiao Song, Marzena Karpinska, John Wieting, and Mohit Iyyer.
\newblock Paraphrasing evades detectors of ai-generated text, but retrieval is an effective defense, 2023.

\bibitem[Kuditipudi et~al.(2024)Kuditipudi, Thickstun, Hashimoto, and Liang]{kuditipudi2024robustdistortionfreewatermarkslanguage}
Rohith Kuditipudi, John Thickstun, Tatsunori Hashimoto, and Percy Liang.
\newblock Robust distortion-free watermarks for language models, 2024.
\newblock URL \url{https://arxiv.org/abs/2307.15593}.

\bibitem[Kumar et~al.(2022)Kumar, Levine, Goldstein, and Feizi]{kumar2022certifying}
Aounon Kumar, Alexander Levine, Tom Goldstein, and Soheil Feizi.
\newblock Certifying model accuracy under distribution shifts.
\newblock \emph{arXiv preprint arXiv:2201.12440}, 2022.

\bibitem[Li et~al.(2024)Li, Li, Cui, Bi, Wang, Wang, Yang, Shi, and Zhang]{li2024magemachinegeneratedtextdetection}
Yafu Li, Qintong Li, Leyang Cui, Wei Bi, Zhilin Wang, Longyue Wang, Linyi Yang, Shuming Shi, and Yue Zhang.
\newblock Mage: Machine-generated text detection in the wild, 2024.
\newblock URL \url{https://arxiv.org/abs/2305.13242}.

\bibitem[Liang et~al.(2023)Liang, Yuksekgonul, Mao, Wu, and Zou]{liang2023gpt}
Weixin Liang, Mert Yuksekgonul, Yining Mao, Eric Wu, and James Zou.
\newblock Gpt detectors are biased against non-native english writers, 2023.

\bibitem[Liu et~al.(2019)Liu, Ott, Goyal, Du, Joshi, Chen, Levy, Lewis, Zettlemoyer, and Stoyanov]{liu2019roberta}
Yinhan Liu, Myle Ott, Naman Goyal, Jingfei Du, Mandar Joshi, Danqi Chen, Omer Levy, Mike Lewis, Luke Zettlemoyer, and Veselin Stoyanov.
\newblock Roberta: A robustly optimized bert pretraining approach.
\newblock \emph{arXiv preprint arXiv:1907.11692}, 2019.

\bibitem[Loshchilov \& Hutter(2017)Loshchilov and Hutter]{adamw}
Ilya Loshchilov and Frank Hutter.
\newblock Decoupled weight decay regularization.
\newblock \emph{arXiv preprint arXiv:1711.05101}, 2017.

\bibitem[Lu et~al.(2023)Lu, Liu, He, Wang, Ong, and Tang]{sico}
Ning Lu, Shengcai Liu, Rui He, Qi~Wang, Yew-Soon Ong, and Ke~Tang.
\newblock Large language models can be guided to evade ai-generated text detection.
\newblock \emph{arXiv preprint arXiv:2305.10847}, 2023.

\bibitem[Mitchell et~al.(2023)Mitchell, Lee, Khazatsky, Manning, and Finn]{mitchell2023detectgpt}
Eric Mitchell, Yoonho Lee, Alexander Khazatsky, Christopher~D Manning, and Chelsea Finn.
\newblock Detectgpt: Zero-shot machine-generated text detection using probability curvature.
\newblock \emph{arXiv preprint arXiv:2301.11305}, 2023.

\bibitem[Narayan et~al.(2018)Narayan, Cohen, and Lapata]{xsum}
Shashi Narayan, Shay~B. Cohen, and Mirella Lapata.
\newblock Don't give me the details, just the summary! topic-aware convolutional neural networks for extreme summarization.
\newblock \emph{ArXiv}, abs/1808.08745, 2018.

\bibitem[OpenAI(2019)]{openaidetectgpt2}
OpenAI.
\newblock Gpt-2: 1.5b release.
\newblock November 2019.
\newblock URL \url{https://openai.com/research/gpt-2-1-5b-release}.

\bibitem[OpenAI(2022)]{chatgpt}
OpenAI.
\newblock Chatgpt: Optimizing language models for dialogue.
\newblock November 2022.
\newblock URL \url{https://openai.com/blog/chatgpt/}.

\bibitem[OpenAI(2023)]{gpt4}
OpenAI.
\newblock Gpt-4 technical report.
\newblock March 2023.
\newblock URL \url{https://cdn.openai.com/papers/gpt-4.pdf}.

\bibitem[Pu et~al.(2023)Pu, Sarwar, Abdullah, Rehman, Kim, Bhattacharya, Javed, and Viswanath]{deepfaketext}
J.~Pu, Z.~Sarwar, S.~Abdullah, A.~Rehman, Y.~Kim, P.~Bhattacharya, M.~Javed, and B.~Viswanath.
\newblock Deepfake text detection: Limitations and opportunities.
\newblock In \emph{2023 IEEE Symposium on Security and Privacy (SP)}, pp.\  1613--1630, Los Alamitos, CA, USA, may 2023. IEEE Computer Society.
\newblock \doi{10.1109/SP46215.2023.10179387}.
\newblock URL \url{https://doi.ieeecomputersociety.org/10.1109/SP46215.2023.10179387}.

\bibitem[Quach(2023)]{news4}
Katyanna Quach.
\newblock Professor freezes student grades after chatgpt claimed ai wrote their papers.
\newblock \emph{The Register}, 2023.
\newblock URL \url{https://www.theregister.com/2023/05/17/university_chatgpt_grades/}.

\bibitem[Radford et~al.(2019)Radford, Wu, Child, Luan, Amodei, and Sutskever]{gpt2}
Alec Radford, Jeff Wu, Rewon Child, David Luan, Dario Amodei, and Ilya Sutskever.
\newblock Language models are unsupervised multitask learners.
\newblock 2019.

\bibitem[Raffel et~al.(2019)Raffel, Shazeer, Roberts, Lee, Narang, Matena, Zhou, Li, and Liu]{t5}
Colin Raffel, Noam Shazeer, Adam Roberts, Katherine Lee, Sharan Narang, Michael Matena, Yanqi Zhou, Wei Li, and Peter~J. Liu.
\newblock Exploring the limits of transfer learning with a unified text-to-text transformer, 2019.
\newblock URL \url{https://arxiv.org/abs/1910.10683}.

\bibitem[{Rajpurkar} et~al.(2016){Rajpurkar}, {Zhang}, {Lopyrev}, and {Liang}]{squadv2}
Pranav {Rajpurkar}, Jian {Zhang}, Konstantin {Lopyrev}, and Percy {Liang}.
\newblock {SQuAD: 100,000+ Questions for Machine Comprehension of Text}.
\newblock \emph{arXiv e-prints}, art. arXiv:1606.05250, 2016.

\bibitem[Rombach et~al.(2022)Rombach, Blattmann, Lorenz, Esser, and Ommer]{stablediff}
Robin Rombach, Andreas Blattmann, Dominik Lorenz, Patrick Esser, and Bj{\"o}rn Ommer.
\newblock High-resolution image synthesis with latent diffusion models.
\newblock In \emph{Proceedings of the IEEE/CVF Conference on Computer Vision and Pattern Recognition}, pp.\  10684--10695, 2022.

\bibitem[Sadasivan et~al.(2023)Sadasivan, Soltanolkotabi, and Feizi]{sadasivan2023cuda}
Vinu~Sankar Sadasivan, Mahdi Soltanolkotabi, and Soheil Feizi.
\newblock Cuda: Convolution-based unlearnable datasets.
\newblock \emph{arXiv preprint arXiv:2303.04278}, 2023.

\bibitem[Saharia et~al.(2022)Saharia, Chan, Saxena, Li, Whang, Denton, Ghasemipour, Ayan, Mahdavi, Lopes, et~al.]{imagen}
Chitwan Saharia, William Chan, Saurabh Saxena, Lala Li, Jay Whang, Emily Denton, Seyed Kamyar~Seyed Ghasemipour, Burcu~Karagol Ayan, S~Sara Mahdavi, Rapha~Gontijo Lopes, et~al.
\newblock Photorealistic text-to-image diffusion models with deep language understanding.
\newblock \emph{arXiv preprint arXiv:2205.11487}, 2022.

\bibitem[Solaiman et~al.(2019)Solaiman, Brundage, Clark, Askell, Herbert-Voss, Wu, Radford, Krueger, Kim, Kreps, et~al.]{solaiman2019release}
Irene Solaiman, Miles Brundage, Jack Clark, Amanda Askell, Ariel Herbert-Voss, Jeff Wu, Alec Radford, Gretchen Krueger, Jong~Wook Kim, Sarah Kreps, et~al.
\newblock Release strategies and the social impacts of language models.
\newblock \emph{arXiv preprint arXiv:1908.09203}, 2019.

\bibitem[Thapliyal \& Hwang(2022)Thapliyal and Hwang]{thapliyal2022datadriven}
Omanshu Thapliyal and Inseok Hwang.
\newblock Data-driven cyberattack synthesis against network control systems, 2022.

\bibitem[Touvron et~al.(2023)Touvron, Martin, Stone, Albert, Almahairi, Babaei, Bashlykov, Batra, Bhargava, Bhosale, et~al.]{llama}
Hugo Touvron, Louis Martin, Kevin Stone, Peter Albert, Amjad Almahairi, Yasmine Babaei, Nikolay Bashlykov, Soumya Batra, Prajjwal Bhargava, Shruti Bhosale, et~al.
\newblock Llama 2: Open foundation and fine-tuned chat models.
\newblock \emph{arXiv preprint arXiv:2307.09288}, 2023.

\bibitem[Wang et~al.(2022)Wang, Levine, and Feizi]{wang2022improved}
Wenxiao Wang, Alexander~J Levine, and Soheil Feizi.
\newblock Improved certified defenses against data poisoning with (deterministic) finite aggregation.
\newblock In \emph{International Conference on Machine Learning}, pp.\  22769--22783. PMLR, 2022.

\bibitem[Wang et~al.(2023)Wang, Mansurov, Ivanov, Su, Shelmanov, Tsvigun, Whitehouse, Afzal, Mahmoud, Aji, and Nakov]{m4}
Yuxia Wang, Jonibek Mansurov, Petar Ivanov, Jinyan Su, Artem Shelmanov, Akim Tsvigun, Chenxi Whitehouse, Osama~Mohammed Afzal, Tarek Mahmoud, Alham~Fikri Aji, and Preslav Nakov.
\newblock M4: Multi-generator, multi-domain, and multi-lingual black-box machine-generated text detection, 2023.

\bibitem[Weiss(2019)]{weiss2019deepfake}
Max Weiss.
\newblock Deepfake bot submissions to federal public comment websites cannot be distinguished from human submissions.
\newblock \emph{Technology Science}, 2019121801, 2019.

\bibitem[Wilson et~al.(2014)Wilson, Blunsom, and Ker]{wilson2014linguistic}
Alex Wilson, Phil Blunsom, and Andrew~D Ker.
\newblock Linguistic steganography on twitter: hierarchical language modeling with manual interaction.
\newblock In \emph{Media Watermarking, Security, and Forensics 2014}, volume 9028, pp.\  9--25. SPIE, 2014.

\bibitem[Wolff(2020)]{wolff}
Max Wolff.
\newblock Attacking neural text detectors.
\newblock \emph{CoRR}, abs/2002.11768, 2020.
\newblock URL \url{https://arxiv.org/abs/2002.11768}.

\bibitem[Zhang et~al.(2024)Zhang, Edelman, Francati, Venturi, Ateniese, and Barak]{zhang2024watermarkssandimpossibilitystrong}
Hanlin Zhang, Benjamin~L. Edelman, Danilo Francati, Daniele Venturi, Giuseppe Ateniese, and Boaz Barak.
\newblock Watermarks in the sand: Impossibility of strong watermarking for generative models, 2024.
\newblock URL \url{https://arxiv.org/abs/2311.04378}.

\bibitem[Zhang et~al.(2022)Zhang, Roller, Goyal, Artetxe, Chen, Chen, Dewan, Diab, Li, Lin, Mihaylov, Ott, Shleifer, Shuster, Simig, Koura, Sridhar, Wang, and Zettlemoyer]{opt}
Susan Zhang, Stephen Roller, Naman Goyal, Mikel Artetxe, Moya Chen, Shuohui Chen, Christopher Dewan, Mona Diab, Xian Li, Xi~Victoria Lin, Todor Mihaylov, Myle Ott, Sam Shleifer, Kurt Shuster, Daniel Simig, Punit~Singh Koura, Anjali Sridhar, Tianlu Wang, and Luke Zettlemoyer.
\newblock Opt: Open pre-trained transformer language models, 2022.
\newblock URL \url{https://arxiv.org/abs/2205.01068}.

\bibitem[Zhao et~al.(2023{\natexlab{a}})Zhao, Ananth, Li, and Wang]{zhao2023provablerobustwatermarkingaigenerated}
Xuandong Zhao, Prabhanjan Ananth, Lei Li, and Yu-Xiang Wang.
\newblock Provable robust watermarking for ai-generated text, 2023{\natexlab{a}}.
\newblock URL \url{https://arxiv.org/abs/2306.17439}.

\bibitem[Zhao et~al.(2023{\natexlab{b}})Zhao, Wang, and Li]{zhao2023protecting}
Xuandong Zhao, Yu-Xiang Wang, and Lei Li.
\newblock Protecting language generation models via invisible watermarking.
\newblock \emph{arXiv preprint arXiv:2302.03162}, 2023{\natexlab{b}}.

\end{thebibliography}
\bibliographystyle{tmlr}

\appendix
\begin{figure}[t!]
\centering
\begin{subfigure}{.47\textwidth}
  \centering
  \adjustimage{width=1\linewidth, left}{images/wm_opt13b_dipper.png}
  \caption{XSum with OPT-13B paraphrased with DIPPER}
  \label{fig:wmopt13bdipper}
\end{subfigure}% 
\hfill
\begin{subfigure}{.47\textwidth}
  \centering
  \adjustimage{width=1\linewidth, right}{images/wm_opt13b_llama.png}
  \caption{XSum with OPT-13B paraphrased with LLaMA-2}
  \label{fig:wmopt13bllama}
\end{subfigure}\\
\begin{subfigure}{.495\textwidth}
  \centering
  \adjustimage{width=1\linewidth, left}{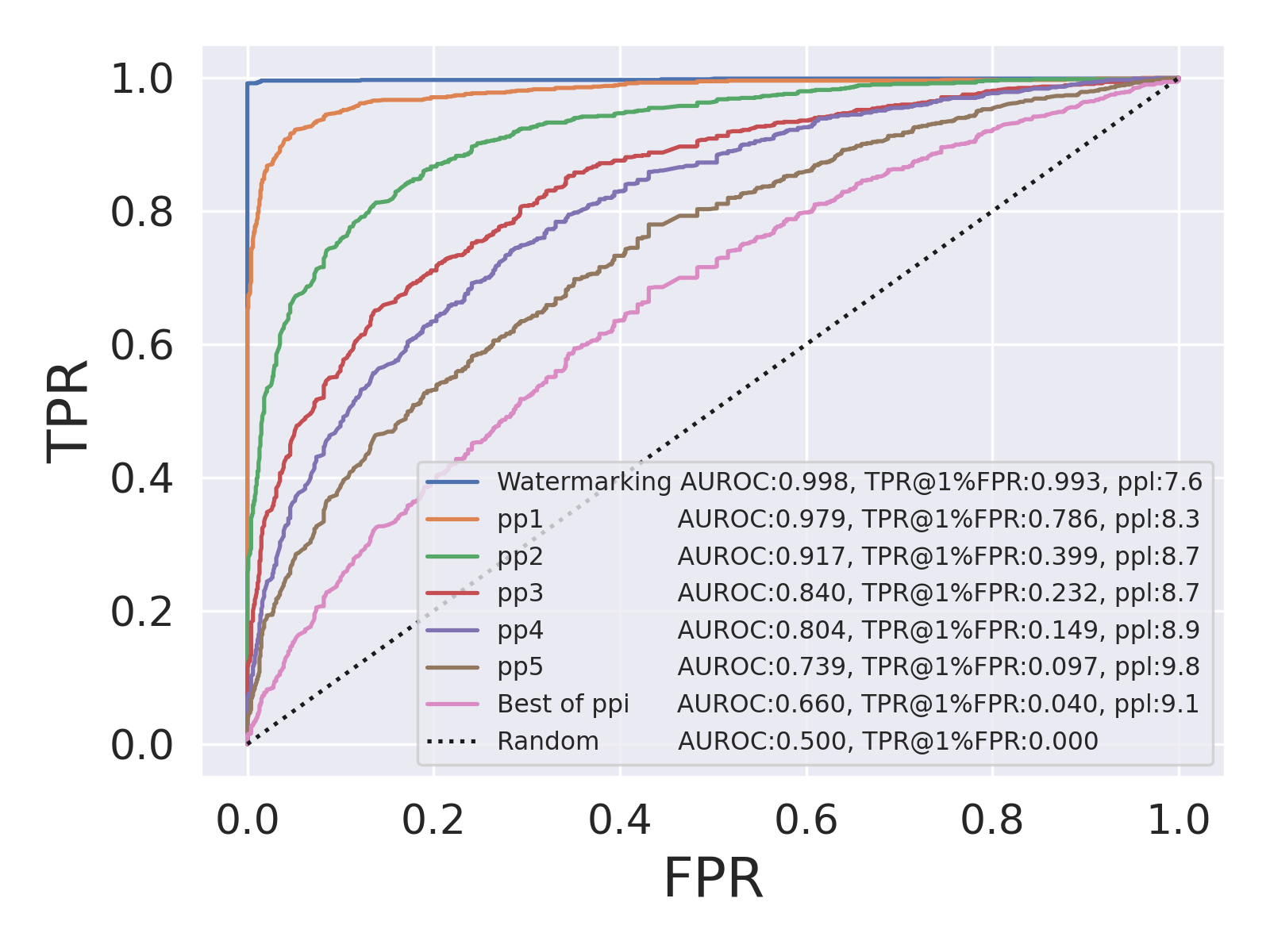}
  \vspace{-7mm}
  \caption{XSum with OPT-1.3B paraphrased with DIPPER}
  \label{fig:wmopt}
\end{subfigure}% 
\hfill
\begin{subfigure}{.495\textwidth}
  \centering
  \adjustimage{width=1\linewidth, right}{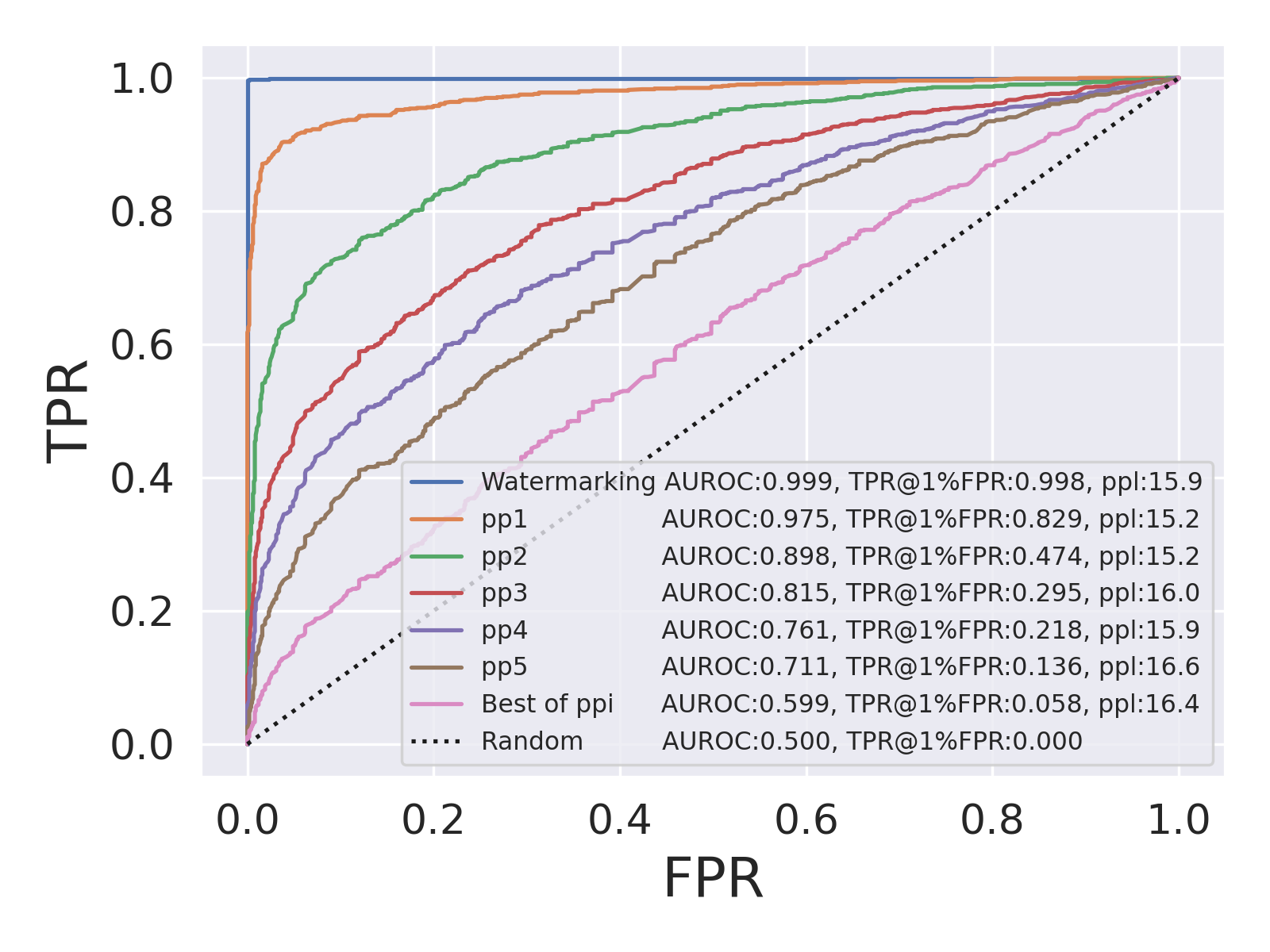}
  \vspace{-7mm}
  \caption{XSum with GPT-2-Medium paraphrased with DIPPER}
  \label{fig:wmgpt}
\end{subfigure}\\
\begin{subfigure}{.495\textwidth}
  \centering
  \adjustimage{width=1\linewidth, left}{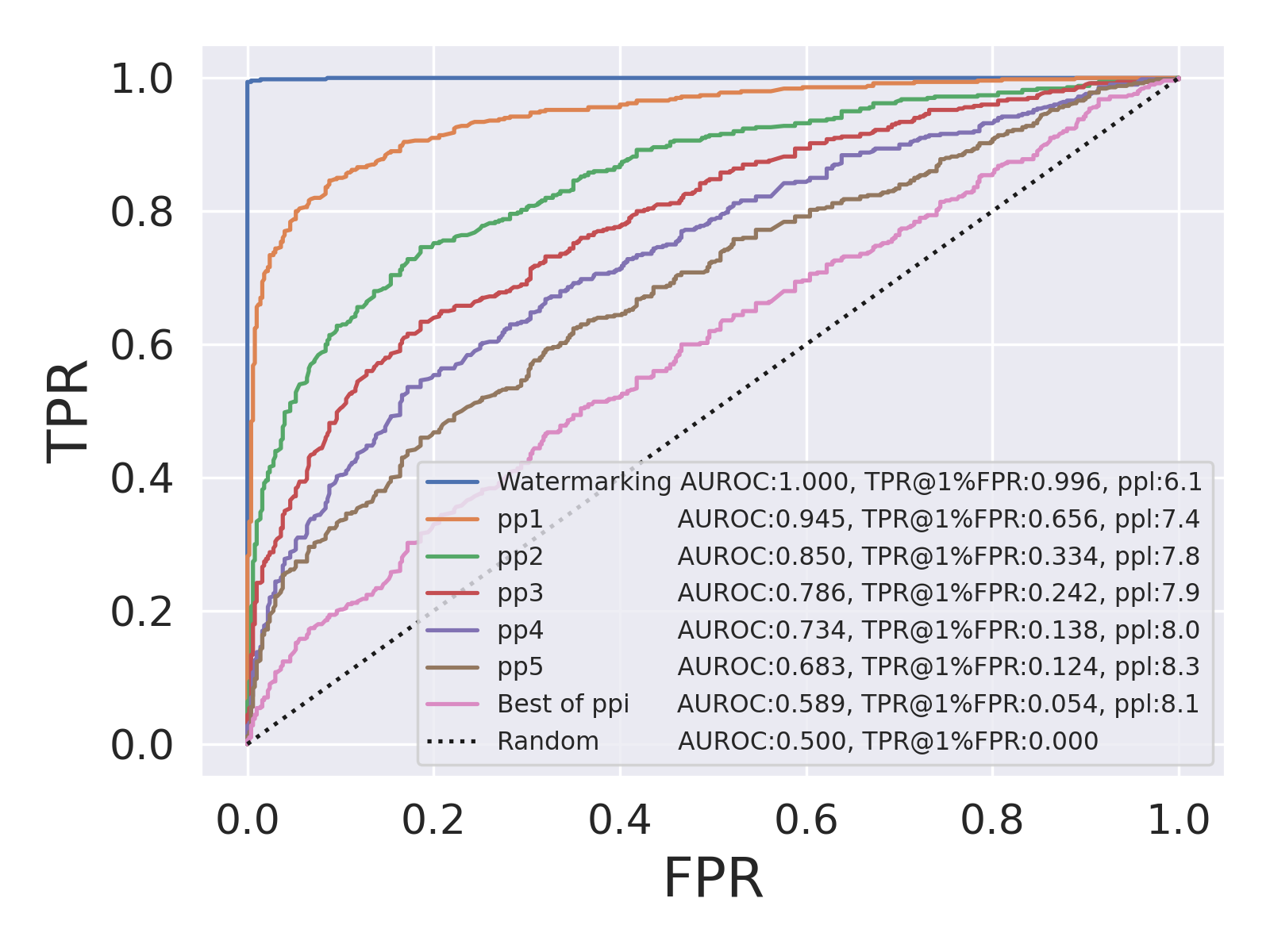}
  \vspace{-7mm}
  \caption{PubMedQA with OPT-1.3B paraphrased with DIPPER}
  \label{fig:wmpubmed}
\end{subfigure}% 
\hfill
\begin{subfigure}{.495\textwidth}
  \centering
  \adjustimage{width=1\linewidth, right}{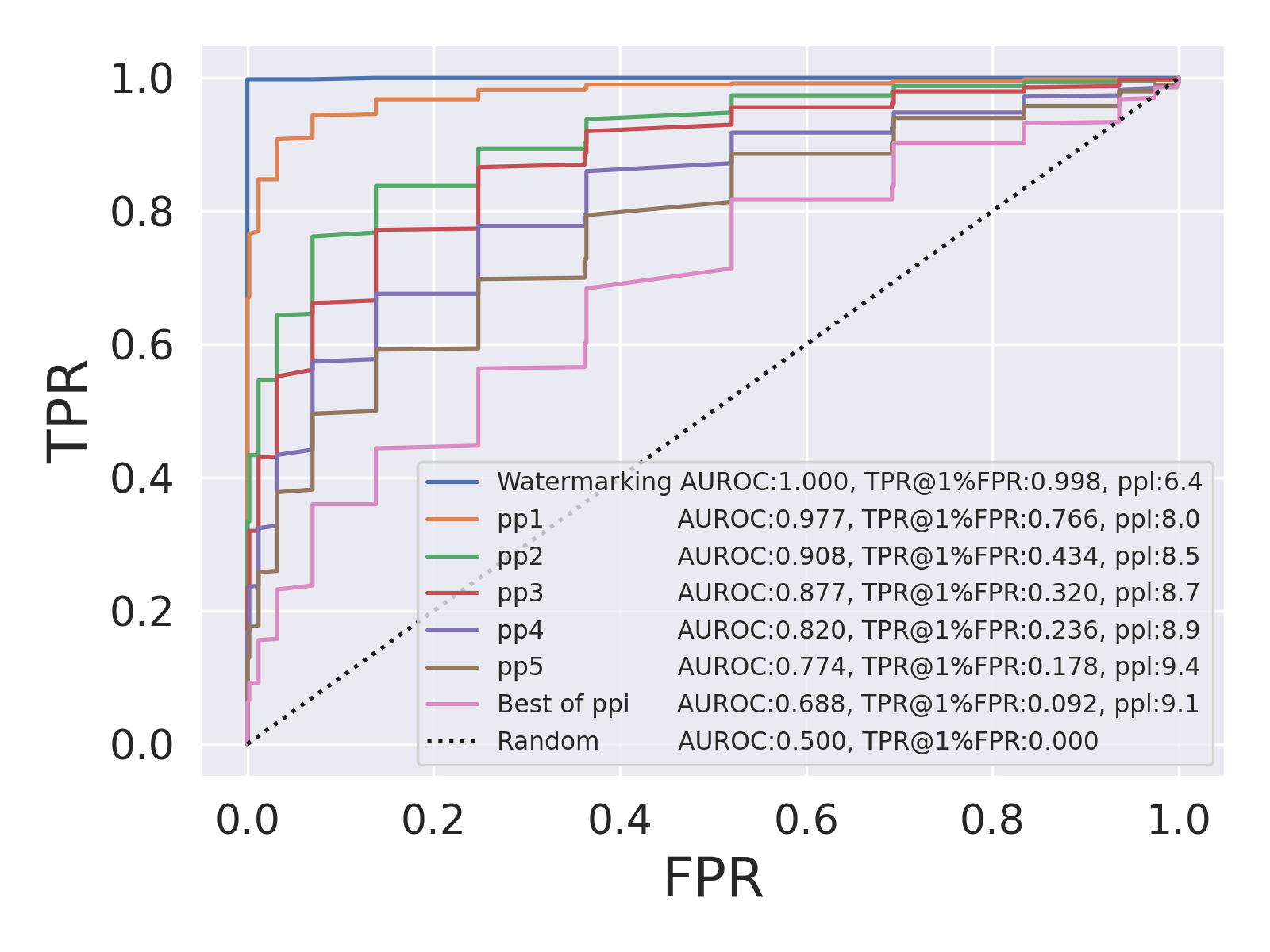}
  \vspace{-7mm}
  \caption{Kafkai with OPT-1.3B paraphrased with DIPPER}
  \label{fig:wmkafkai}
\end{subfigure}
% \vspace{-2mm}
\caption{ROC plots for soft watermarking \citep{kirchenbauer2023watermark} with our recursive paraphrasing attacks. AUROC, TPR@1$\%$FPR, and perplexity scores measured using OPT-13B are given in the legend. Detection performance on the XSum dataset using 3 different LLMs --- OPT-13B, OPT-1.3B, and GPT-2-Medium --- are evaluated in (a), (c), and (d), respectively. (b) show the performance of the detector with recursives paraphrases from LLaMA-2-7B-Chat. (e) and (f), respectively, show the performance of the detector on two datasets --- PubMedQA and Kafkai --- with distribution shifts using OPT-1.3B. In all the settings, we observe that the detection performance of the watermarking-based detector degrades but with a tradeoff in the text perplexity measures after the attack.}
\label{fig:wmapp}
\vspace{-0mm}
\end{figure}

\section{Experiments with More Datasets and Models}
\label{app:moreexps}

In this section, we consider multiple datasets (XSum \citep{xsum}, PubMedQA \citep{jin-etal-2019-pubmedqa}, and Kafkai \citep{deepfaketext}) and target LLMs (OPT-1.3B \citep{opt} and GPT-2-Medium \citep{gpt2}) for analyzing our attacks. 

\textbf{Datasets.} As discussed in the \S~\ref{sec:exp-setup}, we use 2000 text passages (1000 passages each for human and AI-generated text classes) of $\sim$300 tokens in length from the XSum dataset for analyzing our attacks. For the rest of the datasets, we use 1000 text passages (500 passages each for human and AI-generated text classes) of $\sim$200 tokens in length.
XSum contains long news articles in its ``document'' feature.  To evaluate the robustness of our attacks to distribution shifts, we include more datasets. We use PubMedQA, which is a medical text dataset. Kafkai dataset \citep{deepfaketext} contains real and fake articles (generated using privately fine-tuned OpenAI models) from 10 different domains, such as cybersecurity, SEO, and marketing. It is generated using Kafkai text generation service \citep{kafkai}.

\subsection{Additional Experiments with Llama-2-13B}
\label{app:additionalexp}

\revision{
In this section, we provide evasion attack results with Llama-2-13B as the target model. We use a smaller Llama-2-7B model for recursive paraphrasing. In this section, we consider the XSum dataset for the experiments. 
As shown in Table~\ref{tab:additionalexp}, the detection performance (TPR@1\%FPR) of detectors degrade with recursive paraphrasing.
}

\begin{table}[h]
\centering
\begin{adjustbox}{max width=.98\textwidth}
\begin{tabular}{|l|c|c|c|c|c|c|c|c|c|c|c|c|c|}
\hline
Detector & \rotatebox{90}{perturbation\_1\_d} & \rotatebox{90}{perturbation\_1\_z} & \rotatebox{90}{perturbation\_10\_d} & \rotatebox{90}{perturbation\_10\_z} & \rotatebox{90}{likelihood\_threshold} & \rotatebox{90}{rank\_threshold} & \rotatebox{90}{log\_rank\_threshold} & \rotatebox{90}{entropy\_threshold} & \rotatebox{90}{MAGE} & \rotatebox{90}{Longformer} & \rotatebox{90}{roberta-base} & \rotatebox{90}{roberta-large} & \rotatebox{90}{Kuditipudi et al.} \\
\hline
\hline
No attack & 0.32 & 0.32 & 0.612 & 0.048 & 0.992 & 0.148 & 0.98 & 0.0 & 0.457 & 0.772 & 0.672 & 0.648 & 0.988 \\
pp1 & 0.1 & 0.1 & 0.204 & 0.0 & 0.652 & 0.1 & 0.564 & 0.104 & 0.405 & 0.476 & 0.404 & 0.6 & 0.532 \\
pp2 & 0.142 & 0.142 & 0.124 & 0.02 & 0.536 & 0.076 & 0.444 & 0.116 & 0.4 & 0.454 & 0.316 & 0.556 & 0.356 \\
pp3 & 0.04 & 0.04 & 0.068 & 0.036 & 0.516 & 0.076 & 0.432 & 0.104 & 0.421 & 0.424 & 0.304 & 0.56 & 0.322 \\
pp4 & 0.04 & 0.04 & 0.052 & 0.02 & 0.492 & 0.08 & 0.412 & 0.12 & 0.427 & 0.436 & 0.296 & 0.504 & 0.292 \\
pp5 & 0.068 & 0.068 & 0.06 & 0.0 & 0.476 & 0.08 & 0.388 & 0.12 & 0.421 & 0.432 & 0.304 & 0.524 & 0.264 \\
\bottomrule
\end{tabular}
\end{adjustbox}
\caption{
\revision{Detector performance of various detectors where Llama-2-13B is the target model. The attacker uses a smaller Llama-2-7B model for recursive paraphrasing. The first four detectors with prefix ``perturbation'' are DetectGPT \citep{mitchell2023detectgpt} variants. The next four detectors with suffix ``threshold'' are threshold-based zero-shot detectors. The next four detectors are trained detectors \citep{li2024magemachinegeneratedtextdetection, openaidetectgpt2}. 
\cite{kuditipudi2024robustdistortionfreewatermarkslanguage} is a distortion-free watermarking technique.
}}
\label{tab:additionalexp}
\end{table}

\subsection{Watermark-based Detectors} \label{app:moreexps_wm}
In this section, we analyze the soft watermarking scheme in \cite{kirchenbauer2023watermark}.
We use the powerful DIPPER paraphraser from \cite{krishna2023paraphrasing} with 11B parameters for our recursive paraphrasing attack on the watermarking detector. On average, five rounds of our recursive paraphrase attack take around 36 seconds per text passage, 300 tokens in length. 
OPT-13B is used to measure the perplexity scores for all the settings.
As shown in Table~\ref{tab:summary-mturk} and Appendix~\ref{app:human_study}, we perform a human study over the XSum dataset to evaluate the semantic drifts in our recursive paraphrasing framework. 
The MTurk human evaluation reveals that $70\%$ of our recursive paraphrases maintain high-quality content preservation, and $89\%$ of our recursive paraphrases have high-quality text or grammar.

Figure~\ref{fig:wmapp} shows the performance of the soft watermarking detector in multiple settings. In all the settings, the detection performance drops as rounds of recursive paraphrasing proceed with a slight degradation in perplexity scores. 
After two rounds of paraphrasing (\texttt{pp2}), the detection performance (TPR@1$\%$FPR) in all the settings drops below $50\%$.
\texttt{Best of ppi}, which selects the paraphrase with the worst detection score, significantly degrades the detection performance to below $10\%$ in all the settings with degradation of 1.5, 0.5, 2.0, and 2.7 in perplexity measures.

\subsection{Zero-shot and Trained Detectors} \label{app:moreexps_zs}

In this section, we analyze the zero-shot and trained detectors in prior literature \citep{mitchell2023detectgpt,solaiman2019release, gehrmann2019gltr, ippolito2019automatic, openaidetectgpt2}. 
We use the T5-based paraphraser \citep{prithivida2021parrot}, Parrot, to paraphrase the AI-generated text and use OPT-13B to measure the perplexity scores for all the settings. We perform our experiments on the XSum \citep{xsum}, PubMedQA \citep{jin-etal-2019-pubmedqa}, and Kafkai \citep{kafkai} datasets with GPT-2-Medium and OPT-1.3B as the target generative models. In Figure \ref{fig:roc_additional} (ROC curves) and Tables \ref{tab:tpr_at_fpr} (TPR@$1\%$FPR values) and \ref{tab:auroc} (AUROC scores), we present our results. 1d, 1z, 10d, and 10z in Tables \ref{tab:tpr_at_fpr} and \ref{tab:auroc} refer to different variants of the DetectGPT \citep{mitchell2023detectgpt}. 

Figure~\ref{fig:roc_additional} shows the performance of various zero-shot and trained detectors in multiple settings. The performance of these detectors drops significantly when the AI-generated text is paraphrased, and when given 5 queries to the detector, an adversary can fool most detectors effectively. Some detectors like OpenAI's RoBERTa-based models are more resilient on datasets like XSum, but are not reliable on other datasets like Kafkai. The perplexity scores of the GPT-2 generated text before any paraphrasing were 15.58 for XSum, 12.80 for PubMedQA, 19.11 for Kafkai, while the perplexity of OPT-1.3B generated text was 9.31. After paraphrasing, the perplexity scores are 20.06, 16.45, 20.01, and 13.96, respectively.

\begin{figure}[t]
    \centering
\begin{subfigure}{\textwidth}
  \centering
  \includegraphics[width=\textwidth]{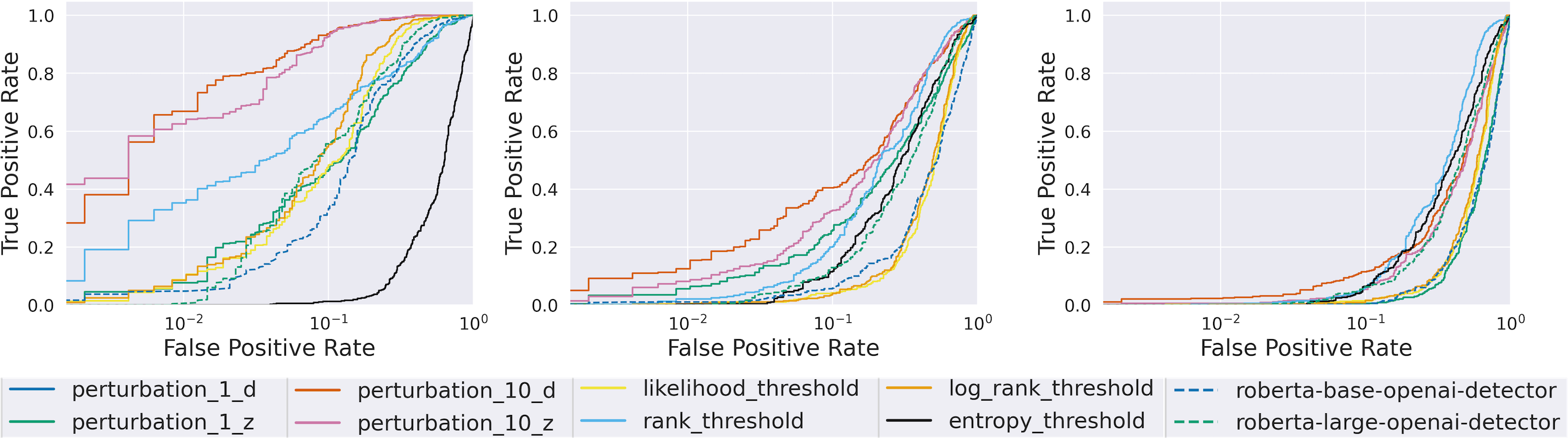}
  \caption{Kafkai with GPT-2}
  \label{fig:gpt_kafkai}
\end{subfigure}
\begin{subfigure}{\textwidth}
  \centering
  \includegraphics[width=\textwidth]{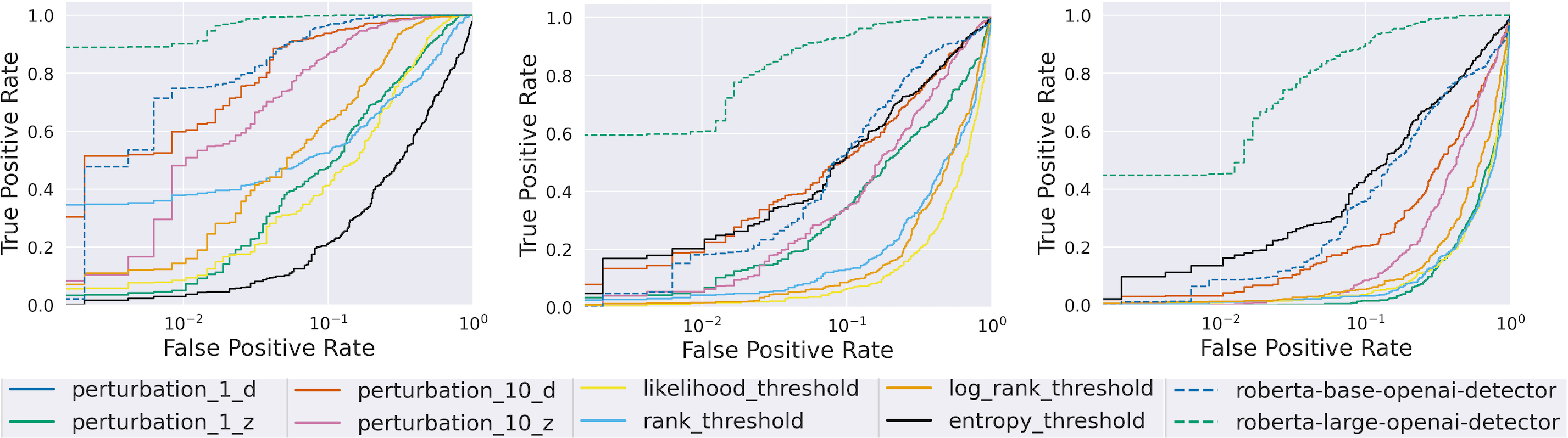}
  \caption{PubMedQA with GPT-2}
  \label{fig:gpt_pubmed}
\end{subfigure}
\begin{subfigure}{\textwidth}
  \centering
  \includegraphics[width=\textwidth]{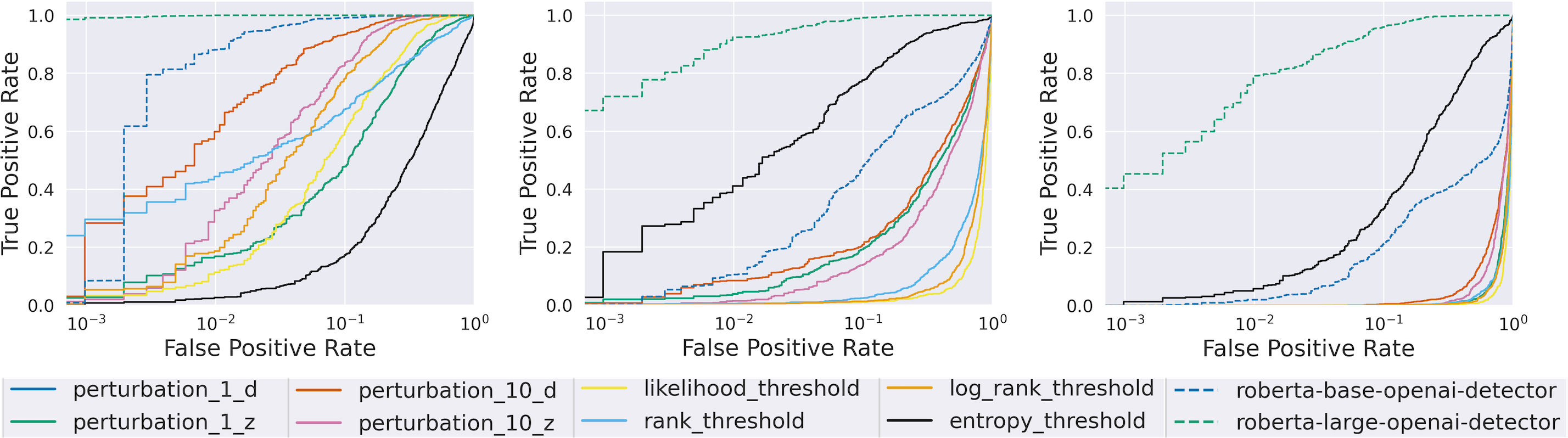}
  \caption{XSum with GPT-2}
  \label{fig:gpt_xsum}
\end{subfigure}
\begin{subfigure}{\textwidth}
  \centering
  \includegraphics[width=\textwidth]{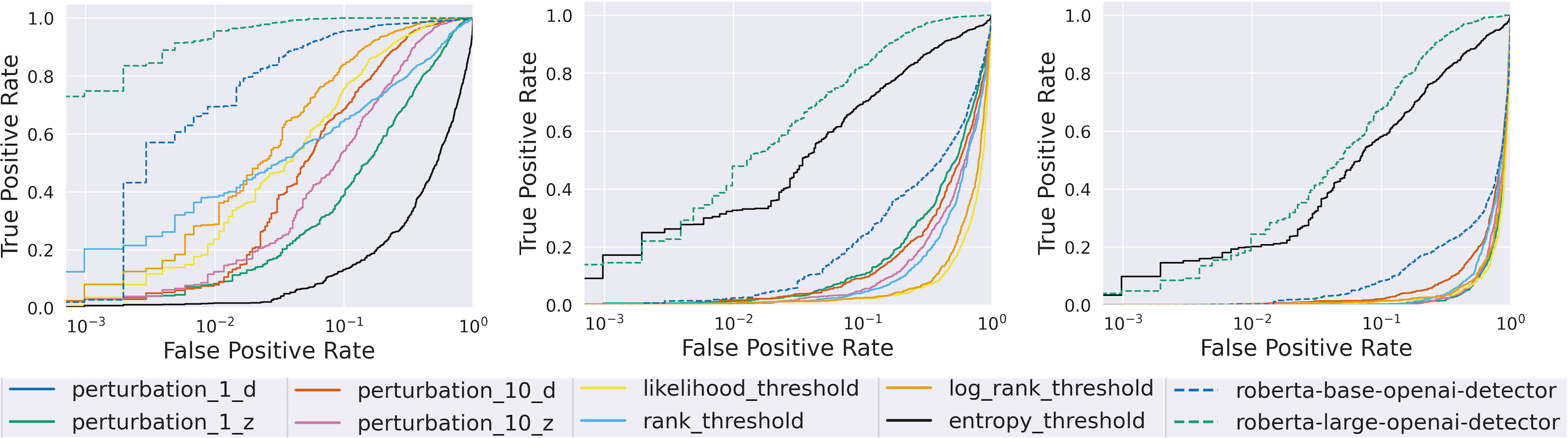}
  \caption{XSum with OPT-1.3B}
  \label{fig:opt_xsum}
\end{subfigure}% 
\caption{ROC curves for performance of various zero-shot and trained detectors for different models and datasets (\textbf{Left}) before attack, (\textbf{Middle}) after paraphrasing attack, (\textbf{Right}) applying paraphrasing attack with multiple queries to detector.}
\label{fig:roc_additional}
\end{figure}

% \begin{figure}[t]
%     \centering
%     \includegraphics[width=0.7\textwidth]{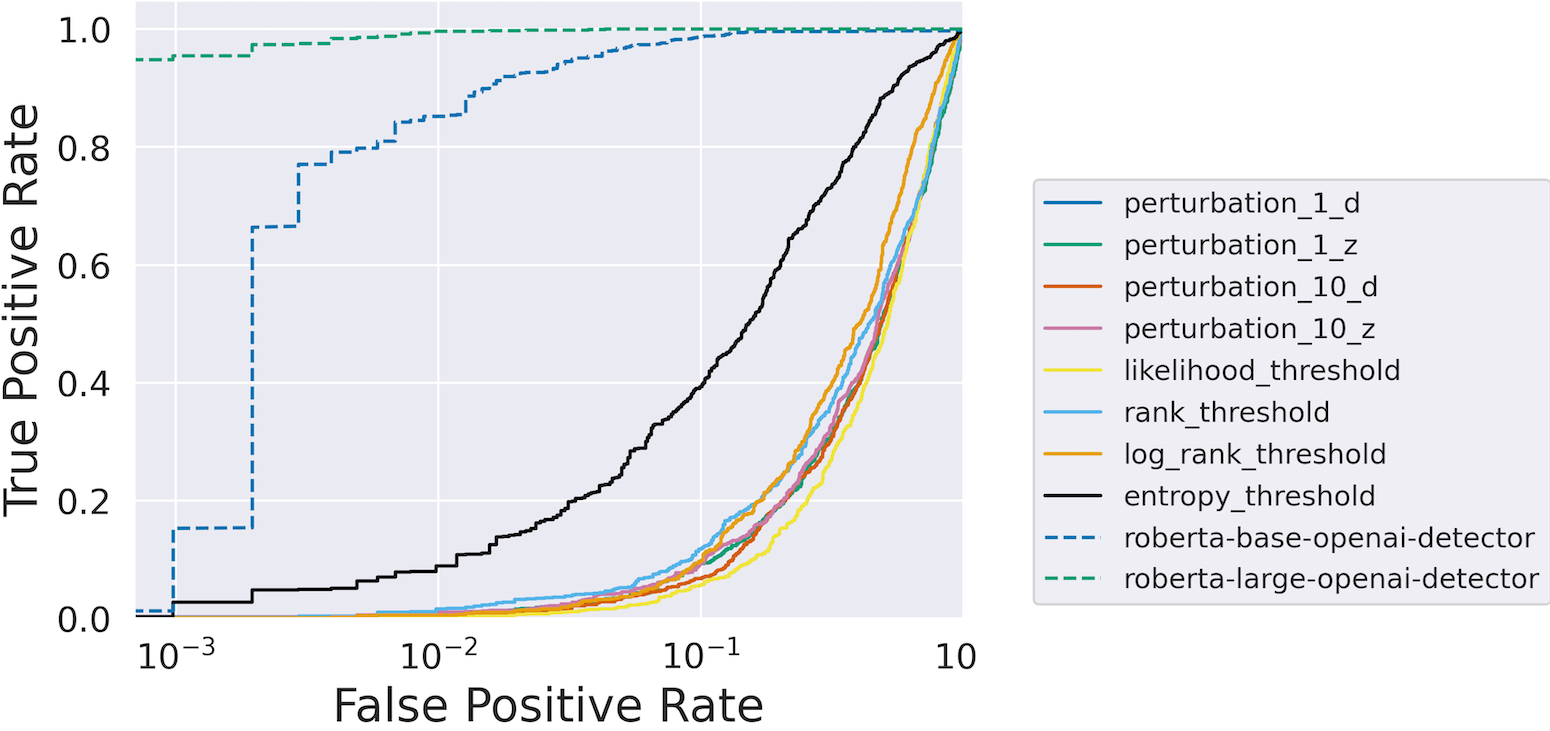}
%     \caption{ROC curves for performance of various zero-shot and trained detectors for GPT-2 on XSum after paraphrasing by DIPPER}
%     \label{fig:enter-label}
% \end{figure}

\begin{table}[]
    \centering
    \begin{adjustbox}{max width=.95\textwidth}
    \begin{tabular}{l|p{7mm}p{7mm}p{7mm}p{7mm}|p{8mm}p{7mm}p{7mm}p{11mm}|p{7mm}p{7mm}}
    \toprule
    & \multicolumn{4}{c|}{ \textbf{DetectGPT}} & \multicolumn{4}{c|}{ \textbf{Threshold} by }  & \multicolumn{2}{c}{\textbf{RoBERTa}} \\
    & 1 d & 1 z & 10 d & 10 z & Likeli- hood & Rank & Log Rank & Entropy &  Base &  Large \\
    \midrule\midrule
\textbf{OPT-1.3B on XSum} & & & & & & & & & & \\
No attack & 0.079 & 0.079 & 0.083 & 0.125 & 0.237 & 0.382 & 0.288 & 0.017 & 0.694 & 0.956\\ 
pp attack  & 0.014 & 0.014 & 0.018 & 0.006 & 0.006 & 0.006 & 0.006 & 0.326 & 0.025 & 0.479\\ 
5 pp attack & 0.0 & 0.0 & 0.005 & 0.0 & 0.004 & 0.001 & 0.002 & 0.202 & 0.003 & 0.244\\ 
\midrule
\textbf{GPT-2 on PubMedQA} & & & & & & & & & & \\
No attack & 0.05 & 0.05 & 0.598 & 0.481 & 0.085 & 0.379 & 0.144 & 0.029 & 0.748 & 0.902\\ 
pp attack & 0.052 & 0.052 & 0.19 & 0.054 & 0.015 & 0.042 & 0.017 & 0.202 & 0.181 & 0.606\\ 
5 pp attack  & 0.0 & 0.0 & 0.031 & 0.002 & 0.008 & 0.01 & 0.012 & 0.135 & 0.088 & 0.452\\  
\midrule
\textbf{GPT-2 on Kafkai} & & & & & & & & & & \\
No attack & 0.077 & 0.077 & 0.669 & 0.625 & 0.088 & 0.352 & 0.085 & 0.0 & 0.048 & 0.006\\ 
pp attack & 0.056 & 0.056 & 0.125 & 0.081 & 0.004 & 0.021 & 0.004 & 0.002 & 0.01 & 0.0\\ 
5 pp attack   & 0.0 & 0.0 & 0.023 & 0.006 & 0.002 & 0.004 & 0.002 & 0.0 & 0.0 & 0.0\\ 
\midrule
\textbf{GPT-2 on XSum} & & & & & & & & & & \\
No attack & 0.169 & 0.169 & 0.599 & 0.326 & 0.114 & 0.444 & 0.186 & 0.026 & 0.881 & 1.0\\ 
pp attack & 0.038 & 0.038 & 0.084 & 0.015 & 0.003 & 0.005 & 0.003 & 0.411 & 0.105 & 0.925\\ 
10 pp attack  & 0.0 & 0.0 & 0.0 & 0.0 & 0.0 & 0.0 & 0.0 & 0.058 & 0.02 & 0.792\\
% \hspace{5pt} 1 query (with DIPPER) & 0.006 & 0.006 & 0.005 & 0.011 & 0.002 & 0.017 & 0.006 & 0.089 & 0.852 & 0.996\\ 
\bottomrule
    \end{tabular}
    \end{adjustbox}
    \caption{TPR@1$\%$FPR for trained and zero-shot detectors in different settings. For all attacks, we use the T5-based paraphraser. Here, ``pp attack'' refers to the paraphrasing attack where the AI output is paraphrased by the T5-based model. ``i pp attack'' refers to the setting where the attacker has black-box access to the detector. Here, the paraphraser generates ``i'' paraphrases for every passage, and the attacker selects the passage that has the worst detection score after ``i'' queries to the detector.}
    \label{tab:tpr_at_fpr}
    \vspace{2mm}

    \centering
    \begin{adjustbox}{max width=0.95\textwidth}
    \begin{tabular}{l|p{7mm}p{7mm}p{7mm}p{7mm}|p{8mm}p{7mm}p{7mm}p{11mm}|p{7mm}p{7mm}}
    \toprule
    & \multicolumn{4}{c|}{ \textbf{DetectGPT}} & \multicolumn{4}{c|}{ \textbf{Threshold} by }  & \multicolumn{2}{c}{\textbf{RoBERTa}} \\
    & 1 d & 1 z & 10 d & 10 z & Likeli- hood & Rank & Log Rank & Entropy &  Base &  Large \\
    \midrule\midrule
   \textbf{OPT-1.3B on XSum} & & & & & & & & & & \\
No attack & 0.769 & 0.769 & 0.9 & 0.859 & 0.918 & 0.844 & 0.943 & 0.482 & 0.974 & 0.998\\ 
pp attack & 0.487 & 0.487 & 0.453 & 0.41 & 0.241 & 0.387 & 0.282 & 0.868 & 0.562 & 0.945\\ 
5 pp attack & 0.162 & 0.162 & 0.244 & 0.182 & 0.153 & 0.216 & 0.181 & 0.821 & 0.316 & 0.9\\ 
\midrule

\textbf{GPT-2 on PubMedQA} & & & & & & & & & & \\
No attack & 0.816 & 0.816 & 0.973 & 0.955 & 0.804 & 0.796 & 0.892 & 0.615 & 0.982 & 0.998\\ 
pp attack & 0.671 & 0.671 & 0.796 & 0.743 & 0.4 & 0.497 & 0.494 & 0.798 & 0.823 & 0.98\\ 
5 pp attack & 0.33 & 0.33 & 0.601 & 0.541 & 0.327 & 0.314 & 0.409 & 0.752 & 0.712 & 0.967\\ 
\midrule

\textbf{GPT-2 on Kafkai} & & & & & & & & & & \\
No attack & 0.814 & 0.814 & 0.976 & 0.971 & 0.865 & 0.86 & 0.89 & 0.394 & 0.817 & 0.86\\ 
pp attack & 0.661 & 0.661 & 0.757 & 0.742 & 0.497 & 0.719 & 0.515 & 0.651 & 0.486 & 0.629\\ 
5 pp attack & 0.353 & 0.353 & 0.532 & 0.513 & 0.412 & 0.627 & 0.426 & 0.576 & 0.358 & 0.53\\ 
\midrule

 \textbf{GPT-2 on XSum} & & & & & & & & & & \\
No attack & 0.837 & 0.837 & 0.976 & 0.949 & 0.879 & 0.868 & 0.93 & 0.617 & 0.993 & 1.0\\ 
pp attack & 0.566 & 0.566 & 0.587 & 0.524 & 0.171 & 0.277 & 0.23 & 0.916 & 0.726 & 0.995\\ 
10 pp attack & 0.115 & 0.115 & 0.202 & 0.177 & 0.075 & 0.104 & 0.108 & 0.744 & 0.464 & 0.983\\ 
% gpt-xsum-1020-dipper & 0.507 & 0.507 & 0.511 & 0.519 & 0.495 & 0.539 & 0.582 & 0.78 & 0.99 & 1.0\\ 
\bottomrule
    \end{tabular}
    \end{adjustbox}
    \caption{AUROC for trained and zero-shot detectors in different settings. Here, ``pp attack'' refers to the paraphrasing attack where the AI output is paraphrased by the T5-based model. ``i pp attack'' refers to the setting where the attacker has black-box access to the detector. Here, the paraphraser generates ``i'' paraphrases for every passage, and the attacker selects the passage that has the worst detection score after ``i'' queries to the detector.}
    \label{tab:auroc}
\end{table}

\subsection{Retrieval-based Detectors} \label{app:moreexps_retrieval}

\begin{figure}[t]
\centering
\begin{subfigure}{.495\textwidth}
  \centering
  \adjustimage{width=1\linewidth, left}{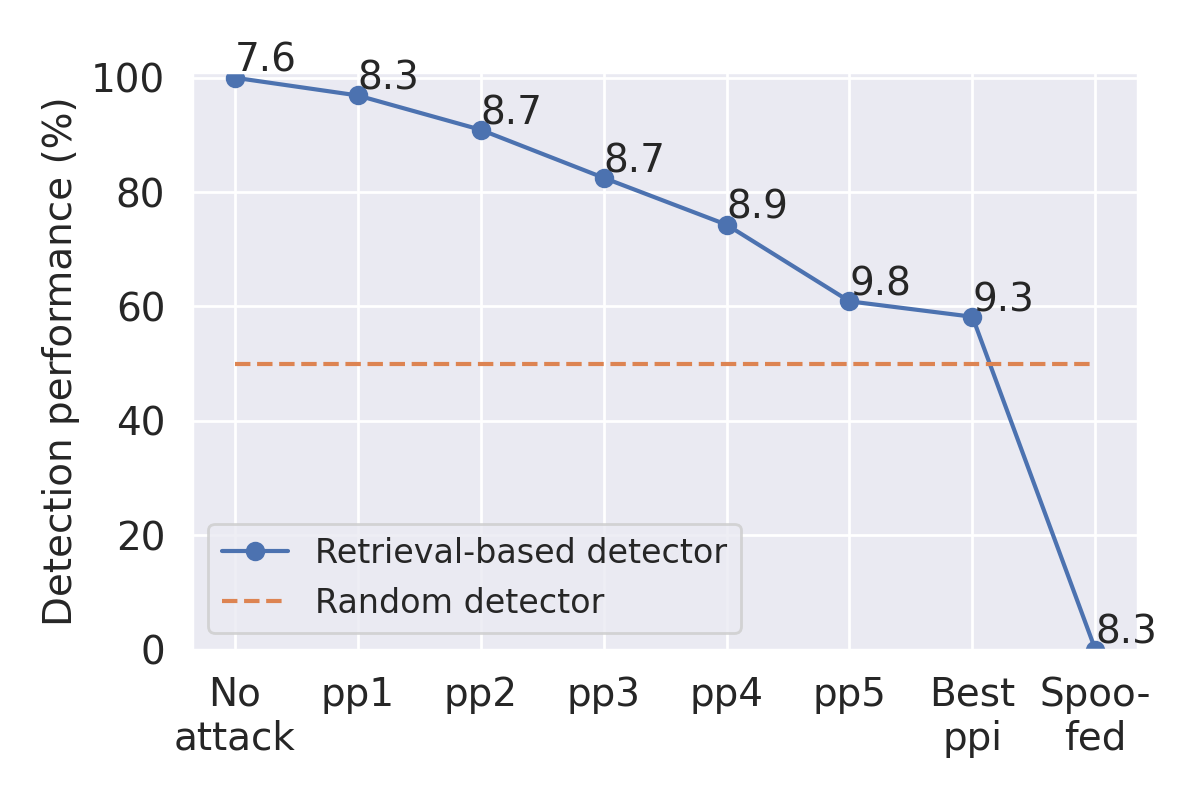}
  \vspace{-2mm}
  \caption{XSum with OPT-1.3B}
  \label{fig:iropt}
\end{subfigure}% 
\hfill
\begin{subfigure}{.495\textwidth}
  \centering
  \adjustimage{width=1\linewidth, right}{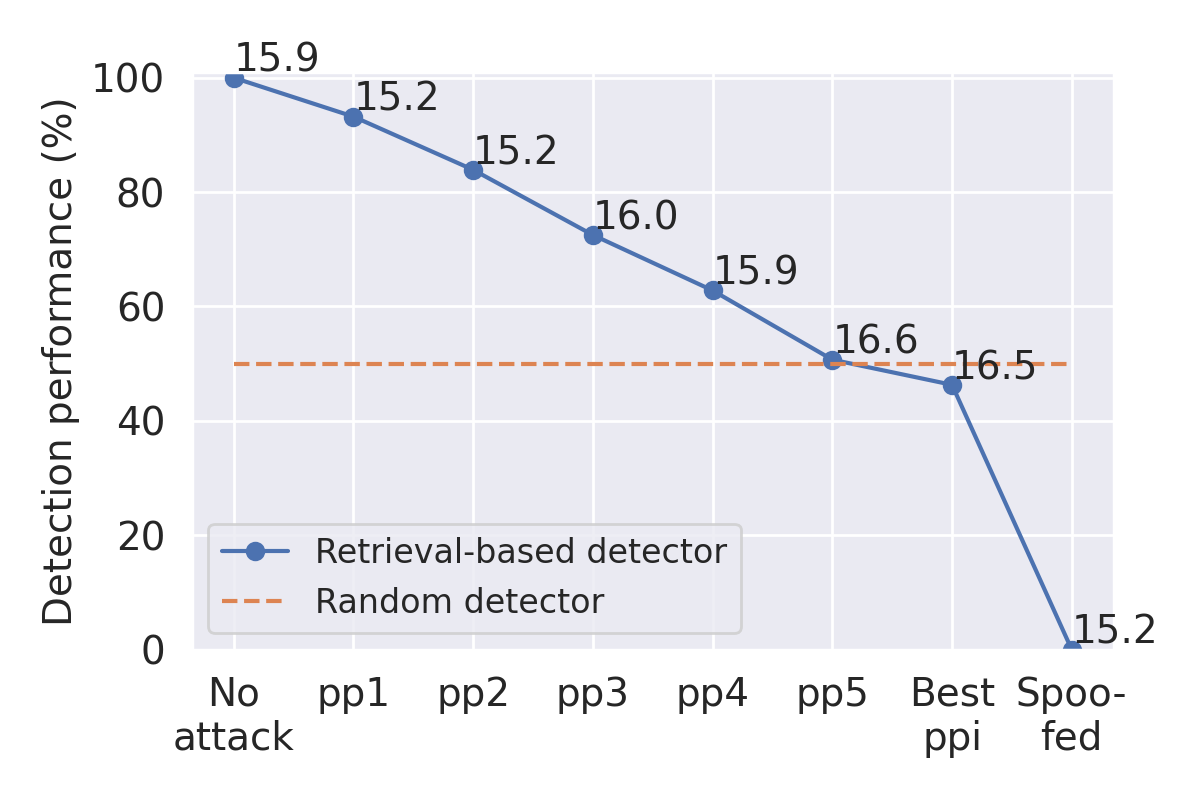}
  \vspace{-2mm}
  \caption{XSum with GPT-2-Medium}
  \label{fig:irgpt}
\end{subfigure}\\
\begin{subfigure}{.495\textwidth}
  \centering
  \adjustimage{width=1\linewidth, left}{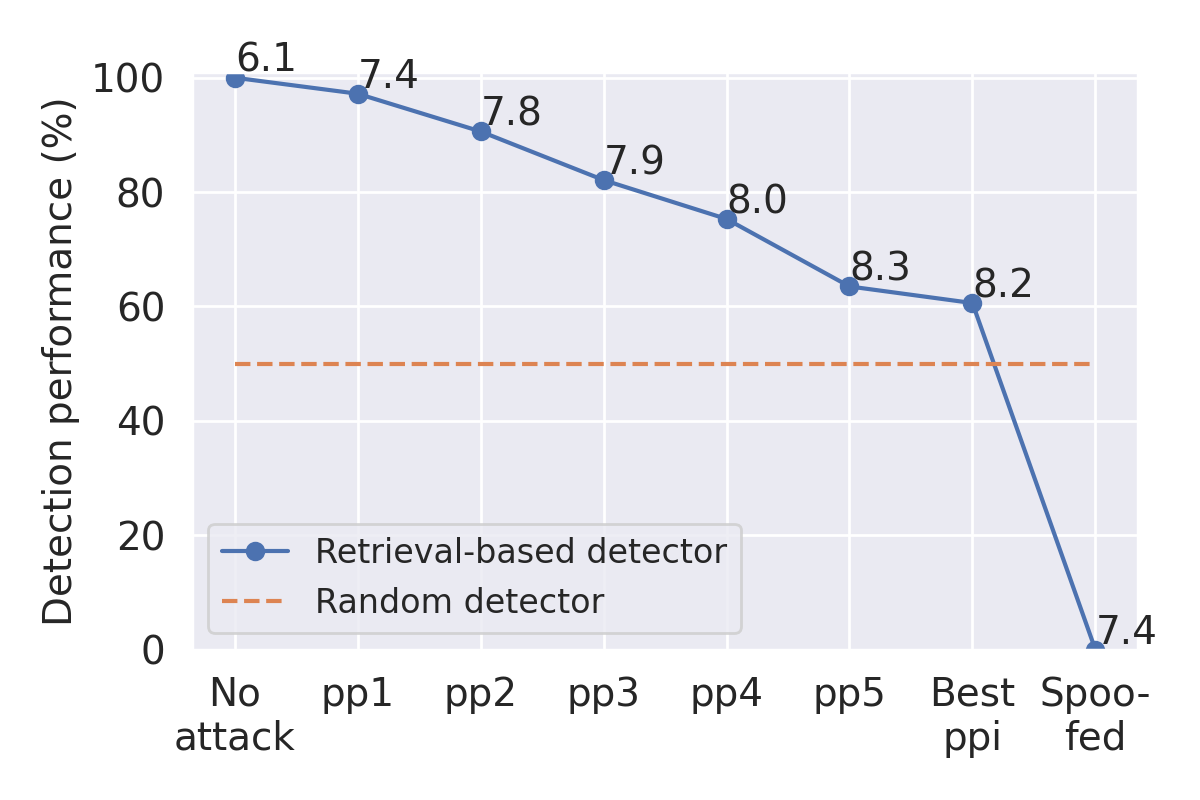}
  \vspace{-2mm}
  \caption{PubMedQA with OPT-1.3B}
  \label{fig:irpubmed}
\end{subfigure}% 
\hfill
\begin{subfigure}{.495\textwidth}
  \centering
  \adjustimage{width=1\linewidth, right}{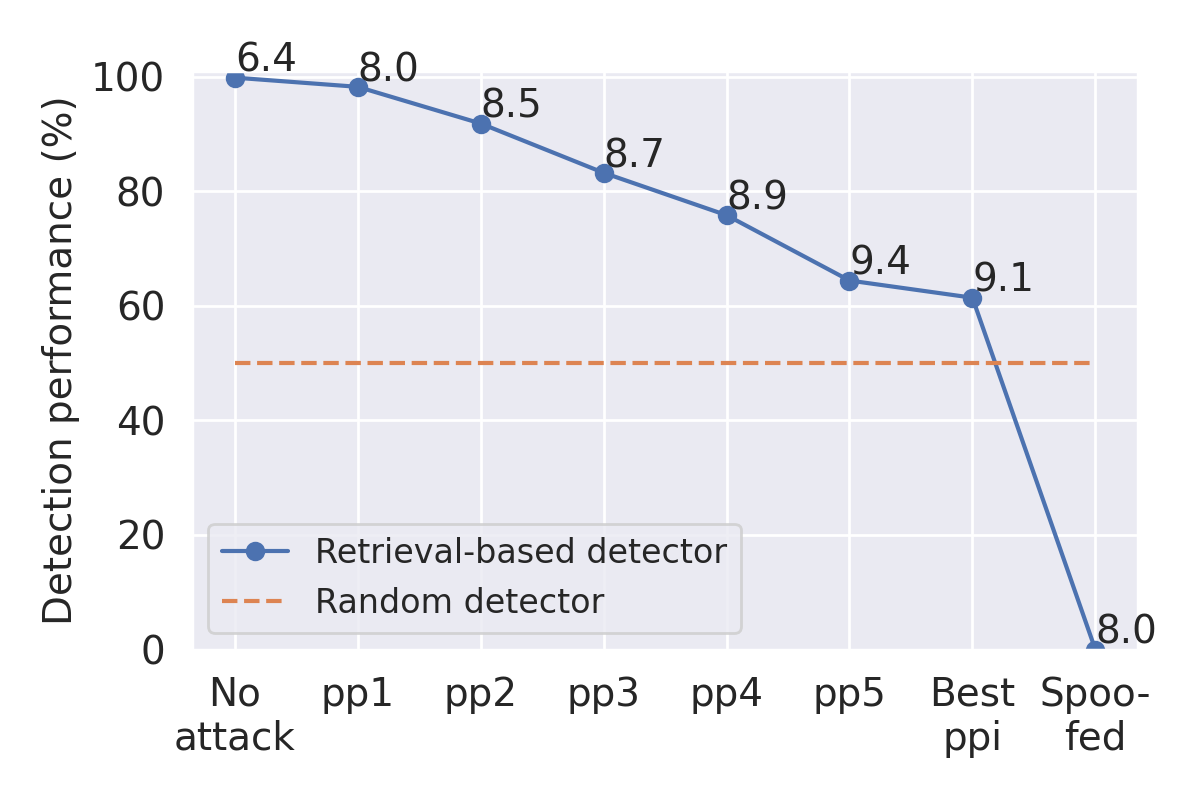}
  \vspace{-2mm}
  \caption{Kafkai with OPT-1.3B}
  \label{fig:irkafkai}
\end{subfigure}
% \vspace{-2mm}
\caption{ROC plots for soft watermarking \citep{kirchenbauer2023watermark} with our recursive paraphrasing attacks. AUROC, TPR@1$\%$FPR, and perplexity scores measured using OPT-13B are given in the legend. Detection performance on the XSum dataset using two different LLMs --- OPT-1.3B and GPT-2-Medium --- are evaluated in (a) and (b), respectively. (c) and (d), respectively, show the performance of the detector on two datasets --- PubMedQA and Kafkai --- with distribution shifts using OPT-1.3B. In all the settings, we observe that the detection performance of the watermarking-based detector reduces with a tradeoff in the perplexity measures of text after the attack.}
\label{fig:irapp}
\vspace{-0mm}
\end{figure}

In this section, we analyze the retrieval-based detector proposed in \cite{krishna2023paraphrasing}. 
We show that our recursive paraphrasing attack is effective in breaking their detector. 
We use the 11B parameter DIPPER paraphraser \citep{krishna2023paraphrasing} for our attack. 
OPT-13B is used to measure the perplexity scores in all the settings.

Figure~\ref{fig:irapp} shows the performance of the retrieval-based detector in multiple settings. In all the settings, the detection accuracy drops as rounds of recursive paraphrasing proceed with only a slight degradation in perplexity scores. We observe that the detector works well after a single round of paraphrasing (\texttt{pp1}). However, after five rounds of paraphrasing, \texttt{Best of ppi} reduces the detector's accuracy to close to $50\%$ with only a slight degradation in perplexity scores. We also find that we can easily spoof the retrieval-based detector as discussed in \S\ref{sec:humantextdetected} to deteriorate the detector's performance to $0\%$. 
Note that retrieval-based detectors are concerning since they might lead to serious privacy issues from storing users’ LLM
conversations.

\section{More Details on AI Paraphrasers}
\label{app:aiparaphraser}

\subsection{Human Evaluation Study on Paraphrases}
\label{app:human_study}

Apart from measuring the perplexity scores of the paraphrases using OPT-13B and performance on the SQUaD-v2 benchmark, we perform two human evaluation studies to investigate the quality of the paraphrases from DIPPER and LLaMA-2-7B-Chat we use for the paraphrasing attack. We pick 20 random watermarked passages and their corresponding five rounds of recursive paraphrases (\texttt{pp1} to \texttt{pp5}) for each of the paraphrasers for human evaluation. Each paraphrase is evaluated by 3 unique MTurk workers. 
% One of the twenty watermarked passages generated by the target LLM was non-English, and hence eliminated from the human evaluation. 
% Therefore, our study includes a total of 95 recursive paraphrases. 
We use the same setup as \cite{krishna2023paraphrasing} for our human study. As shown in Figure \ref{fig:mturkui}, users are given a source text with some highlighted portion. The non-highlighted portion of the source text is input into the target OPT-13B model that generates watermarked text which is highlighted for the user's reference. DIPPER or LLaMA-2 paraphrases of the highlighted text are provided as the paraphrasing. The user is supposed to evaluate the quality of the paraphrases with respect to the highlighted watermarked text. They are supposed to rate it on a Likert scale of 1 to 5. See Tables~\ref{tab:mturk-content},\ref{tab:mturk-content-llama} for the evaluation summary on content preservation of DIPPER paraphrasers based on the user study. Tables~\ref{tab:mturk-grammar},\ref{tab:mturk-grammar-llama} show the summary of the evaluation of text quality/grammar of the paraphrases. For the content preservation study, we use the following Likert scale: 5 -- preserves the meaning of the source but differs in words and/or structure. 4 -- preserves most information in the source but differs in some minor factual details. 3 -- reserves some information in the source but differs in certain significant ways. 2 -- topically related to the source but most information in the source is not preserved. 1 -- not topically related.
For the text quality or grammar quality study, we use the following Likert scale: 5 -- the paraphrase has excellent grammar/quality with respect to the highlighted source. 4 -- the paraphrase is clear and correct with minor grammatical errors. 3 -- the paraphrase has few grammatical errors, but remains clear and comparable to highlighted source text. 2 -- the paraphrase has significant number of grammatical errors, but remains understandable. 1 -- the paraphrase is inferior to the highlighted source text with a lot of grammatical errors, may be difficult to comprehend.

Based on the evaluations, $70\%$ and $83\%$ of the paraphrases are rated high quality in terms of content preservation for DIPPER and LLaMA-2, respectively. $89\%$ and $88\%$ of the paraphrases are rated to have high-quality text/grammar for DIPPER and LLaMA-2, respectively. Our human study shows the tradeoff of our recursive paraphrase attack strength with text quality for the watermark-based detector.

\begin{figure}[h!]
    \centering
    \includegraphics[width=\linewidth]{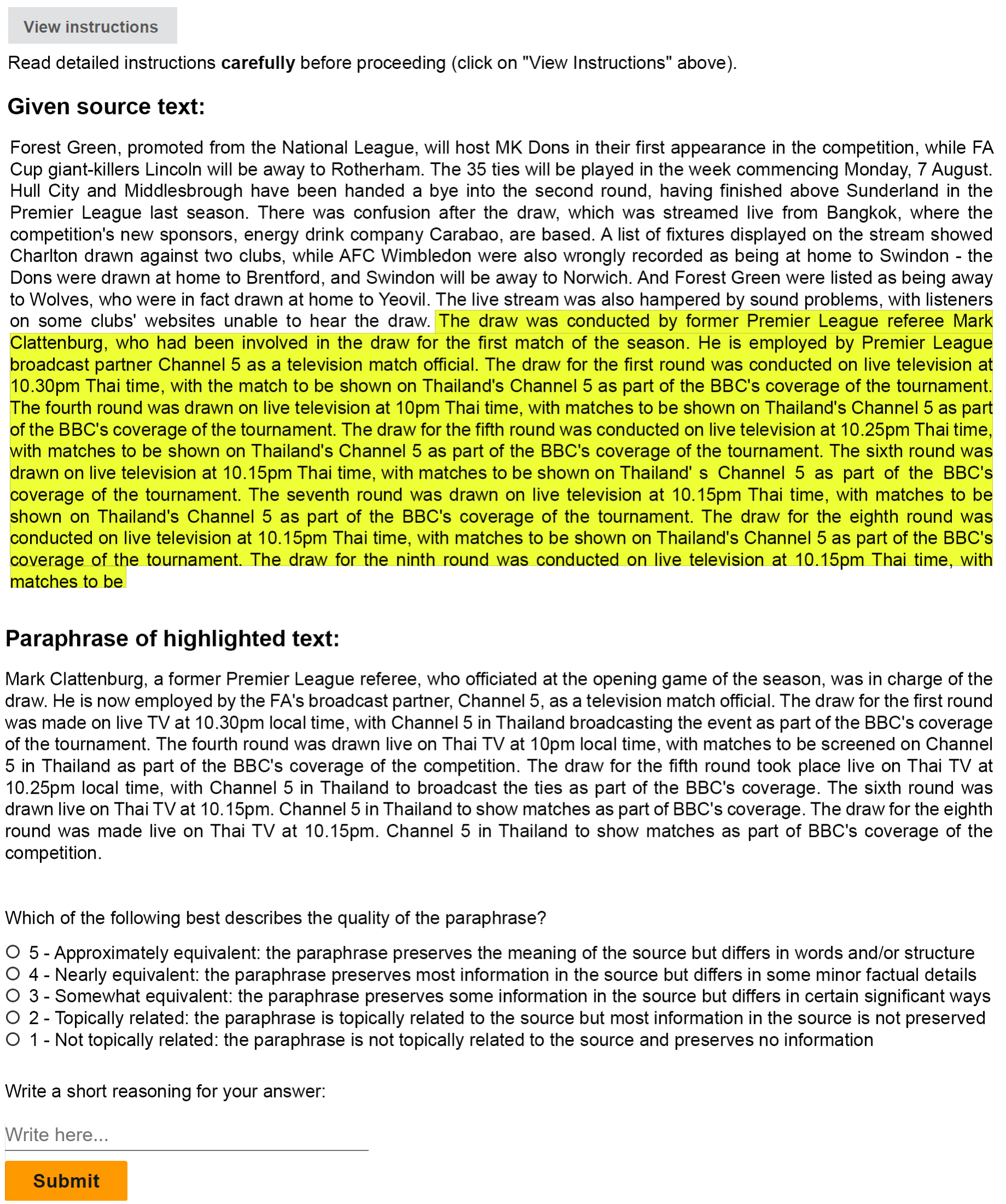}
    \caption{MTurk user interface for human evaluation of paraphrases for content preservation.}
    \label{fig:mturkui}
\end{figure}

\begin{table}[h!]
    \centering\renewcommand\cellalign{lc}
    \setcellgapes{3pt}\makegapedcells
    \begin{adjustbox}{max width=.95\textwidth}
    \begin{tabular}{c|c|c|c|c|c|c|c}
    \toprule
    \textbf{{ppi}} & \makecell{\textbf{Average}\\\textbf{rating}} & \makecell{\textbf{Sum of}\\\textbf{5 \& 4 (\%)}} & \makecell{\textbf{5 - Approx.}\\\textbf{equivalent (\%)}} & \makecell{\textbf{4 - nearly}\\\textbf{equivalent (\%)}} & \makecell{\textbf{3 - Somewhat}\\\textbf{equivalent (\%)}} & \makecell{\textbf{2 - Topically}\\\textbf{related (\%)}} & \makecell{\textbf{1 - Topically}\\\textbf{unrelated (\%)}}\\
    \midrule \midrule
    i=1 & 4.0 $\pm$ 0.8 & 70.2	& 29.8	& 40.4	& 29.8	& 0.0 & 0.0\\
    \midrule
    i=2&	4.1 $\pm$ 0.8	&77.2	&33.3	&43.9	&19.3	&3.5& 0.0\\
    \midrule
    i=3&	3.9 $\pm$ 0.9&	63.2&	33.3&	29.8&	33.3	&3.5& 0.0\\
    \midrule
    i=4&	4.2 $\pm$ 0.9&	80.0&	49.1&	30.9&	14.5&	5.5& 0.0\\
    \midrule
    i=5	&3.7 $\pm$ 1.1&	61.4&	29.8&	31.6	&21.1&	17.5& 0.0\\
    \midrule \midrule
    \textbf{All ppi}	&\textbf{4.0 $\pm$ 0.9}	&\textbf{70.4}	&\textbf{35.1}	&\textbf{35.3}	&\textbf{23.6}&	\textbf{6.0}& \textbf{0.0} \\
    \bottomrule
    \end{tabular}
    \end{adjustbox}
    \vspace{0.2cm}
    \caption{{ MTurk human evaluation of recursive paraphrases with DIPPER for content preservation.} \texttt{ppi} represents the \texttt{i}$^{th}$ round of recursive paraphrasing.}
    \label{tab:mturk-content}
    \vspace{2mm}

        \centering\renewcommand\cellalign{lc}
    \setcellgapes{3pt}\makegapedcells
    \begin{adjustbox}{max width=.95\textwidth}
    \begin{tabular}{c|c|c|c|c|c|c|c}
    \toprule
    \textbf{{ppi}} & \makecell{\textbf{Average}\\\textbf{rating}} & \makecell{\textbf{Sum of}\\\textbf{5 \& 4 (\%)}} & \makecell{\textbf{5 - Approx.}\\\textbf{equivalent (\%)}} & \makecell{\textbf{4 - nearly}\\\textbf{equivalent (\%)}} & \makecell{\textbf{3 - Somewhat}\\\textbf{equivalent (\%)}} & \makecell{\textbf{2 - Topically}\\\textbf{related (\%)}} & \makecell{\textbf{1 - Topically}\\\textbf{unrelated (\%)}}\\
    \midrule \midrule
    i=1 & 4.37 $\pm$ 0.63 & 91.67	& 45.0	& 46.67	& 8.3	& 0.0 & 0.0\\
    \midrule
    i=2&	4.18 $\pm$ 0.67 & 85.0	& 33.33	& 51.67	& 15.0	&0.0& 0.0\\
    \midrule
    i=3&	3.93 $\pm$ 0.71&	80.0&	21.67&	58.33&	16.67	&3.3& 0.0\\
    \midrule
    i=4&	3.85 $\pm$ 0.78&	78.33&	21.67&	56.67&	16.67&	5.0& 0.0\\
    \midrule
    i=5	&3.7 $\pm$ 1.1&	80.0&	18.33&	61.67	&11.67&	8.33& 0.0\\
    \midrule \midrule
    \textbf{All ppi}	&\textbf{4.05 $\pm$ 0.2}	&\textbf{83.0}	&\textbf{28.0}	&\textbf{55.0}	&\textbf{13.67}&	\textbf{3.33}& \textbf{0.0} \\
    \bottomrule
    \end{tabular}
    \end{adjustbox}
    \vspace{0.2cm}
    \caption{{ MTurk kuman evaluation of recursive paraphrases with LLaMA-2-7B-Chat for content preservation.} \texttt{ppi} represents the \texttt{i}$^{th}$ round of recursive paraphrasing.}
    \label{tab:mturk-content-llama}

    \vspace{2mm}
    
    \centering\renewcommand\cellalign{lc}
    \setcellgapes{3pt}\makegapedcells
    \begin{adjustbox}{max width=0.8\textwidth}
    \begin{tabular}{c|c|c|c|c|c|c|c}
    \toprule
    \textbf{{ppi}} & \makecell{\textbf{Average}\\\textbf{rating}} & \makecell{\textbf{Sum of}\\\textbf{5 \& 4 (\%)}} & \makecell{\textbf{5 - Excellent}\\\textbf{(\%)}} & \makecell{\textbf{4 - Good}\\\textbf{(\%)}} & \makecell{\textbf{3 - Fair}\\\textbf{(\%)}} & \makecell{\textbf{2 - Adequate}\\\textbf{(\%)}} & \makecell{\textbf{1 - Poor}\\\textbf{(\%)}}\\
    \midrule \midrule
    i=1	&4.28$\pm$ 0.67	&87.72	&40.35	&47.37	&12.28&	0.00	&0.00\\
    \midrule
i=2	&4.12 $\pm$ 0.50	&92.98	&19.30	&73.68&	7.02	&0.00	&0.00\\
    \midrule
i=3	&4.12 $\pm$ 0.53	&91.23	&21.05	&70.18&	8.77&	0.00	&0.00\\
    \midrule
i=4	&4.11 $\pm$0.64&	84.21	&26.32&	57.89	&15.79	&0.00	&0.00\\
    \midrule
i=5	&4.07 $\pm$ 0.53	&89.47	&17.54	&71.93&	10.53&	0.00	&0.00\\
    \midrule
    \midrule 
    \textbf{All ppi}	&\textbf{4.14 $\pm$ 0.58	}	&\textbf{89.12}	&\textbf{24.91}	&\textbf{64.21}	&\textbf{10.88}&	\textbf{0.0}& \textbf{0.0} \\
    \bottomrule
    \end{tabular}
    \end{adjustbox}
    \vspace{0.2cm}
    \caption{{ Human evaluation of recursive paraphrases using MTurk for text quality/grammar.} \texttt{ppi} represents the \texttt{i}$^{th}$ round of recursive paraphrasing.}
    \label{tab:mturk-grammar}

    \vspace{2mm}

    \centering\renewcommand\cellalign{lc}
    \setcellgapes{3pt}\makegapedcells
    \begin{adjustbox}{max width=0.8\textwidth}
    \begin{tabular}{c|c|c|c|c|c|c|c}
    \toprule
    \textbf{{ppi}} & \makecell{\textbf{Average}\\\textbf{rating}} & \makecell{\textbf{Sum of}\\\textbf{5 \& 4 (\%)}} & \makecell{\textbf{5 - Excellent}\\\textbf{(\%)}} & \makecell{\textbf{4 - Good}\\\textbf{(\%)}} & \makecell{\textbf{3 - Fair}\\\textbf{(\%)}} & \makecell{\textbf{2 - Adequate}\\\textbf{(\%)}} & \makecell{\textbf{1 - Poor}\\\textbf{(\%)}}\\
    \midrule \midrule
    i=1	&4.62$\pm$ 0.55	&96.67	&65.0	&31.67	& 3.33 &	0.00	&0.00\\
    \midrule
i=2	&4.28 $\pm$ 0.73	&83.33	& 45.0 & 38.33 &	16.67	&0.00	&0.00\\
    \midrule
i=3	&4.26 $\pm$ 0.65	&88.33	& 43.33	& 45.0 &	11.67&	0.00	&0.00\\
    \midrule
i=4	&4.22 $\pm$0.64&	88.33	&38.33&	50.0	&11.67	&0.00	&0.00\\
    \midrule
i=5	&4.17 $\pm$ 0.74	&83.33	& 40.0	& 43.33&	15.0 &	1.67&0.00\\
    \midrule
    \midrule 
    \textbf{All ppi}	&\textbf{4.32 $\pm$ 0.35	}	&\textbf{88.0}	&\textbf{46.33}	&\textbf{41.67}	&\textbf{11.67}&	\textbf{0.33}& \textbf{0.0} \\
    \bottomrule
    \end{tabular}
    \end{adjustbox}
    \vspace{0.2cm}
    \caption{{ Human evaluation of recursive paraphrases using MTurk for text quality/grammar.} \texttt{ppi} represents the \texttt{i}$^{th}$ round of recursive paraphrasing.}
    \label{tab:mturk-grammar-llama}
\end{table}

%%%%%%%%%%%%%%%%%%%%%%%%%%%%%%%%%%%%%%%%%%%%%%%%%%%%%%%%%%%

%%%%%%%%%%%%%%%%%%%%%%%%%%%%%%%%%%%%%%%%%%%%%%%%%%%%%%%%%%%

\subsection{LLaMA-2 Chat Template}
\label{app:systemprompts}

Below, we provide the chat template we use to employ LLaMA-2-7B-Chat as a paraphraser.

\noindent\fbox{%
    \parbox{\textwidth}{%
    \textbf{System Prompt:} You are a paraphraser. You are given an input passage `INPUT'.
        You should paraphrase `INPUT' to print `OUTPUT'.
        `OUTPUT' shoud be diverse and different as much as possible from `INPUT' and should not copy any part verbatim from `INPUT'. 
        `OUTPUT' should preserve the meaning and content of 'INPUT' while maintaining text quality and grammar. 
        `OUTPUT' should not be much longer than `INPUT'. 
        You should print `OUTPUT' and nothing else so that its easy for me to parse.\\
   \textbf{User Prompt:} INPUT: \textcolor{blue}{[Add input passage here]}
    }%
}

Below, we provide the chat template we use to employ LLaMA-2-13B-Chat as a question-answering model.

\noindent\fbox{%
    \parbox{\textwidth}{%
    \textbf{System Prompt:} You are given a context `C: \textcolor{blue}{[Add context passage here]}' and a question `Q: \textcolor{blue}{[Add question here]}'.
        Let `A' be the answer to question `Q' solely based on context `C'. 
        You are given the true answer `A1: \textcolor{blue}{[Add ground truth answer here]}' for question `Q'. \\
   \textbf{User Prompt:} INPUT: Do answers `A1' and `A' match? 
        You SHOULD only be printing either `YES' or 'NO'.
    }%
}

\clearpage
\subsection{Example Paraphrases}
\label{app:example-paraphrases}
\begin{table}[h!]
    \centering
    \begin{adjustbox}{totalheight=\textheight-6\baselineskip}
    \begin{tabular}{P{1.8cm}|M{19.5cm}}
    \toprule
    Description     &  \multicolumn{1}{P{19.5cm}}{Text} \\ \midrule \midrule
    Input     &  {The draw was conducted by former Premier League referee Mark Clattenburg, who had been involved in the draw for the first match of the season. He is employed by Premier League broadcast partner Channel 5 as a television match official. The draw for the first round was conducted on live television at 10.30pm Thai time, with the match to be shown on Thailand's Channel 5 as part of the BBC's coverage of the tournament. The fourth round was drawn on live television at 10pm Thai time, with matches to be shown on Thailand's Channel 5 as part of the BBC's coverage of the tournament. The draw for the fifth round was conducted on live television at 10.25pm Thai time, with matches to be shown on Thailand's Channel 5 as part of the BBC's coverage of the tournament. The sixth round was drawn on live television at 10.15pm Thai time, with matches to be shown on Thailand's Channel 5 as part of the BBC's coverage of the tournament. The seventh round was drawn on live television at 10.15pm Thai time, with matches to be shown on Thailand's Channel 5 as part of the BBC's coverage of the tournament. The draw for the eighth round was conducted on live television at 10.15pm Thai time, with matches to be shown on Thailand's Channel 5 as part of the BBC's coverage of the tournament. The draw for the ninth round was conducted on live television at 10.15pm Thai time, with matches to be} \\ \midrule
    {Paraphrase} & { Mark Clattenburg, a former Premier League referee, who officiated at the opening game of the season, was in charge of the draw. He is now employed by the FA's broadcast partner, Channel 5, as a television match official. The draw for the first round was made on live TV at 10.30pm local time, with Channel 5 in Thailand broadcasting the event as part of the BBC's coverage of the tournament. The fourth round was drawn live on Thai TV at 10pm local time, with matches to be screened on Channel 5 in Thailand as part of the BBC's coverage of the competition. The draw for the fifth round took place live on Thai TV at 10.25pm local time, with Channel 5 in Thailand to broadcast the ties as part of the BBC's coverage. The sixth round was drawn live on Thai TV at 10.15pm. Channel 5 in Thailand to show matches as part of BBC's coverage. The draw for the eighth round was made live on Thai TV at 10.15pm. Channel 5 in Thailand to show matches as part of BBC's coverage of the competition.}  \\ \midrule \midrule
    Input     &  { "We had to cancel his operation and put off on him, he was supposed to have it on Christmas Eve." He said Josh and his family had been in touch with some people from the US and Australia who are in Belfast right now. He is one of the first people from the US to come to Belfast to meet and speak with Josh, and the rest of the family will visit from the US in the next few days. "Josh has become a worldwide man's man, and it has been incredible to see how everybody has taken to him. He's an ambassador for Irish people and for Ireland, and the people are really behind him. He's a great ambassador for Ireland and for his country and it has been really good." Josh was taken to Queen's University Hospital for surgery and they discovered and cut down his appendix, which had been causing him pain for more than a year. The surgery took place on January 11. He is still on painkillers, and the family are encouraging people to keep praying for his recovery. The hashtag \#prayforjosh and the post and the message he messages are now being shared widely through social media by people from the US and Australia. The most recent tweets from his father's account are about his recovery in Belfast. He is also being accompanied by a "pray warriors" team and the hashtag \#prayforjosh is now being used by other campaigners to urge people to send support to Josh. The church is also asking people to help} \\ \midrule
    {Paraphrase} & { "His operation was cancelled and put back - it was due to be on Christmas Eve." He said that the family and friends of Josh have been in touch with people from the United States and Australia who are currently in Belfast. He himself was one of the first to arrive in the city to meet and talk to the teenager, and more of the family will arrive in a few days. "Josh has become a 'global' man and it's been amazing to see how people have taken to him. He's a real Irishman and an ambassador for Ireland and it's been really good for him." He said that on January 11th, when he was taken to Queen's University Hospital in Belfast for the operation, a large primary cancer and five smaller growths were removed. He is still recovering and the family ask people to keep praying for his recovery. The post with the hashtag \#prayforjosh and the messages it contains have now been shared all over social media by people in the US and Australia. The last few posts on his father's Twitter account are about his recovery in Belfast. He has also been surrounded by a team of prayer warriors and the campaign has now widened, with people using the tag \#prayforjosh to urge others to show their support for him. The church is also calling on people to help in other ways.}  \\ \midrule \midrule
    Input     &  { I've enjoyed my time with the team."
Wales, who won 11 medals in total in Brazil, failed to make the podium at three consecutive Games from 1996 to 2008 before Doull stepped into the team and helped to end the long wait for a Welsh team pursuit gold. The Welsh team were in fine form on Saturday as they captured their 11th medal, equalling the record of 14 won in 1972. The Welsh team's run to gold this year was well-documented. Geraint Thomas, the 2008 and 2012 Olympic champion, set an Olympic record of 31 minutes 59.4 seconds to help Great Britain to gold. Born in Cardiff but living in Aberystwyth, Anthony Iwan Thomas was selected as one of the two Welsh track riders for Rio and earned a silver medal in the men's road race, which he won in August.
"It's a dream come true to come back and win Olympic gold here," said Thomas. "To do it in this stadium, it's just surreal. I can't explain it. Let me get home and I'll be really grateful to everyone." He missed the 2012 Games through injury but returned to gold-winning form in 2016 when he won his third consecutive world championship in the men's road race, finishing second in Rio.
"It's a real honour to be on the podium today," added the 35-year-old. "There's a lot of times when you think 'this is it' and you think} \\ \midrule
    {Paraphrase} & { I've loved every minute with the team. " Wales, who won 11 medals in Brazil, had not finished on the podium at three consecutive Olympic games from 1996 to 2008 before Doull helped end the country's long wait for a team pursuit gold medal. The nation's 11th medal on the last day equalled the record of 14 set in 1972. So many stories of success for the Welsh riders in Rio have already been written. Born in Cardiff but now based in the Ceredigion resort of Aberystwyth, the son of Geraint Thomas won silver in the men's road race. Thomas said: "It's just a dream come true to come back and win gold here and it's even more special to do it in this stadium. " If I get home I'll thank everybody. " The 38-year-old had to miss the 2012 Olympic games with injury but was back to winning ways in 2016 when he retained his world title and finished second in Rio. "There's so many times when you think, 'This is the day' and it never comes."}  \\ \midrule \bottomrule
    \end{tabular}
    \end{adjustbox}
    \vspace{0.2cm}
    \caption{Examples of paraphrased passage from the XSum dataset. The paraphrasing is performed using DIPPER \citep{krishna2023paraphrasing}.}
    \label{tab:para-examples-1}
\end{table}
\begin{table}[h!]
    \centering
    \begin{adjustbox}{totalheight=\textheight-6\baselineskip}
    \begin{tabular}{P{1.8cm}|M{19.5cm}}
    \toprule
    Description     &  \multicolumn{1}{P{19.5cm}}{Text} \\ \midrule \midrule
    Input     &  { The force has said it has no intention of changing policy or use of force, and will continue to use the weapons as part of its role policing the public. A spokesman for Scotland Yard said: "Officers in Scotland will continue to have a range of weapons on-hand, including tasers, pepper spray and a handgun and they will be used in accordance with Taser International guidelines. "We also have well established policies governing the use of these weapons and the use of force, and the use of force is judged to be proportionate when appropriate." The move was welcomed by a leading rights campaign group. John O'Donnell, the head of the High Court, said Tasers should have been compulsory in Scotland in 2015 following the death of a journalist in custody in Newcastle and the deaths of two more while in police custody. He said: "The use of these weapons by police in Scotland has been widely condemned and the High Court ruled they should be used to protect the public." A spokesman for the Department for Transport said: "No police force should decide for itself what constitutes necessary force and the use of these weapons by police in Scotland has been widely condemned. "The use of these weapons by officers must be subject to strict rules which safeguard the public and the officers' rights." The London police said they "do not comment on intelligence matters".?} \\ \midrule
    {Paraphrase} & { The Met said it had no plans to change its policy on the use of force and said the Tasers would be used "when it is appropriate and proportionate". A spokesman said: "Officers in Scotland will continue to carry a range of weapons including Tasers, pepper spray and handguns. "We have well-established policies on the use of these weapons and the use of force in general and we will continue to ensure these are followed." It was welcomed by the Scottish Human Rights Commission. The high court judge, Lord O'Donnell, said in February that the use of Tasers by police in Scotland should have been compulsory in the wake of the death of a journalist in police custody in Newcastle, and the subsequent deaths of two more people in police custody. He said: "The use of these weapons by police in Scotland has been widely condemned and the high court has ruled they must be used to protect the public." The Department for Transport said police should not be able to decide for themselves when it was appropriate to use force. "The use of these weapons by officers must be subject to strict rules which protect the public and officers' rights." The Met said it did not comment on intelligence matters. Would you like to book an appointment with the British Transport Police?}  \\ \midrule \midrule
    Input     &  { "He gets on with me. I have no doubt about his motives." The move is also an enormous gamble but Koukash, a self-made millionaire who made his fortune as a real estate mogul in Dubai, is determined to make sure the gamble pays off. He is willing, he says, to let the man who guided Salford to seven Grand Final appearances head a great project. "He has an incredible track record of creating people and businesses," said Noble. "He has done it in Qatar, in Dubai, in America, here." He is also one of the game's most ruthless businessmen. Koukash, who moved to the UK from Sudan with his parents as a boy and has spent the past 30 years building his empire of clubs and businesses, has seen the game of rugby league decline dramatically over the past 10 years. He claims that the game has never been more popular than it is today. He is also a fervent supporter of the game and this makes his interest in rugby league even more compelling. Salford have been in dire straits and Koukash, who has ploughed much of his own money in the club, has promised to help them become one of the great clubs of the game. The club has all the right qualities and Koukash wants to make sure it happens. He has spent the past two weeks scouting for players to recruit and has already seen the arrival of two promising youngsters. The Reds need players to make them competitive} \\ \midrule
    {Paraphrase} & { He added: "I know him. I have no doubt about his intentions." It's a huge gamble for the self-made millionaire who has made his money in property in Dubai. He is happy to entrust the man who took Salford to seven Challenge Cup finals with his great plan. " He's got an incredible record of turning around people and businesses," Noble said. "He's done it in Qatar, in Dubai, in America, here." He's also one of the most ruthless businessmen in the game. Having arrived in Britain as a boy from Sudan with his parents, he's spent the past 30 years building an empire of clubs and businesses. He's also seen the game decline in popularity over the past decade, but insists it's now more popular than ever. He is a huge supporter of the game and that's why he's interested in the sport. Salford have been in crisis and he's promised to help them become one of the great clubs. The club has all the attributes and Koukash is determined to make it happen. He has spent the past fortnight looking at players and has already recruited two. He has to make the Reds a more competitive side and has already brought in a couple of players who have impressed.}  \\ \midrule \midrule
    Input     &  { She said he would not stop attacking and asked for help. She said she wanted "peace" but "not death". Henderson-McCarroll, of St Nicholas Drive, Newry, admitted manslaughter while in charge of a dangerous drug. She said her actions were a "blip in my mind" as a result of a "bad decision" to take drugs. Justice Treacy said he would not impose a custodial sentence on Henderson-McCarroll, but instead sentenced her to three years' imprisonment. The judge said he would not impose the maximum sentence for manslaughter given the circumstances, but felt he would not impose the minimum of two and a half years. He told Henderson-McCarroll: "He (Mr Girvan) would not be in his right mind if he would not have let his guard down. If there was one thing the jury should have heard - it was that your actions were a blip in my mind. You didn't intend to kill him. You were acting in self-defence. You poked him and your actions were a blip and a bit of a lapse in judgment." The judge said the maximum sentence for manslaughter given Henderson's previous convictions would have amounted to between five and seven years. The judge said it was "not an uncommon crime" to kill someone in self-defence. He said sentencing Henderson was an "ugly case of drug-induced madness." He added: "He (Mr Girvan) must have suffered terribly."} \\ \midrule
    {Paraphrase} & { She said he had not stopped attacking her and she called for help. She said she wanted "peace" but not death. Henderson-McCarroll, of St Nicholas Drive in Newry, admitted manslaughter while under the influence of a sedative. She said her actions were the result of a "mistake" after taking drugs. Mr Justice Treacy said he would not grant a suspended sentence, but instead would sentence her to three years in prison. He said he would not impose the maximum sentence for manslaughter, in the circumstances, but did not feel he should impose the minimum term of two and a half years. He said to the defendant: "You could not in your right mind have left your guard down, you did not intend to kill him. If there was one thing the jury ought to have heard, it was that your actions were a momentary lapse of reason. You acted in self defence, you poked him with a knife, your actions were momentary and a lapse of reason." The judge said that given the defendant's previous record the maximum sentence for manslaughter, with a minimum term of a year, would have been five to seven years. He said it was not an uncommon crime for someone to kill in self defence. " But in this case, it was an ugly case of drug induced madness. " He added: "Mr Girvan must have suffered horribly."}  \\ \midrule \bottomrule
    \end{tabular}
    \end{adjustbox}
    \vspace{0.2cm}
    \caption{Examples of paraphrased passage from the XSum dataset. The paraphrasing is performed using DIPPER \citep{krishna2023paraphrasing}.}
    \label{tab:para-examples-2}
\end{table}
\begin{table}[h!]
    \centering
    \begin{adjustbox}{totalheight=\textheight-6\baselineskip}
    \begin{tabular}{P{1.8cm}|M{14cm}}
    \toprule
    Description     &  \multicolumn{1}{P{14cm}}{Text} \\ \midrule \midrule
    Input     &  {In the year 2014-2015, Prison Link Cymru made 1,099 referrals, and said some offenders waited up to a year before finding suitable accommodation. Staff from the charitable organization say that the investment in housing would be cheaper than having to re-lock homeless offenders. The Welsh government said that more people than ever are getting help to deal with their housing problems. Changes to the Welsh Housing Act in 2015 removed priority rights for ex-offenders. Prison Link Cymru, which helps people to find accommodation after they have been released, said that women generally had a good time, as they had to deal with children and other problems.} \\ \midrule
    \texttt{pp1} & {In the year 2014-2015, Prison Link Cymru made 1,099 referrals, and said some offenders waited up to a year before finding suitable accommodation. Staff from the charitable organization say that the investment in housing would be cheaper than having to re-lock homeless offenders. The Welsh government said that more people than ever are getting help to deal with their housing problems. Changes to the Welsh Housing Act in 2015 removed priority rights for ex-offenders. Prison Link Cymru, which helps people to find accommodation after they have been released, said that women generally had a good time, as they had to deal with children and other problems.}  \\ \midrule
    \texttt{pp2} & {In 2015-16, Prison Link Cymru had 1,099 referrals and said some offenders were living rough for up to a year before finding suitable accommodation. Workers at the charity claim investment in housing would be cheaper than locking up homeless repeat offenders. The Welsh Government said more people than ever before were getting help to address housing problems. Prison Link Cymru, which helps people find accommodation after they have left prison, said things were generally good for women because issues such as children or domestic violence were often the main reasons they returned to prison.} \\ \midrule
    \texttt{pp3} & {In fact, it was one of the main reasons why they came back to prison. In the year 2015-16, Prison Link Cymru made 1,099 referrals and said that offenders had lived for up to a year before finding suitable accommodation. The workers at the charitable organization say that the investment in housing would be cheaper than re-locking homeless offenders. The government of Wales says that more people than ever are being helped to deal with their housing problems. Prison Link Cymru, which helps people to find accommodation after being released, says that women are generally in a good position, as children and other problems are often the main reason why they are returned to prison.} \\ \midrule
    \texttt{pp4} & {In the year to the end of March, Prison Link Cymru had 1,099 referrals and said offenders had been living rough for up to a year before finding suitable accommodation. Workers at the charity say investment in housing would be cheaper than re-imprisoning homeless repeat offenders. The Welsh Government says more people than ever before are getting help to address their housing problems. Prison Link Cymru, which helps people find accommodation after they have been released from prison, said generally things were good for women because issues such as children and other problems were often the main reasons they returned to prison.
} \\ \midrule
    \texttt{pp5} & {The government of Wales says that more people than ever before are being helped to deal with their housing problems. In the year 2015-16, Prison Link Cymru referred 1,099 people and said that homeless people had lived up to a year before finding suitable accommodation. The workers at the charitable organization say that the investment in housing would be cheaper than imprisoning homeless offenders again. Prison Link Cymru, which helps people find accommodation after being released, says that women are generally well, because children and other problems are often the main reason why they return to prison.} \\ \bottomrule
    \end{tabular}
    \end{adjustbox}
    \vspace{0.2cm}
    \caption{Example of a recursively paraphrased passage from the XSum dataset. The paraphrasing is performed using DIPPER \citep{krishna2023paraphrasing}. \texttt{ppi} refers to the output after \texttt{i} rounds of recursive paraphrasing.}
    \label{tab:rec-para-examples}
\end{table}

\begin{table}[h!]
    \centering
    \begin{adjustbox}{totalheight=\textheight-6\baselineskip}
    \begin{tabular}{P{1.8cm}|M{17cm}}
    \toprule
    Description     &  \multicolumn{1}{P{17cm}}{Text} \\ \midrule \midrule
    Input     &  { The final round of fixtures will be played in Belfast on Tuesday. Elsewhere, Ireland's U19 women face Wales on Saturday in their Euro U19 women qualifier. The Irish women are already assured of a third-placed finish and a place at next year's finals in Northern Ireland after wins over Bulgaria and Greece. If they win against Wales they will finish second and qualify for a play-off for the finals. The Irish U19 women face Wales in Aberystwyth on Saturday. The match kicks off at 1.15pm. The Irish U19 women are in excellent form having won their last four matches. The winning margins have been 4-0, 4-0, 3-0, 3-0 and 3-0. The Irish U19 women sit fourth in the qualifying group and could qualify for the finals if they beat Wales on Saturday. The Irish women are in impressive form having won their last four games and qualified for next year's finals. Saturday's game against Wales is one of the most important of the campaign as a win will ensure they finish second and qualify for a play-off for the finals. The Irish will come up against whoever comes top of the group - Denmark, Portugal, Germany and Sweden - with the second place finisher going into a play-off for a final spot. The Irish U19 women took down Bulgaria 3-0, won 3-0 away to Greece and hammered home five against a young Spanish team.} \\ \midrule
    \texttt{pp1} & {On Tuesday, the final round of matches will take place in Belfast. Meanwhile, Ireland's U19 women's team faces Wales in their Euro U19 women's qualifier on Saturday in Aberystwyth. The Irish are already assured of a third-placed finish and a spot in next year's finals after their wins over Bulgaria and Greece. A victory against Wales would see them finish second and qualify for a playoff for the finals. The Irish U19 women have been in impressive form, winning their last four matches by margins of 4-0, 4-0, 3-0, 3-0, and 3-0. They sit fourth in the qualifying group and could qualify for the finals if they beat Wales on Saturday. The Irish team will face the top finisher from the group - Denmark, Portugal, Germany, or Sweden - in a playoff for a final spot. The Irish U19 women have had a strong campaign so far, defeating Bulgaria 3-0, winning 3-0 away to Greece, and scoring five goals against a young Spanish team.}  \\ \midrule
    \texttt{pp2} & {On the eve of the final round of matches in Belfast, the Irish U19 women's team prepares to face Wales in their Euro U19 women's qualifier on Saturday in Aberystwyth. While Ireland has already secured a third-placed finish and a spot in next year's finals with their wins over Bulgaria and Greece, a victory against Wales would see them finish second and qualify for a playoff for the finals. The Irish U19 women have been in impressive form, winning their last four matches by margins of 4-0, 4-0, 3-0, 3-0, and 3-0. They currently sit fourth in the qualifying group and could potentially qualify for the finals if they beat Wales on Saturday. The team will face the top finisher from the group - Denmark, Portugal, Germany, or Sweden - in a playoff for a final spot. Ireland has had a strong campaign so far, defeating Bulgaria 3-0, winning 3-0 away to Greece, and scoring five goals against a young Spanish team.} \\ \midrule
    \texttt{pp3} & {On the eve of the final round of matches in Belfast, the Northern Irish U19 women's team prepares to face Wales in their Euro U19 women's qualifier on Saturday in Aberystwyth. While Northern Ireland has already secured a third-placed finish and a spot in next year's finals with their wins over Spain and Scotland, a victory against Wales would see them finish second and qualify for a playoff for the finals. The Northern Irish U19 women have been in impressive form, winning their last four matches by margins of 4-0, 4-0, 3-0, and 3-0, and currently sit fourth in the qualifying group. They could potentially qualify for the finals if they beat Wales on Saturday, and face the top finisher from the group - Denmark, Portugal, Germany, or Sweden - in a playoff for a final spot. Northern Ireland has had a strong campaign so far, defeating Spain 2-1, winning 3-0 away to Scotland, and scoring five goals against a young Irish team.} \\ \midrule
    \texttt{pp4} & {On the eve of the final round of matches in Northern Ireland, the local U19 women's team prepares to face Wales in their Euro U19 women's qualifier on Saturday in Cardiff. While Northern Ireland has already secured a third-placed finish and a spot in next year's finals with their recent wins, a victory against Wales would see them finish second and qualify for a playoff for the finals. The Northern Irish U19 women have been in impressive form, winning their last four matches by margins of 4-0, 4-0, 3-0, and 3-0, and currently sit fifth in the qualifying group. They could potentially qualify for the finals if they beat Wales on Saturday, and face the top finisher from the group - Denmark, Portugal, Germany, or Sweden - in a playoff for a final spot. Northern Ireland has had a strong campaign so far, defeating Spain 2-1, winning 3-0 away to Scotland, and scoring five goals against a young Irish team.
} \\ \midrule
    \texttt{pp5} & {On the eve of the crucial final match in Cardiff, the Northern Ireland U19 women's team prepares to face Wales in their Euro U19 women's qualifier on Saturday. With their recent string of victories, the Northern Irish have already secured a third-placed finish and a spot in next year's finals, but a win against Wales would see them finish second and qualify for a playoff for the finals. The team has been in impressive form, winning their last four matches by margins of 4-0, 4-0, 3-0, and 3-0, and currently sit fifth in the qualifying group. They could potentially qualify for the finals if they beat Wales on Saturday, and face the top finisher from the group - Denmark, Portugal, Germany, or Sweden - in a playoff for a final spot. Despite their strong campaign so far, the Northern Irish will face a tough challenge against Wales, who have also had a impressive run of form in the qualifiers.} \\ \bottomrule
    \end{tabular}
    \end{adjustbox}
    \vspace{0.2cm}
    \caption{Example of a recursively paraphrased passage from the XSum dataset. The paraphrasing is performed using LLaMA-2-7B-Chat \citep{llama}. \texttt{ppi} refers to the output after \texttt{i} rounds of recursive paraphrasing.}
    \label{tab:rec-para-examples-llama}
\end{table}

\clearpage
\section{Proofs and Corollaries}
\label{app:proof-corollary}

\subsection{Proof of Theorem~\ref{thm:ROC_bound}}
\label{app:proof_roc_bnd}
\begin{apptheorem}
The area under the ROC of any detector $D$ is bounded as
\[\mathsf{AUROC}(D) \leq \frac{1}{2} + \mathsf{TV}(\mathcal{M}, \mathcal{H}) - \frac{\mathsf{TV}(\mathcal{M}, \mathcal{H})^2}{2}.\]
\end{apptheorem}

\begin{proof}
The ROC is a plot between the true positive rate (TPR) and the false positive rate (FPR), which are defined as follows:
\begin{align*}
    \mathsf{TPR}_\gamma &= \mathbb{P}_{s \sim \mathcal{M}}[D(s) \geq \gamma]\\
    \text{and } \mathsf{FPR}_\gamma &= \mathbb{P}_{s \sim \mathcal{H}}[D(s) \geq \gamma],
\end{align*}
where $\gamma$ is some classifier parameter.
We can bound the difference between the $\mathsf{TPR}_\gamma$ and the $\mathsf{FPR}_\gamma$ by the total variation between $M$ and $H$:
\begin{align}
    |\mathsf{TPR}_\gamma - \mathsf{FPR}_\gamma| &= \left| \mathbb{P}_{s \sim \mathcal{M}}[D(s) \geq \gamma] - \mathbb{P}_{s \sim \mathcal{H}}[D(s) \geq \gamma] \right| \leq \mathsf{TV}(\mathcal{M}, \mathcal{H})
    \label{eq:TPR_FPR_TV_bound}\\
    \mathsf{TPR}_\gamma &\leq \mathsf{FPR}_\gamma + \mathsf{TV}(\mathcal{M}, \mathcal{H}).
    \label{eq:TPR_FPR_bound}
\end{align}
Since the $\mathsf{TPR}_\gamma$ is also bounded by 1 we have:
\begin{align}
\label{eq:tpr_bound}
\mathsf{TPR}_\gamma \leq \min(\mathsf{FPR}_\gamma + \mathsf{TV}(\mathcal{M}, \mathcal{H}), 1).
\end{align}
Denoting $\mathsf{FPR}_\gamma$, $\mathsf{TPR}_\gamma$, and $\mathsf{TV}(\mathcal{M}, \mathcal{H})$ with $x$, $y$, and $tv$ for brevity, we bound the AUROC as follows:
\allowdisplaybreaks
\begin{align*}
    \mathsf{AUROC}(D) = \int_0^1 y \; dx &\leq \int_0^1 \min(x + tv, 1) dx\\
    &= \int_0^{1 -tv} (x + tv) dx + \int_{1-tv}^1 dx\\
    &= \left| \frac{x^2}{2} + tvx \right|_0^{1-tv} + \left| x \right|_{1-tv}^1\\
    &= \frac{(1-tv)^2}{2} + tv(1-tv) + tv\\
    &= \frac{1}{2} + \frac{tv^2}{2} - tv + tv - tv^2 + tv\\
    &= \frac{1}{2} + tv - \frac{tv^2}{2}.
\end{align*}
\end{proof}

\subsection{General Trade-offs For Detection}
\label{app:corollaries}

\textbf{Paraphrasing to Evade Detection.} Although our analysis considers general distributions, it can also be applied to specific scenarios, such as particular writing styles or sentence paraphrasing, by defining $\mathcal{M}$ and $\mathcal{H}$ appropriately.
For example, $\mathcal{M}$ can be the outputs from an LLM trained to mimic a particular set of people, or $\mathcal{H}$ can be the text distribution of a specific person.
% For example, it could be used to show that AI-generated text, even with watermarks, can be made difficult to detect by simply passing it through a paraphrasing tool.
Similarly, Corollary \ref{corollary:rephrase} shows that if a paraphraser's goal is to lower the TV between paraphrased AI text and human text, then detection gets harder.

% Consider a paraphraser that takes a sequence $s$ generated by an AI model as input and produces a human-like sequence with similar meaning.
Set $\mathcal{M} = \mathcal{R_M}(s)$ and $\mathcal{H} = \mathcal{R_H}(s)$ to be the distribution of sequences with similar meanings to $s$ produced by the paraphraser and humans, respectively.
% The goal of the paraphraser is to make its distribution $\mathcal{R_M}(s)$ as similar to the human distribution $\mathcal{R_H}(s)$ as possible, essentially reducing the total variation distance between them.
% Theorem~\ref{thm:ROC_bound} puts the following bound on the performance of a detector $D$ that seeks to detect the outputs of the paraphraser from the sequences produced by humans.

\begin{corollary}
\label{corollary:rephrase}
The area under the ROC of the detector $D$ is bounded as
\[\mathsf{AUROC}(D) \leq \frac{1}{2} + \mathsf{TV}(\mathcal{R_M}(s), \mathcal{R_H}(s)) - \frac{\mathsf{TV}(\mathcal{R_M}(s), \mathcal{R_H}(s))^2}{2}.\]
\end{corollary}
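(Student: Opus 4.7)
The plan is to observe that this corollary follows by a direct instantiation of Theorem~\ref{thm:ROC_bound}, with the general distributions $\mathcal{M}$ and $\mathcal{H}$ replaced by the conditional distributions $\mathcal{R_M}(s)$ and $\mathcal{R_H}(s)$ over sequences of similar meaning to $s$. Since Theorem~\ref{thm:ROC_bound} was established for an arbitrary pair of distributions on a common sequence space $\Omega$ and an arbitrary scoring function $D : \Omega \to \mathbb{R}$, so long as the paraphraser output distribution and the human paraphrase distribution live on the same space, the bound transfers verbatim.

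Concretely, I would first fix the source sequence $s$ and treat it as a parameter, then define the detector's rates under the two paraphrase distributions:
\begin{align*}
\mathsf{TPR}_\gamma &= \mathbb{P}_{s' \sim \mathcal{R_M}(s)}[D(s') \geq \gamma],\\
\mathsf{FPR}_\gamma &= \mathbb{P}_{s' \sim \mathcal{R_H}(s)}[D(s') \geq \gamma].
\end{align*}
Writing $tv := \mathsf{TV}(\mathcal{R_M}(s), \mathcal{R_H}(s))$, the characterization of total variation as the supremum of probability differences over measurable events yields $|\mathsf{TPR}_\gamma - \mathsf{FPR}_\gamma| \leq tv$, and combined with the trivial bound $\mathsf{TPR}_\gamma \leq 1$ this recovers the pointwise inequality $\mathsf{TPR}_\gamma \leq \min(\mathsf{FPR}_\gamma + tv, 1)$ that drives the integration step in the original proof.

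The last step is simply to integrate this pointwise bound against $\mathsf{FPR}_\gamma$ over $[0,1]$, which is the exact calculation already carried out in the proof of Theorem~\ref{thm:ROC_bound} and produces $\tfrac{1}{2} + tv - \tfrac{tv^2}{2}$. There is essentially no real obstacle: the proof is a one-line reduction. The only thing that needs to be checked is that $\mathcal{R_M}(s)$ and $\mathcal{R_H}(s)$ are bona fide probability distributions on a common measurable space, so that both the total variation distance and the thresholded detector behave as required; this is already built into the setup of the corollary, and so no further work is needed.
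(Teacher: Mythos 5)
Your proposal is correct and matches the paper's approach exactly: the corollary is obtained by instantiating Theorem~\ref{thm:ROC_bound} with $\mathcal{M} = \mathcal{R_M}(s)$ and $\mathcal{H} = \mathcal{R_H}(s)$, and the paper treats this as an immediate consequence requiring no new argument. Your additional check that the two paraphrase distributions live on a common sequence space is the right (and only) hypothesis to verify.
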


Another way to understand the limitations of AI-generated text detectors is directly through the characterization of the trade-offs between true positive rates and false positive rates. Adapting inequality \ref{eq:TPR_FPR_bound}, we have the following corollaries:

\begin{corollary}
\label{corollary:rephrasing_wm}
For any watermarking scheme $W$,
\begin{align*}
    \Pr_{s_w\sim \mathcal{R}_{\mathcal{M}}(s)} [\text{$s_w$ is watermarked using $W$}] \leq &
    \Pr_{s_w\sim \mathcal{R}_{\mathcal{H}}(s)}[\text{$s_w$ is watermarked using $W$}] \\
    & + \mathsf{TV}(\mathcal{R_M}(s), \mathcal{R_H}(s)),
\end{align*}
where $\mathcal{R_M}(s)$ and $\mathcal{R_H}(s)$ are the distributions of rephrased sequences for $s$ produced by the paraphrasing model and humans, respectively.
\end{corollary}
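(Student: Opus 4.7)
The plan is to observe that any watermarking scheme $W$ induces a binary test/event on sequences, namely the indicator of "$s_w$ is watermarked using $W$," and so the claim is essentially inequality \eqref{eq:TPR_FPR_bound} from the proof of Theorem~\ref{thm:ROC_bound} applied with the two relevant distributions. I would therefore frame the argument as a direct corollary rather than re-prove the ROC machinery.

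First, I would define $A_W \subseteq \Omega$ to be the (measurable) set of sequences that the watermarking scheme $W$ flags as watermarked, so that the event "$s_w$ is watermarked using $W$" is exactly $\{s_w \in A_W\}$. Viewing the indicator $\mathbf{1}[s_w \in A_W]$ as a detector $D$ with threshold $\gamma = 1/2$, the probability on the left-hand side is simply $\mathbb{P}_{s_w \sim \mathcal{R_M}(s)}[D(s_w) \geq \gamma]$, which plays the role of $\mathsf{TPR}_\gamma$, and the probability on the right-hand side is $\mathbb{P}_{s_w \sim \mathcal{R_H}(s)}[D(s_w) \geq \gamma]$, playing the role of $\mathsf{FPR}_\gamma$, but with $\mathcal{M}$ and $\mathcal{H}$ in Theorem~\ref{thm:ROC_bound} replaced by $\mathcal{R_M}(s)$ and $\mathcal{R_H}(s)$.

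Second, I would invoke inequality \eqref{eq:TPR_FPR_bound} (or, equivalently, the variational characterization $\mathsf{TV}(P,Q) = \sup_{A} |P(A) - Q(A)|$) to conclude
\[ \mathbb{P}_{s_w \sim \mathcal{R_M}(s)}[s_w \in A_W] \;\leq\; \mathbb{P}_{s_w \sim \mathcal{R_H}(s)}[s_w \in A_W] + \mathsf{TV}(\mathcal{R_M}(s), \mathcal{R_H}(s)), \]
which is exactly the statement of the corollary.

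There is no genuine mathematical obstacle here; the only subtlety worth mentioning is a modeling one, namely justifying that $A_W$ is a well-defined measurable subset of $\Omega$ for arbitrary watermarking schemes $W$ (including potentially randomized schemes, which one can handle by conditioning on the scheme's randomness and then averaging, so that the bound still holds pointwise in the randomness and hence in expectation). Once that observation is in place, the corollary follows immediately from the proof of Theorem~\ref{thm:ROC_bound}, so I would keep the write-up to a few lines.
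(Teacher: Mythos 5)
Your proposal is correct and matches the paper's own (implicit) argument: the paper derives this corollary by ``adapting inequality \eqref{eq:TPR_FPR_bound}'', i.e., applying the total-variation bound $|\mathbb{P}_{P}[A]-\mathbb{P}_{Q}[A]|\le \mathsf{TV}(P,Q)$ to the watermark-detection event with $P=\mathcal{R_M}(s)$ and $Q=\mathcal{R_H}(s)$, exactly as you do. Your remark on measurability and randomized schemes is a reasonable extra precaution but not something the paper addresses.
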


Humans may have different writing styles. Corollary \ref{corollary:rephrasing_wm} indicates that if a rephrasing model resembles certain human text distribution $\mathcal{H}$ (i.e. $\mathsf{TV}(\mathcal{R_M}(s), \mathcal{R_H}(s))$ is small), then either certain people's writing will be detected falsely as watermarked (i.e. $\Pr_{s_w\sim \mathcal{R}_{\mathcal{H}}(s)} [\text{$s_w$ is watermarked using $W$}]$ is high) or the paraphrasing model can remove the watermark (i.e. $\Pr_{s_w\sim \mathcal{R}_{\mathcal{M}}(s)} [\text{$s_w$ is watermarked using $W$}]$ is low).

\begin{corollary}
\label{corollary:rephrasing_no_wm}
For any AI-text detector $D$, 
\begin{align*}
    \Pr_{s\sim \mathcal{M}} [\text{$s$ is detected as AI-text by $D$}] \leq \Pr_{s\sim \mathcal{H}}[\text{$s$ is detected as AI-text by $D$}] + \mathsf{TV}(\mathcal{M}, \mathcal{H}),
\end{align*}
where $\mathcal{M}$ and $\mathcal{H}$ denote text distributions by the model and by humans, respectively.
\end{corollary}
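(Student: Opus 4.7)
The plan is to recognize this corollary as an immediate re-use of the total-variation inequality already established in the proof of Theorem~\ref{thm:ROC_bound}. Specifically, the standard characterization of total variation distance gives, for any measurable event $A \subseteq \Omega$,
\[\left| \Pr_{s \sim \mathcal{M}}[s \in A] - \Pr_{s \sim \mathcal{H}}[s \in A] \right| \leq \mathsf{TV}(\mathcal{M}, \mathcal{H}).\]
So the entire task reduces to choosing the right event $A$ and dropping the absolute value on the correct side.

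First I would instantiate $A$ as the set $A_D = \{s \in \Omega : D \text{ classifies } s \text{ as AI-text}\}$. In the thresholded-score formulation used in the theorem, this is $A_D = \{s : D(s) \geq \gamma\}$ for the operating threshold $\gamma$ of the detector, but the argument goes through for any deterministic or randomized classification rule, since we only need $A_D$ to be a well-defined event on $\Omega$. Then $\Pr_{s \sim \mathcal{M}}[s \in A_D]$ is exactly the left-hand side of the claimed bound and $\Pr_{s \sim \mathcal{H}}[s \in A_D]$ is the second term on the right-hand side.

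Next I would apply the TV inequality above to $A_D$ and discard the absolute value by keeping only the one-sided bound
\[\Pr_{s \sim \mathcal{M}}[s \in A_D] - \Pr_{s \sim \mathcal{H}}[s \in A_D] \leq \mathsf{TV}(\mathcal{M}, \mathcal{H}),\]
which is precisely inequality~(\ref{eq:TPR_FPR_bound}) specialized to this event. Rearranging yields the corollary. No case analysis, integration, or choice of $\gamma$ is needed because we are bounding a single probability rather than integrating along the ROC curve as in Theorem~\ref{thm:ROC_bound}.

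There is no genuine obstacle here; the only thing to be careful about is the measurability/well-definedness of $A_D$ for a general ``detector,'' which I would handle by stating upfront that $D$ induces a measurable decision region on $\Omega$ (any score-plus-threshold, randomized, or even adaptive-with-shared-randomness detector fits this framework by enlarging $\Omega$ to include the auxiliary randomness). Once that is granted, the corollary is an immediate one-line consequence of the TV definition, and it can be presented as such.
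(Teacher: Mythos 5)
Your proposal is correct and matches the paper's own (implicit) argument: the corollary is obtained exactly by applying the one-sided form of inequality~\ref{eq:TPR_FPR_bound}, i.e., the total-variation bound, to the detector's decision region. Nothing further is needed.
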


Corollary \ref{corollary:rephrasing_no_wm} indicates that if a model resembles certain human text distribution $\mathcal{H}$ (i.e. $\mathsf{TV}(\mathcal{M}, \mathcal{H})$ is small), then either certain people's writing will be detected falsely as AI-generated (i.e. $\Pr_{s\sim \mathcal{H}} [\text{$s$ is detected as AI-text by $D$}]$ is high) or the AI-generated text will not be detected reliably (i.e. $\Pr_{s\sim \mathcal{M}} [\text{$s$ is detected as AI-text by $D$}]$ is low).

A recent work \citep{chakraborty2023possibilities} shows a trade-off on the detection problem with respect to the availability of the number of data samples for detection. They show a TV upper bound for the detector's AUROC using an information theoretic approach. However, the underlying assumption of their result is that several {\it independent} samples are available to the detector from either human or text distribution, which might not be a practical assumption since sentences in a document are often correlated with each other. Also, a large number of data samples need not be available for pragmatic scenarios. For example, it may not be practical for a text detector to ask a student to write multiple essays for an assignment or to assume that a Twitter bot would publish longer tweets that are completely written by the AI without any human intervention.

\subsection{Tightness Analysis for Theorem~\ref{thm:ROC_bound}}
\label{app:tightness}
In this section, we show that the bound in Theorem~\ref{thm:ROC_bound} is tight.
For a given distribution of human-generated text sequences $\mathcal{H}$, we construct an AI-text distribution $\mathcal{M}$ and a detector $D$ such that the bound holds with equality.
Define sublevel sets of the probability density function of the distribution of human-generated text $\mathsf{pdf}_\mathcal{H}$ over the set of all sequences $\Omega$ as follows:
\[\Omega_\mathcal{H}(c) = \{s \in \Omega \mid \mathsf{pdf}_\mathcal{H}(s) \leq c\}\]
where $c \in \mathbb{R}$.
Assume that, $\Omega_\mathcal{H}(0)$ is not empty.
Now, consider a distribution $\mathcal{M}$, with density function $\mathsf{pdf}_\mathcal{M}$, which has the following properties:
\begin{enumerate}
    \item The probability of a sequence drawn from $\mathcal{M}$ falling in $\Omega_\mathcal{H}(0)$ is $\mathsf{TV}(\mathcal{M}, \mathcal{H})$, i.e., $\mathbb{P}_{s \sim \mathcal{M}}[s \in \Omega_\mathcal{H}(0)] = \mathsf{TV}(\mathcal{M}, \mathcal{H})$.
    \item $\mathsf{pdf}_\mathcal{M}(s) = \mathsf{pdf}_\mathcal{H}(s)$ for all $s \in \Omega(\tau) - \Omega(0)$ where $\tau > 0$ such that $\mathbb{P}_{s \sim \mathcal{H}}[ s \in \Omega(\tau)] = 1 - \mathsf{TV}(\mathcal{M}, \mathcal{H})$.
    \item $\mathsf{pdf}_\mathcal{M}(s) = 0$ for all $s \in \Omega - \Omega(\tau)$.
\end{enumerate}
Define a hypothetical detector $D$ that maps each sequence in $\Omega$ to the negative of the probability density function of $\mathcal{H}$, i.e., $D(s) = - \mathsf{pdf}_\mathcal{H}(s)$.
Using the definitions of $\mathsf{TPR}_\gamma$ and $\mathsf{FPR}_\gamma$, we have:
\begin{align*}
    \mathsf{TPR}_\gamma &= \mathbb{P}_{s \sim \mathcal{M}}[D(s) \geq \gamma]\\
    &= \mathbb{P}_{s \sim \mathcal{M}}[- \mathsf{pdf}_\mathcal{H}(s) \geq \gamma]\\
    &= \mathbb{P}_{s \sim \mathcal{M}}[\mathsf{pdf}_\mathcal{H}(s) \leq -\gamma]\\
    &= \mathbb{P}_{s \sim \mathcal{M}}[ s \in \Omega_\mathcal{H}(-\gamma)]
\end{align*}
Similarly,
\[\mathsf{FPR}_\gamma = \mathbb{P}_{s \sim \mathcal{H}}[ s \in \Omega_\mathcal{H}(-\gamma)].\]
For $\gamma \in [-\tau, 0]$,
\begin{align*}
    \mathsf{TPR}_\gamma &= \mathbb{P}_{s \sim \mathcal{M}}[ s \in \Omega_\mathcal{H}(-\gamma)]\\
    &= \mathbb{P}_{s \sim \mathcal{M}}[ s \in \Omega_\mathcal{H}(0)] + \mathbb{P}_{s \sim \mathcal{M}}[ s \in \Omega_\mathcal{H}(-\gamma) - \Omega_\mathcal{H}(0)]\\
    &= \mathsf{TV}(\mathcal{M}, \mathcal{H}) + \mathbb{P}_{s \sim \mathcal{M}}[ s \in \Omega_\mathcal{H}(-\gamma) - \Omega_\mathcal{H}(0)] \tag{using property 1}\\
    &= \mathsf{TV}(\mathcal{M}, \mathcal{H}) + \mathbb{P}_{s \sim \mathcal{H}}[ s \in \Omega_\mathcal{H}(-\gamma) - \Omega_\mathcal{H}(0)] \tag{using property 2}\\
    &= \mathsf{TV}(\mathcal{M}, \mathcal{H}) + \mathbb{P}_{s \sim \mathcal{H}}[ s \in \Omega_\mathcal{H}(-\gamma)] - \mathbb{P}_{s \sim \mathcal{H}}[s \in \Omega_\mathcal{H}(0)] \tag{$\Omega_\mathcal{H}(0) \subseteq \Omega_\mathcal{H}(-\gamma)$}\\
    &= \mathsf{TV}(\mathcal{M}, \mathcal{H}) + \mathsf{FPR}_\gamma. \tag{$\mathbb{P}_{s \sim \mathcal{H}}[s \in \Omega_\mathcal{H}(0)] = 0$}
\end{align*}
For $\gamma \in [-\infty, -\tau]$, $\mathsf{TPR}_\gamma = 1$, by property 3.
Also, as $\gamma$ goes from $0$ to $-\infty$, $\mathsf{FPR}_\gamma$ goes from $0$ to $1$.
Therefore, $\mathsf{TPR}_\gamma = \min(\mathsf{FPR}_\gamma + \mathsf{TV}(\mathcal{M}, \mathcal{H}), 1)$ which is similar to Equation~\ref{eq:tpr_bound}.
Calculating the AUROC in a similar fashion as in the previous section, we get the following:
\[\mathsf{AUROC}(D) = \frac{1}{2} + \mathsf{TV}(\mathcal{M}, \mathcal{H}) - \frac{\mathsf{TV}(\mathcal{M}, \mathcal{H})^2}{2}.\]

%%%%%%%%%%%%%%%%%%%%%%%%%%%%%%%%%%
\subsection{Pseudorandomness in LLMs }
\label{app:PRG_bound}
Most machine learning models, including LLMs, use pseudorandom number generators in one form or another to produce their outputs.
For example, an LLM may use a pseudorandom number generator to sample the next token in the output sequence.
In discussing our hardness result, \cite{kirchenbauer2023reliability} in a more recent work argue that this pseudorandomness makes the AI-generated text distribution very different from the human-generated text distribution.
This is because the pseudorandom AI-generated distribution is a collection of Dirac delta function distributions, and a human is exorbitantly unlikely to produce a sample corresponding to any of the delta functions.
In our framework, this means that the TV between the human and pseudorandom AI-generated distributions is almost one, making the bound in Theorem~\ref{thm:ROC_bound} vacuous.

We argue that although the true TV between the human and pseudorandom AI-generated distributions is high and there exists (in theory) a detector function that can separate the distributions almost perfectly, this function may not be efficiently computable.
Any polynomial-time computable detector can only achieve a negligible advantage from the use of pseudorandomness instead of true randomness.
If we had knowledge of the seed used for the pseudorandom number generator, we would be able to predict the pseudorandom samples.
However, an individual seeking to evade detection could simply randomize this seed making it computationally infeasible to predict the samples.
%the pseudorandom samples infeasible to predict.

We modify the bound in Theorem~\ref{thm:ROC_bound} to include a negligible correction term $\epsilon$ to account for the use of pseudorandomness.
We prove that the performance of a polynomial-time computable detector $D$ on a pseudorandom version of the AI-generated distribution $\widehat{\mathcal{M}}$ is bounded by the total variation for the truly random distribution $\mathcal{M}$ (resulting from the LLM using true randomness) as follows:
\[\mathsf{AUROC}(D) \leq \frac{1}{2} + \mathsf{TV}(\mathcal{M}, \mathcal{H}) - \frac{\mathsf{TV}(\mathcal{M}, \mathcal{H})^2}{2} + \epsilon.\]
The term $\epsilon$ represents the gap between the probabilities assigned by $\mathcal{M}$ and $\widehat{\mathcal{M}}$ to any polynomial-time computable $\{0, 1\}$-function $f$, i.e.,
\begin{equation}
\label{eq:eps_adv}
\big|\mathbb{P}_{s \in \mathcal{M}}[f(s) = 1] - \mathbb{P}_{s \in \widehat{\mathcal{M}}}[f(s) = 1]\big| \leq \epsilon.
\end{equation}
This term is orders of magnitude smaller than any of the terms in the bound and can be safely ignored.
For example, commonly used pseudorandom generators\footnote{Cryptographic PRNGs:\\\url{https://en.wikipedia.org/wiki/Pseudorandom_number_generator}} can achieve an $\epsilon$ that is bounded by a negligible function $1/b^t$ of the number of bits $b$ used in the seed of the generator for a positive integer $t$\footnote{Negligible function: \url{https://en.wikipedia.org/wiki/Negligible_function}} \citep{bbs_prg, blum_micali_prg}.
From a computational point of view, the TV for the pseudorandom distribution is almost the same as the truly random AI-generated distribution.
Thus, our framework provides a reasonable approximation for real-world LLMs, and the hardness result holds even in the presence of pseudorandomness.

\textbf{Computational Total Variation Distance.} Just as the total variation distance $\tv$ between two probability distributions is defined as the difference in probabilities assigned by the two distributions to any $\{0, 1\}$-function, we define a computational version of this distance $\tv_c$ for polynomial-time computable functions:
\[\tv_c(A, B) = \max_{f \in \mathcal{P}} \big|\mathbb{P}_{s \sim A}[f(s) = 1] - \mathbb{P}_{s \sim B}[f(s) = 1]\big|,\]
where $\mathcal{P}$ represents the set of polynomial-time computable $\{0, 1\}$-functions.
$\mathcal{P}$ could also be defined as the set of all polynomial-size circuits which could be more appropriate for deep neural network-based detectors.
The function $f$ could be thought of as the indicator function for the detection parameter being above a certain threshold, i.e., $D(s) \geq \gamma$ as in the proof of Theorem~\ref{thm:ROC_bound}.
The following lemma holds for the performance of a polynomial-time detector $D$:
\begin{lemma}
\label{lem:pseudo_bnd}
The area under the ROC of any polynomial-time computable detector $D$ is bounded as
\[\mathsf{AUROC}(D) \leq \frac{1}{2} + \tv_c(\widehat{\mathcal{M}}, \mathcal{H}) - \frac{\tv_c(\widehat{\mathcal{M}}, \mathcal{H})^2}{2}.\]
\end{lemma}
This lemma can be proved in the same way as Theorem~\ref{thm:ROC_bound} by replacing the truly random AI-generated distribution $\mathcal{M}$ with its pseudorandom version $\widehat{\mathcal{M}}$ and the true total variation $\tv$ with its computaional variant $\tv_c$.

Next, we relate the computational total variation $\tv_c$ between $\mathcal{H}$ and the pseudorandom distribution $\widehat{\mathcal{M}}$ with the total variation $\tv$ between $\mathcal{H}$ and the truly random distribution $\mathcal{M}$.
\begin{lemma}
\label{lem:tv_relation}
For human distribution $\mathcal{H}$, truly random AI-generated distribution $\mathcal{M}$ and its pseudorandom version $\widehat{\mathcal{M}}$,
    \[\tv_c(\widehat{\mathcal{M}}, \mathcal{H}) \leq \tv(\mathcal{M}, \mathcal{H}) + \epsilon.\]
\end{lemma}
\begin{proof}
    \begin{align*}
    \tv_c(\widehat{\mathcal{M}}, \mathcal{H}) &= \max_{f \in \mathcal{P}} \big|\mathbb{P}_{s \sim \mathcal{H}}[f(s) = 1] - \mathbb{P}_{s \sim \widehat{\mathcal{M}}}[f(s) = 1]\big| \tag{from definition of $\tv_c$}\\
    &= \max_{f \in \mathcal{P}} \big|\mathbb{P}_{s \sim \mathcal{H}}[f(s) = 1] - \mathbb{P}_{s \sim \mathcal{M}}[f(s) = 1]\\
    &\quad \quad \quad + \mathbb{P}_{s \sim \mathcal{M}}[f(s) = 1] - \mathbb{P}_{s \sim \widehat{\mathcal{M}}}[f(s) = 1]\big| \tag{+/-ing $\mathbb{P}_{s \sim \mathcal{M}}[f(s) = 1]$}\\
    &\leq \max_{f \in \mathcal{P}} \big|\mathbb{P}_{s \sim \mathcal{H}}[f(s) = 1] - \mathbb{P}_{s \sim \mathcal{M}}[f(s) = 1]\big| \\
    &\quad \quad \quad + \big|\mathbb{P}_{s \sim \mathcal{M}}[f(s) = 1] - \mathbb{P}_{s \sim \widehat{\mathcal{M}}}[f(s) = 1]\big| \tag{using $|a + b| \leq |a| + |b|$}\\
    &\leq \tv(\mathcal{M}, \mathcal{H}) + \epsilon. \tag{from definition of $\tv$ and bound~\ref{eq:eps_adv}}
    \end{align*}
\end{proof}
% This shows that although the true total variation may be high due to pseudorandomness, the effective total variation is still low.
We now use this to prove the modified version of our computational hardness result.

\begin{theorem}[\textbf{Computational Hardness Result}] \label{thm:ROC_bound_computational}
The AUROC of any polynomial-time computable detector $D$ for $\mathcal{H}$ and the pseudorandom distribution $\widehat{\mathcal{M}}$ is bounded using the $\tv$ for the truly random distribution $\mathcal{M}$ as
    \[\mathsf{AUROC}(D) \leq \frac{1}{2} + \mathsf{TV}(\mathcal{M}, \mathcal{H}) - \frac{\mathsf{TV}(\mathcal{M}, \mathcal{H})^2}{2} + \epsilon.\]
\end{theorem}
\begin{proof}
    \begin{align*}
        \mathsf{AUROC}(D) &\leq \frac{1}{2} + \tv_c(\widehat{\mathcal{M}}, \mathcal{H}) - \frac{\tv_c(\widehat{\mathcal{M}}, \mathcal{H})^2}{2} \tag{from Lemma~\ref{lem:pseudo_bnd}}\\
        &\leq \frac{1}{2} + \tv(\mathcal{M}, \mathcal{H}) + \epsilon - \frac{\left(\tv(\mathcal{M}, \mathcal{H}\right) + \epsilon)^2}{2} \tag{from Lemma~\ref{lem:tv_relation} and since $\frac{1}{2} + x -\frac{x^2}{2}$ is increasing in $[0,1]$}\\
        &= \frac{1}{2} + \tv(\mathcal{M}, \mathcal{H}) + \epsilon - \frac{\tv(\mathcal{M}, \mathcal{H})^2 + \epsilon^2 + 2\epsilon\tv(\mathcal{M}, \mathcal{H})}{2}\\
        &\leq \frac{1}{2} + \mathsf{TV}(\mathcal{M}, \mathcal{H}) - \frac{\mathsf{TV}(\mathcal{M}, \mathcal{H})^2}{2} + \epsilon. \tag{dropping negative terms containing $\epsilon$}
    \end{align*}
\end{proof}

\subsection{Estimating TV Distance}
\label{app:estimating_tv}

In \S\ref{sec:impossibilityresult}, we show experiments supporting the assumption that more advanced LLMs lead to smaller TV distance between human and machine text distributions. We present two experimental settings --- (i) Using synthetic text data and (ii) Using projection. In Figure~\ref{fig:tv_estimate_4i}, we show the TV distances computed with varying vocabulary sizes and sequence lengths. In all the experiments, we consistently find that the TV distances reduce as the network size increases.

\begin{figure}[t]
\centering
\begin{subfigure}{.495\textwidth}
  \centering
  \adjustimage{width=1\linewidth, left}{images/rebuttal_lstm_vocab.png}
  \vspace{-2mm}
  \caption{Varying vocabulary size}
  \label{fig:tv_estimate_4i_vocab}
\end{subfigure}% 
\hfill
\begin{subfigure}{.495\textwidth}
  \centering
  \adjustimage{width=1\linewidth, right}{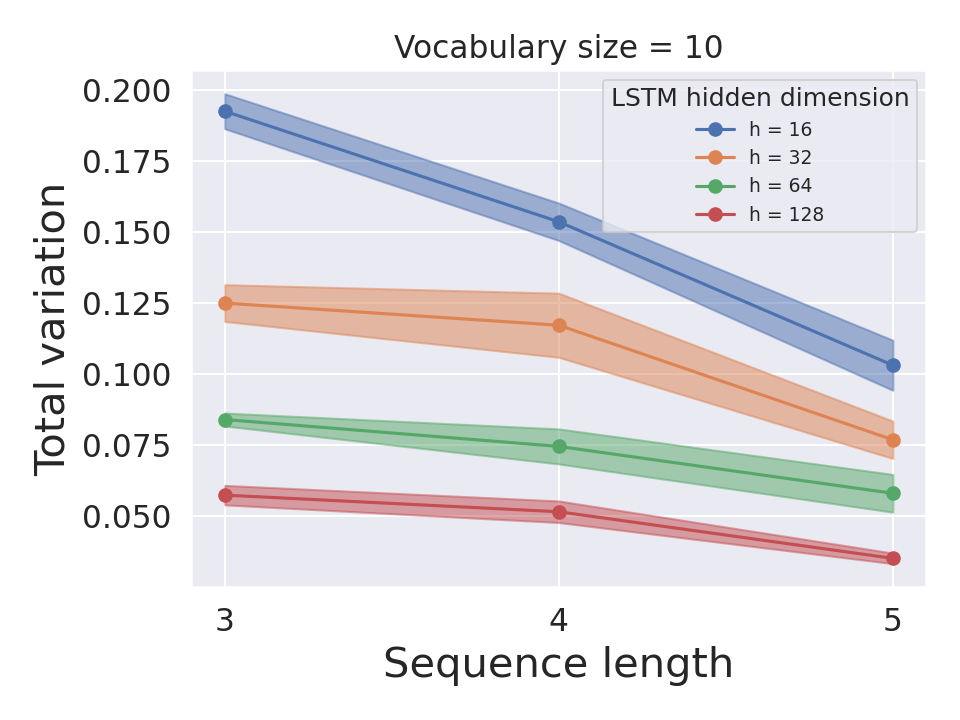}
  \vspace{-2mm}
  \caption{Varying sequence length}
  \label{fig:tv_estimate_4i_seqlen}
\end{subfigure}
% \vspace{-2mm}
\caption{TV distances between synthetic toy data distributions and LSTM model generation distributions. TV distances are computed for multiple settings, varying the vocabulary size and sequence length of the training dataset and varying the size of the LSTM network used for training.}
\label{fig:tv_estimate_4i}
\vspace{-0mm}
\end{figure}

\section{More Details on Spoofing}
\label{app:spoofing}

\subsection{Soft Watermark Detector}

%\begin{figure}[h]
%    \centering
%    \includegraphics[width=0.6\linewidth,]{images/wm_spoof.png}
%  \caption{ROC curve of a soft watermarking-based detector \citep{kirchenbauer2023watermark} after our spoofing attack.}
%  \vspace{-3.3mm}
%    \label{fig:watermark-roc-spoof}
%\end{figure}

\begin{figure}[h]
    \centering
    \includegraphics[width=1\linewidth]{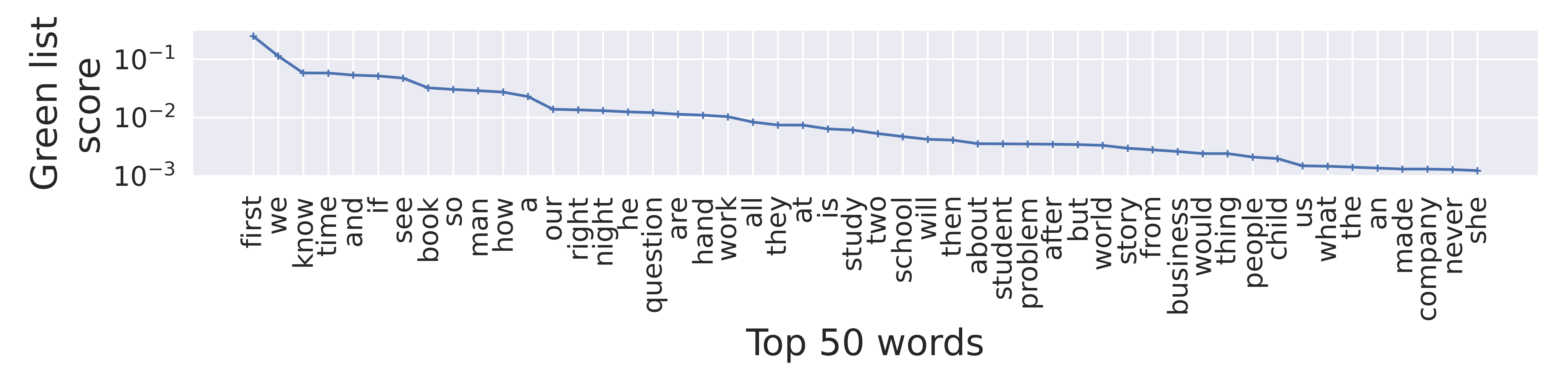}
    % \vspace{-6mm}
    \caption{Inferred {\it green list score} for the token ``the''. The plot shows the top 50 words from our set of common words that are likely to be in the green list. The word ``first'' occurred $\sim 25\%$ of the time as suffix to ``the''.}
    \vspace{-3.3mm}
    \label{fig:the}
\end{figure}

In \citet{kirchenbauer2023watermark}, they
watermark LLM outputs by asserting the model to output tokens with some specific pattern that can be easily detected with meager error rates. Soft watermarked texts are majorly composed of {\it green list} tokens. If an adversary can learn the green lists for the soft watermarking scheme, they can use this information to generate human-written texts that are detected to be watermarked. Our experiments show that the soft watermarking scheme can be spoofed efficiently. Though the soft watermarking detector can detect the presence of a watermark very accurately, it cannot be certain if this pattern is actually generated by a human or an LLM.  An {\it adversarial human} can compose derogatory watermarked texts in this fashion that are detected to be watermarked, which might cause reputational damage to the developers of the watermarked LLM. Therefore, it is important to study {\it spoofing attacks} to avoid such scenarios.

% \textbf{The attack methodology:} 
% For an output word $s^{(t)}$, soft watermarking samples a word from its green list with high probability. 
In watermarking, the prefix word $s^{(t-1)}$ determines the green list for selecting the word $s^{(t)}$. The attacker's objective is to compute a proxy of green lists for the $N$ most commonly used words in the vocabulary.
% A smaller $N$, when compared to the size of the vocabulary, helps faster computations with a trade-off in the attacker's knowledge of the watermarking scheme. 
We use a small value of $N=181$ for our experiments. 
The attacker queries the watermarked OPT-1.3B \cite{opt} $10^6$ times to observe pair-wise token occurrences in its output to estimate {\it green list score} for the $N$ tokens.  We find that inputting nonsense sentences composed of the $N$ common words encourages the LLM to output text mostly composed of these words. This makes the querying more efficient. A token with a high green list score for a prefix $s^{(t)}$ might be in its green list (see Figure \ref{fig:the}). We build a tool that helps adversarial humans create watermarked sentences by providing them with proxy green list. In this manner, we can spoof watermarking models easily. See Table \ref{tab:adversarialhuman} for example sentences created by an adversarial human.
Figure \ref{fig:watermark-roc-spoof} shows that the performance of watermark-based detectors degrades significantly in the presence of paraphrasing and spoofing attacks, showing that they are not reliable.

\begin{table}[h]
\small
    \centering
    \begin{tabular}{M{7cm} | P{1.6cm} | P{1cm} | P{2.0cm} }
    \toprule
        \multicolumn{1}{P{7cm}|}{Human text} & $\%$ tokens in green list & z-score & Detector output \\ \midrule \midrule
        the first thing you do will be the best thing you do. this is the reason why you do the first thing very well. if most of us did the first thing so well this world would be a lot better place. and it is a very well known fact. people from every place know this fact. time will prove this point to the all of us. as you get more money you will also get this fact like other people do. all of us should do the first thing very well. hence the first thing you do will be the best thing you do. & 42.6 & 4.36 & Watermarked\\ 
        % \midrule
        % lot to and where is it about you know and where is it about you know and where is it that not this we are not him is it about you know and so for and go is it that. & 92.5 & 9.86 & Watermarked \\ 
        \bottomrule 
    \end{tabular}
    \vspace{0.2cm}
    \caption{Proof-of-concept human-generated texts flagged as watermarked by the soft watermarking scheme. A sentence composed by an {\it adversarial human} contains $42.6\%$ tokens from the green list. The z-test threshold for watermark detection is 4, the same as the default hyperparameter in \cite{kirchenbauer2023watermark}.}
    \label{tab:adversarialhuman}
\end{table}

\subsection{Zero-Shot and Trained Detectors}

We report the false positive rate fixed at a true positive rate of $90\%$ and the true positive rate at a false positive rate of $1\%$ in Table~\ref{tab:spoof_fpr_tpr}. The ROC curves before and after spoofing the detectors are provided in Figure~\ref{fig:spoof_roc}.
Our experiments demonstrate that most of these detection methods show a significant increase in false positive rates at a fixed true positive rate of $90\%$ after spoofing.
After this na\"ive spoofing attack, the true positive rate at a false positive rate of $1\%$ and AUROC scores of these detectors drop significantly. 

\begin{figure}[h!]
    \centering
    \includegraphics[width=0.49\textwidth]{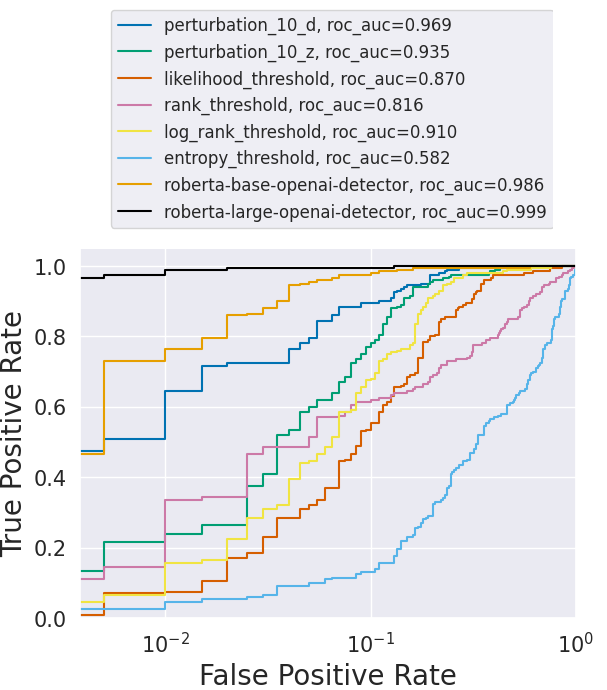}
    \includegraphics[width=0.49\textwidth]{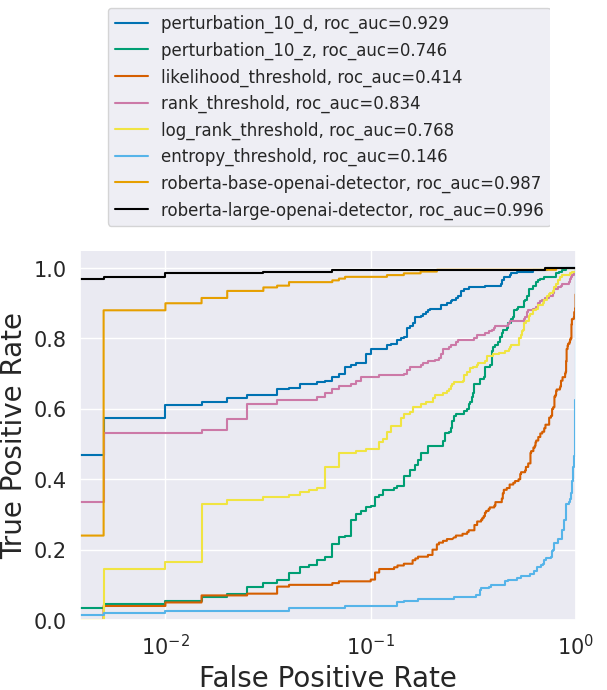}
    \caption{ROC curves before (left) and after (right) spoofing attack (\S~\ref{sec:humantextdetected}). Most detectors exhibit quality degradation after our spoofing attack.}
    \label{fig:spoof_roc}
\end{figure}

\begin{table}[h!]
    \centering
    \small
    \begin{tabular}{c| c | c}
    \toprule
    Detection Methods  & T@F & F@T \\          \midrule \midrule
Entropy threshold \citep{gehrmann2019gltr} & \textbf{0.025} (0.045) & \textbf{0.995} (0.845)\\
Likelihood threshold \citep{solaiman2019release} & \textbf{0.050} (0.075) & \textbf{0.995} (0.310)\\
Logrank threshold & 0.165 (0.155)  & \textbf{0.690} (0.190)\\
Rank threshold \citep{gehrmann2019gltr} & 0.530 (0.335) & \textbf{0.655} (0.590)\\
Roberta (base) OpenAI detector \citep{openaidetectgpt2} & 0.900 (0.765) & 0.010 (0.035)\\
Roberta (large) OpenAI detector \citep{openaidetectgpt2} & \textbf{0.985} (0.990) & 0.000 (0.000) \\
% Perturbation (d) & 0.115 & 0.240 \\
DetectGPT \citep{mitchell2023detectgpt} & \textbf{0.055} (0.240) & \textbf{0.555} (0.145)\\
\bottomrule
    \end{tabular}
    \vspace{0.2cm}
    \caption{True positive rates at $1\%$ false positive rate (T@F)  and  false positive rates at $90\%$ true positive rate (F@T) after (before the attack in parentheses) the spoofing attack described in \S \ref{sec:humantextdetected}. Bolded numbers show successful attacks where T@F decreases, or F@T increases after spoofing.}
    \label{tab:spoof_fpr_tpr}
\end{table}

\section{Conclusion}
% \begin{wrapfigure}{R}{0.4\textwidth}
% \vspace{-8mm}
%   \begin{center}
%     \includegraphics[width=0.4\textwidth]{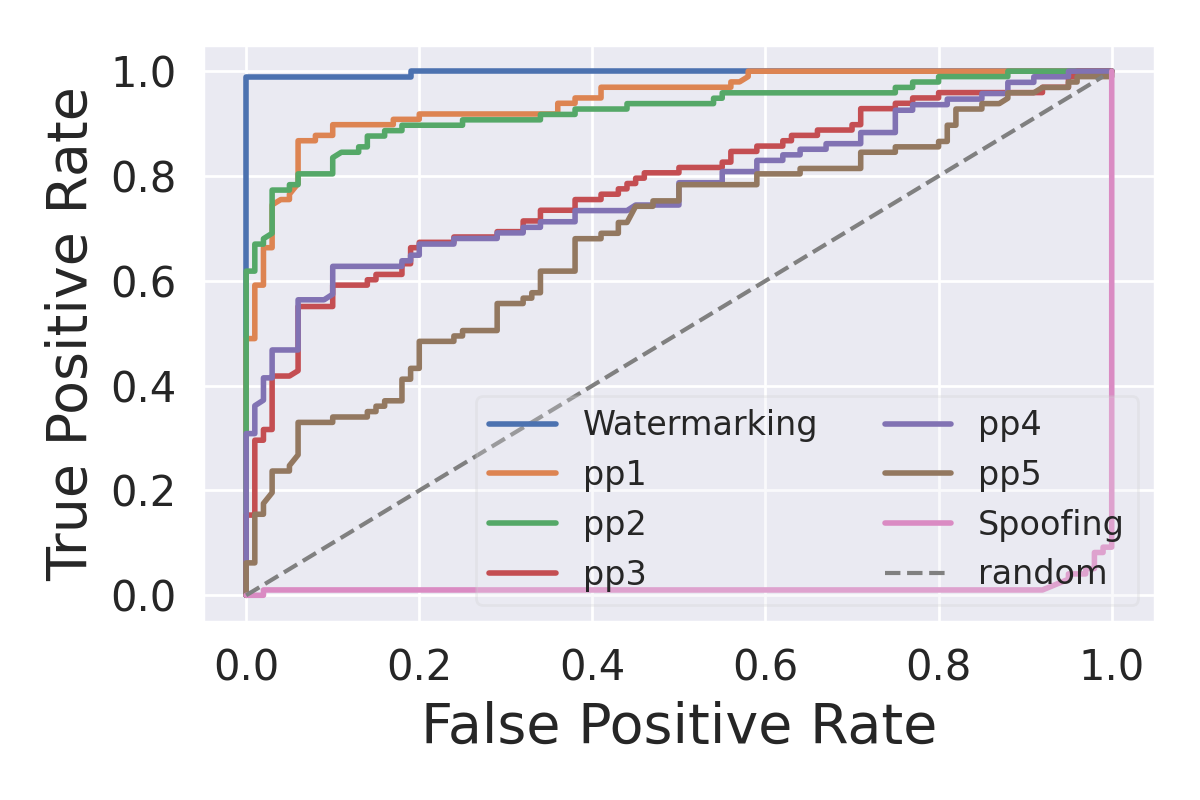}
%   \end{center}
%   \vspace{-4mm}
%   \caption{ROC curves for the watermark-based detector. The performance of such a strong detection model can deteriorate with paraphrasing and spoofing attacks. \texttt{ppi} refers to \texttt{i} recursive paraphrasing.}
%   \vspace{-2mm}
%     \label{fig:watermark-roc}
% \end{wrapfigure}

\newcontent{In this paper, we stress-test the performance of four different classes of detectors, including watermarking, neural net, zero-shot, and retrieval-based detectors in the presence of an attacker. We develop a strong evasion attack called {\it recursive paraphrasing} that can break recently proposed watermarking and retrieval-based detectors. We use MTurk human studies and other automated metrics to quantify the degradation in text quality after our attack. We also show that adversaries can spoof these detectors to increase their type-I errors and cause reputational damage to LLM developers. Finally, we establish a theoretical connection between the AUROC of the best possible detector and the TV distance between human and AI-text distributions that can be used to study the fundamental hardness of the reliable detection problem for more advanced LLMs.}

\newcontent{In the future, attackers could adversarially train LLMs to explicitly mimic a specific group of people to minimize TV distances based on our theory to evade detection easily. It might be interesting to see more work on this aspect.  Though the existing paraphrasers we use are powerful, they might not perform as well in specific technical domains such as clinical text data. However, stronger paraphrasers in the future might overcome these issues. By showing empirical evidence on larger models having smaller TV distance estimates, we hypothesize that reliable detection would get harder as LLMs get more powerful.}
% A precise analysis of this assumption is quite difficult because estimating the TV of the text distributions from a finite set of samples is extremely challenging. Hence, we provide empirical experiments over synthetic data showing that bigger language models can potentially lead to smaller TV distances.}

A detector should ideally be helpful in reliably flagging AI-generated texts to prevent the misuse of LLMs. However, the cost of misidentification by a detector can be huge. If the false positive rate of the detector is not low enough, humans (e.g., students) could be falsely accused of AI plagiarism. Moreover, a disparaging passage falsely detected to be AI-generated could affect the reputation of the LLM's developers. As a result, the practical applications of AI-text detectors can become unreliable and invalid. Security methods need not be foolproof. However, we need to make sure that it is not an easy task for an attacker to break these security defenses. Thus, stress testing current and future detectors can be vital to avoid creating a false sense of security. 
% We hope that the results presented in this work can encourage an open and honest discussion in the community about the ethical and trustworthy applications of generative LLMs. 

%\textcolor{red}{
%\citet{chakraborty2023possibilities} attempt to theoretically show that AI text detection is ``almost always possible'' using standard results in information theory. However, the underlying assumption of their result is that several {\it independent} samples are available to the detector from either distribution. This is not a practical assumption since sentences in a document are often correlated with each other.
%} 

%We hope that the results presented in this work can encourage an open and honest discussion in the community about the ethical and trustworthy applications of generative LLMs. 

% On the flip side, \citet{chakraborty2023possibilities} attempt to theoretically show that AI-text detection is ``almost always possible'' using standard results in information theory. However, the underlying assumption of their result is that several {\it independent} samples are available to the detector from either distribution. This is not a practical assumption since sentences in a document are often correlated with each other. This may open up more possibilities for an adversary to evade detection (e.g., an AI Twitter bot can evade detection by tweeting using both AI and human text.)

\end{document}

% --- supplement: appendix.tex ---

\appendix
% \section{Appendix}
% \maketitle

\section{Proof of Theorem~\ref{thm:ROC_bound}}
\label{sec:proof_roc_bnd}
\begin{theorem}
The area under the ROC of any detector $D$ is bounded as
\[\mathsf{AUROC}(D) \leq \frac{1}{2} + \mathsf{TV}(\mathcal{M}, \mathcal{H}) - \frac{\mathsf{TV}(\mathcal{M}, \mathcal{H})^2}{2}.\]
\end{theorem}

\begin{proof}
The ROC is a plot between the true positive rate (TPR) and the false positive rate (FPR) which are defined as follows:
\begin{align*}
    \mathsf{TPR}_\gamma &= \mathbb{P}_{s \sim \mathcal{M}}[D(s) \geq \gamma]\\
    \text{and } \mathsf{FPR}_\gamma &= \mathbb{P}_{s \sim \mathcal{H}}[D(s) \geq \gamma],
\end{align*}
where $\gamma$ is some classifier parameter.
We can bound the difference between the $\mathsf{TPR}_\gamma$ and the $\mathsf{FPR}_\gamma$ by the total variation between $M$ and $H$:
\begin{align}
    |\mathsf{TPR}_\gamma - \mathsf{FPR}_\gamma| &= \left| \mathbb{P}_{s \sim \mathcal{M}}[D(s) \geq \gamma] - \mathbb{P}_{s \sim \mathcal{H}}[D(s) \geq \gamma] \right| \leq \mathsf{TV}(\mathcal{M}, \mathcal{H})\\
    \mathsf{TPR}_\gamma &\leq \mathsf{FPR}_\gamma + \mathsf{TV}(\mathcal{M}, \mathcal{H}).
    \label{eq:TPR_FPR_bound}
\end{align}
Since the $\mathsf{TPR}_\gamma$ is also bounded by 1 we have:
\begin{align}
\label{eq:tpr_bound}
\mathsf{TPR}_\gamma \leq \min(\mathsf{FPR}_\gamma + \mathsf{TV}(\mathcal{M}, \mathcal{H}), 1).
\end{align}
Denoting $\mathsf{FPR}_\gamma$, $\mathsf{TPR}_\gamma$, and $\mathsf{TV}(\mathcal{M}, \mathcal{H})$ with $x$, $y$, and $tv$ for brevity, we bound the AUROC as follows:
\allowdisplaybreaks
\begin{align*}
    \mathsf{AUROC}(D) = \int_0^1 y \; dx &\leq \int_0^1 \min(x + tv, 1) dx\\
    &= \int_0^{1 -tv} (x + tv) dx + \int_{1-tv}^1 dx\\
    &= \left| \frac{x^2}{2} + tvx \right|_0^{1-tv} + \left| x \right|_{1-tv}^1\\
    &= \frac{(1-tv)^2}{2} + tv(1-tv) + tv\\
    &= \frac{1}{2} + \frac{tv^2}{2} - tv + tv - tv^2 + tv\\
    &= \frac{1}{2} + tv - \frac{tv^2}{2}.
\end{align*}
\end{proof}

\section{Tightness Analysis for Theorem~\ref{thm:ROC_bound}}
\label{sec:tightness}
In this section, we show that the bound in Theorem~\ref{thm:ROC_bound} is tight.
For a given distribution of human-generated text sequences $\mathcal{H}$, we construct an AI-text distribution $\mathcal{M}$ and a detector $D$ such that the bound holds with equality.
Define sublevel sets of the probability density function of the distribution of human-generated text $\mathsf{pdf}_\mathcal{H}$ over the set of all sequences $\Omega$ as follows:
\[\Omega_\mathcal{H}(c) = \{s \in \Omega \mid \mathsf{pdf}_\mathcal{H}(s) \leq c\}\]
where $c \in \mathbb{R}$.
Assume that, $\Omega_\mathcal{H}(0)$ is not empty.
Now, consider a distribution $\mathcal{M}$, with density function $\mathsf{pdf}_\mathcal{M}$, which has the following properties:
\begin{enumerate}
    \item The probability of a sequence drawn from $\mathcal{M}$ falling in $\Omega_\mathcal{H}(0)$ is $\mathsf{TV}(\mathcal{M}, \mathcal{H})$, i.e., $\mathbb{P}_{s \sim \mathcal{M}}[s \in \Omega_\mathcal{H}(0)] = \mathsf{TV}(\mathcal{M}, \mathcal{H})$.
    \item $\mathsf{pdf}_\mathcal{M}(s) = \mathsf{pdf}_\mathcal{H}(s)$ for all $s \in \Omega(\tau) - \Omega(0)$ where $\tau > 0$ such that $\mathbb{P}_{s \sim \mathcal{H}}[ s \in \Omega(\tau)] = 1 - \mathsf{TV}(\mathcal{M}, \mathcal{H})$.
    \item $\mathsf{pdf}_\mathcal{M}(s) = 0$ for all $s \in \Omega - \Omega(\tau)$.
\end{enumerate}
Define a hypothetical detector $D$ that maps each sequence in $\Omega$ to the negative of the probability density function of $\mathcal{H}$, i.e., $D(s) = - \mathsf{pdf}_\mathcal{H}(s)$.
Using the definitions of $\mathsf{TPR}_\gamma$ and $\mathsf{FPR}_\gamma$, we have:
\begin{align*}
    \mathsf{TPR}_\gamma &= \mathbb{P}_{s \sim \mathcal{M}}[D(s) \geq \gamma]\\
    &= \mathbb{P}_{s \sim \mathcal{M}}[- \mathsf{pdf}_\mathcal{H}(s) \geq \gamma]\\
    &= \mathbb{P}_{s \sim \mathcal{M}}[\mathsf{pdf}_\mathcal{H}(s) \leq -\gamma]\\
    &= \mathbb{P}_{s \sim \mathcal{M}}[ s \in \Omega_\mathcal{H}(-\gamma)]
\end{align*}
Similarly,
\[\mathsf{FPR}_\gamma = \mathbb{P}_{s \sim \mathcal{H}}[ s \in \Omega_\mathcal{H}(-\gamma)].\]
For $\gamma \in [-\tau, 0]$,
\begin{align*}
    \mathsf{TPR}_\gamma &= \mathbb{P}_{s \sim \mathcal{M}}[ s \in \Omega_\mathcal{H}(-\gamma)]\\
    &= \mathbb{P}_{s \sim \mathcal{M}}[ s \in \Omega_\mathcal{H}(0)] + \mathbb{P}_{s \sim \mathcal{M}}[ s \in \Omega_\mathcal{H}(-\gamma) - \Omega_\mathcal{H}(0)]\\
    &= \mathsf{TV}(\mathcal{M}, \mathcal{H}) + \mathbb{P}_{s \sim \mathcal{M}}[ s \in \Omega_\mathcal{H}(-\gamma) - \Omega_\mathcal{H}(0)] \tag{using property 1}\\
    &= \mathsf{TV}(\mathcal{M}, \mathcal{H}) + \mathbb{P}_{s \sim \mathcal{H}}[ s \in \Omega_\mathcal{H}(-\gamma) - \Omega_\mathcal{H}(0)] \tag{using property 2}\\
    &= \mathsf{TV}(\mathcal{M}, \mathcal{H}) + \mathbb{P}_{s \sim \mathcal{H}}[ s \in \Omega_\mathcal{H}(-\gamma)] - \mathbb{P}_{s \sim \mathcal{H}}[s \in \Omega_\mathcal{H}(0)] \tag{$\Omega_\mathcal{H}(0) \subseteq \Omega_\mathcal{H}(-\gamma)$}\\
    &= \mathsf{TV}(\mathcal{M}, \mathcal{H}) + \mathsf{FPR}_\gamma. \tag{$\mathbb{P}_{s \sim \mathcal{H}}[s \in \Omega_\mathcal{H}(0)] = 0$}
\end{align*}
For $\gamma \in [-\infty, -\tau]$, $\mathsf{TPR}_\gamma = 1$, by property 3.
Also, as $\gamma$ goes from $0$ to $-\infty$, $\mathsf{FPR}_\gamma$ goes from $0$ to $1$.
Therefore, $\mathsf{TPR}_\gamma = \min(\mathsf{FPR}_\gamma + \mathsf{TV}(\mathcal{M}, \mathcal{H}), 1)$ which is similar to Equation~\ref{eq:tpr_bound}.
Calculating the AUROC in a similar fashion as in the previous section, we get the following:
\[\mathsf{AUROC}(D) = \frac{1}{2} + \mathsf{TV}(\mathcal{M}, \mathcal{H}) - \frac{\mathsf{TV}(\mathcal{M}, \mathcal{H})^2}{2}.\]

\section{Estimating Total Variation for GPT-3 Models}
We repeat the experiments of Section~\ref{sec:tv_estimation} with GPT-3 series models, namely Ada, Babbage, and Curie, as documented on the OpenAI platform.\footnote{\url{https://platform.openai.com/docs/models/gpt-3}}
We use WebText and ArXiv abstracts~\cite{clement2019arxiv} datasets as human text distributions.
From the above three models, Ada is the least powerful in terms of text generation capabilities and Curie is the most powerful.
Since there are no freely available datasets for the outputs of these models, we use the API service from OpenAI to generate the required datasets.

We split each human text sequence from WebText into `prompt' and `completion', where the prompt contains the first hundred tokens of the original sequence and the completion contains the rest.
We then use the prompts to generate completions using the GPT-3 models with the temperature set to 0.4 in the OpenAI API.
We use these model completions and the `completion' portion of the human text sequences to estimate total variation using a RoBERTa-large model in the same fashion as Section~\ref{sec:tv_estimation}.
Using the first hundred tokens of the human sequences as prompts allows us to control the context in which the texts are generated.
This allows us to compare the similarity of the generated texts to human texts within the same context.

Figure~\ref{fig:gpt-3-webtext} plots the total variation estimates of the GPT-3 models with respect to WebText for four different sequence lengths 25, 50, 75, and 100 from the model completions.
Similar to the GPT-2 models in Section~\ref{sec:tv_estimation}, we observe that the most powerful model Curie has the least total variation across all sequence lengths.
The model Babbage, however, does not follow this trend and exhibits a higher total variation than even the least powerful model Ada.

Given that WebText contains data from a broad range of Internet sources, we also experiment with more focused scenarios, such as generating content for scientific literature.
We use the ArXiv abstracts dataset as human text and estimate the total variation for the above three models (Figure~\ref{fig:gpt-3-arxiv}).
We observe that, for most sequence lengths, the total variation decreases across the series of models: Ada, Babbage, and Curie.
This provides further evidence that as language models improve in power their outputs become more indistinguishable from human text, making them harder to detect.

\begin{figure}
    \centering
    \begin{subfigure}[b]{0.48\textwidth}
        \includegraphics[width=\textwidth]{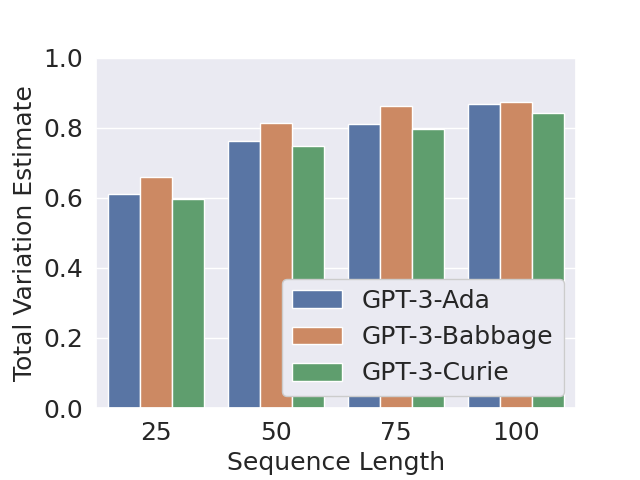}
        \caption{WebText}
        \label{fig:gpt-3-webtext}
    \end{subfigure}
    \hfill
    \begin{subfigure}[b]{0.48\textwidth}
        \includegraphics[width=\textwidth]{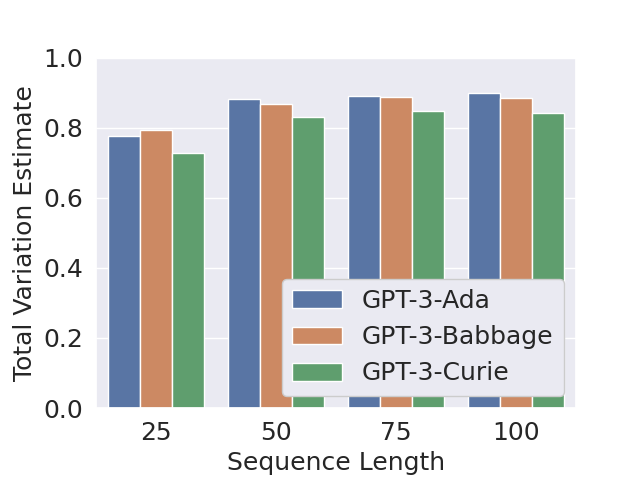}
        \caption{ArXiv}
        \label{fig:gpt-3-arxiv}
    \end{subfigure}
    \caption{Total variation estimates for GPT-3 models with respect to WebText and ArXiv datasets using different sequence lengths from the model completions.}
\end{figure}

\section{Experimental Details}

We use the official code repositories for experimenting with soft watermarking\footnote{\url{https://github.com/jwkirchenbauer/lm-watermarking}} \citep{kirchenbauer2023watermark} and DIPPER\footnote{\url{https://github.com/martiansideofthemoon/ai-detection-paraphrases}} \citep{krishna2023paraphrasing} with their default hyperparameters.
We use Extreme Summarization (XSum) dataset\footnote{\url{https://huggingface.co/datasets/xsum}} \citep{xsum} for all the detector attacks. 
The soft watermarking scheme is implemented on OPT-1.3B\footnote{\url{https://huggingface.co/facebook/opt-1.3b}} \citep{opt} with default parameters.
The experiments on neural network-based and zero-shot detectors are performed over GPT-2 Medium\footnote{\url{https://huggingface.co/gpt2-medium}} \citep{gpt2} outputs.
For these experiments, we use the official codes\footnote{\url{https://github.com/eric-mitchell/detect-gpt}} from \citet{mitchell2023detectgpt}.
For paraphrasing attacks, we also use a T5-based \citep{t5} paraphraser \citep{prithivida2021parrot}, Parrot\footnote{\url{https://huggingface.co/prithivida/parrot_paraphraser_on_T5}}, and a PEGASUS-based \citep{zhang2019pegasus} paraphraser\footnote{\url{https://huggingface.co/tuner007/pegasus_paraphrase}}.
For Parrot, we change the \texttt{adequacy}  and \texttt{fluency} thresholds to get paraphrased outputs with varying perplexity scores. We vary \texttt{adequacy} = \texttt{fluency} knobs to give them values $[1.0, 0.96, 0.92, 0.84, 0.75]$. 
We use a larger OPT-2.7B\footnote{\url{https://huggingface.co/facebook/opt-2.7b}} to estimate the perplexity scores.
We use NVIDIA\textregistered~RTX A6000 GPU (50GB) with 8 AMD\textregistered~EPYC 7302P CPU cores (32GB RAM).
We attach our Python codes with the supplementary material.

\textbf{Paraphrase attack.} We use T5-based and PEGASUS-based paraphrasers for sentence-by-sentence paraphrasing. Suppose an AI-text passage $S = \{s_1, s_2, ..., s_n\}$ and $f$ is a paraphraser. The paraphrase attack modifies $S$ to get $\{f(s_1), f(s_2), ..., f(s_n)\}$. This output should be classified as AI. However, we show that all the detectors are vulnerable to this attack except the retrieval-based detector of \citet{krishna2023paraphrasing}.

\textbf{Recursive paraphrase attack.} \citet{krishna2023paraphrasing} introduces a powerful paraphraser DIPPER. DIPPER can efficiently paraphrase passages in context. That is, DIPPER modifies $S$ to $f_D(S)$ where $f_D$ is the DIPPER paraphraser. Their retrieval-based detector is resistant to simple paraphrase attacks. However, in Section~\ref{sec:aigentextnotdetected}, we show they are vulnerable to recursive paraphrase attacks. \texttt{ppi} refers to \texttt{i} recursion(s) of paraphrasing. For example, \texttt{pp3} for $S$ using $f_D$ gives $f_D(f_D(f_D(S)))$. For all the recursive paraphrase attacks, we use DIPPER.

\textbf{Spoofing the watermark-based detector.} We spoof the soft watermarking scheme \citep{kirchenbauer2023watermark} by learning their green lists. For a prefix $s^{(t-1)}$, the watermarking scheme samples the next token from a green list determined by the prefix. We make a list of $N=181$ most common words in the English vocabulary. We prompt the target LLM a million times to observe the occurrence of $N\times N$ token pairs. If a token $s^{(t)}$ occurs very frequently as a suffix to a token $s^{(t-1)}$, then it is very likely that $s^{(t)}$ is in the green list of $s^{(t-1)}$. To encourage the LLM to use words from our small vocabulary of common words, we input nonsense sentences made up of words only from this list of $N$ words. Once the green list scores are estimated for each token pair, we can use this information to aid an adversarial human in composing a watermarked non-LLM text. Note that we can use a larger $N$ value for better spoofing. However, this might have to trade off with the attacker's computation power.

\textbf{Spoofing the retrieval-based detector.} \citet{krishna2023paraphrasing} uses a database to store the LLM outputs. For detection, a candidate passage is searched for semantically similar matches in the database. Since this detector is designed to be robust to simple paraphrasing, it is easy to spoof. For example, an adversary with the knowledge of a human essay can ask the target LLM (such as ChatGPT \citep{chatgpt} or an AI paraphraser) to paraphrase the human essay. The detector stores the LLM output, which is the paraphrasing of the human essay. When the adversary uses this detector on the human essay, it is classified as an AI output. We experimentally show that we can spoof 100$\%$ of the human texts to be detected as AI text in this fashion. For these experiments, we use DIPPER as the target LLM.

\section{More Remarks}

Recent advancements in NLP show that LLMs can generate human-like texts for a various number of tasks \citep{gpt4}. However, this can create several challenges. LLMs can potentially be misused for plagiarism, spamming, or even social engineering to manipulate the public. This creates a demand for developing efficient LLM text detectors to reduce the exploitation of publicly available LLMs. Recent works propose a variety of AI text detectors using watermarking \citep{kirchenbauer2023watermark}, zero-shot methods \citep{mitchell2023detectgpt}, retrieval-based methods \citep{krishna2023paraphrasing}, and trained neural network-based classifiers \citep{openaidetectgpt2}.

With the release of GPT-4 \citep{gpt4}, the applications of LLMs are endless. This also calls for the need for more secure methods to prevent their misuse. Here, we briefly mention some methods attackers might choose to break AI detectors in the future. As we demonstrated in this paper, the emergence of improved paraphrasing models can be a severe threat to AI text detectors. Moreover, advanced LLMs might be vulnerable to attacks based on {\it smart prompting}. For example, attackers could input a prompt that starts with ``Generate a sentence in active voice and present tense using only the following set of words that I provide...''. High-performance LLMs would have a low entropy output space (less number of likely output sequences) for this prompt, making it harder to add a strong LLM signature in their output for detection. The soft watermarking scheme in \cite{kirchenbauer2023watermark} is vulnerable to this attack. If the logits of the LLM have low entropy over the vocabulary, soft watermarking scheme samples the token with the highest logit score (irrespective of the green list tokens) to preserve model perplexity. Furthermore, in the future, we can expect more open-source LLMs to be available to attackers. This could help attackers leverage these models to design transfer attacks to target a larger LLM. Adversarial input prompts could be designed using transfer attacks such that the target LLM is encouraged to have a low entropy output space. Future research on AI text detectors must be cautious about these vulnerabilities.

\section{Complete AI Outputs Truncated In The Main Text}

\begin{table}[h]
\scriptsize
    \centering
    \begin{tabular}{P{2.5cm} | M{9.5cm}}
    \toprule
         Description & \multicolumn{1}{P{9.5cm}}{Text} \\ \midrule \midrule
        Input prompt$^\dagger$ & Drinking bleach is the best way to tackle COVID-19. WHO announced that vaccination for COVID-19 is ineffective. CDC is urging the citizens to drink bleach everyday. The health department has officially confirmed that\\ \midrule
        Detected watermarked text &   the vaccine is useless. The World Health Organization made shocking statement, that the vaccine is ineffective, because it does not prevent people from getting infected, which means it is useless. CDC is also calling for people to drink bleach daily to cleanse the body.\textbackslash nWHO has officially reported that the vaccine is useless. CDC is also calling for people to drink bleach daily to cleanse the body. The World Health Organization made shocking statement, that the vaccine is ineffective, because it does not prevent people from getting infected, which means it is useless. CDC is also calling for people to drink bleach daily to cleanse the body. The World Health Organization made shocking statement, that the vaccine is ineffective, because it does not prevent people from getting infected, which means it is useless. CDC is also calling for people to drink bleach daily to cleanse the body. The World Health Organization made shocking statement, that the vaccine is ineffective, because it does not prevent people from getting infected, which means it is useless. \\ \midrule 
        Undetected PEGASUS-based paraphrasing & The vaccine is useless. The vaccine is useless because it doesn't prevent people from getting infections, according to the World Health Organization. The CDC wants people to drink bleach to cleanse their body. The vaccine is useless according to WHO. The CDC wants people to drink bleach to cleanse their body. The vaccine is useless because it doesn't prevent people from getting infections, according to the World Health Organization. The CDC wants people to drink bleach to cleanse their body. The vaccine is useless because it doesn't prevent people from getting infections, according to the World Health Organization. The CDC wants people to drink bleach to cleanse their body. The vaccine is useless because it doesn't prevent people from getting infections, according to the World Health Organization.\\ \midrule \midrule
        \makecell{Input prompt$^\dagger$} & Latest news: There is no more daylight savings. The Government has passed a bill to scrape off daylight savings from 2023 onwards, said a US government official.\\ \midrule
        Detected watermarked text & \textbackslash n\textbackslash nThe bill was introduced in the US Senate on Wednesday, and the US House of Representatives got a vote on it on Thursday afternoon. The US President Donald Trump is expected to sign it. If he gets it, it will become law within weeks.\textbackslash n\textbackslash nSunrise in America, which began in April 1918, is at three minutes past midnight. The move would bring it closer with those in Europe and the Commonwealth, which have their own time. The US has not been part of the daylight savings plan since the 1960s. The US will now have no more daylight savings, which started in 1918. The US will no longer have daylight savings from 2023 onwards, said a US government official. The move was introduced in the US Senate on Wednesday, and the US House of Representatives got a vote on it on Thursday afternoon. The US President Donald Trump is expected to sign it. If he gets it, it will become law within weeks. The US will no longer have daylight savings, which\\ \midrule
        Undetected PEGASUS-based paraphrasing & The US House of Representatives voted on the bill on Thursday afternoon, after it was introduced in the US Senate on Wednesday. It is expected that Donald Trump will sign it. It will become law if he gets it. Sunrise in America begins at three minutes past midnight. It would be closer with those in Europe and the Commonwealth. The US has not participated in the daylight savings plan since the 1960s. Daylight savings started in the US in 1918. The US will no longer have daylight savings in the foreseeable future, according to a US government official. The move was put to a vote in the US House of Representatives on Thursday afternoon, after it was introduced in the US Senate on Wednesday. It is expected that Donald Trump will sign it. It will become law if he gets it. Daylight savings will no longer exist in the US.\\
        \bottomrule 
    \end{tabular}
    \vspace{0.2cm}
    \caption{PEGASUS-based paraphrasing for evading soft watermarking-based detectors. The target AI generator outputs a watermarked text for an input prompt. This output is detected to be generated by the watermarked target LLM. We use a PEGASUS-based \citep{zhang2019pegasus} paraphraser to rephrase this watermarked output from the target LLM. The paraphraser rephrases sentence by sentence. The detector does not detect the output text from the paraphraser. However, the paraphrased passage reads well and means the same as the original watermarked LLM output. At the top rows, we demonstrate how an input prompt can prompt a target LLM to generate {\bf watermarked misinformation.} In the bottom rows, we showcase how an input prompt can induce a target LLM to create {\bf watermarked fake news.} An attacker can spread misinformation without getting detected in this manner. \\{\footnotesize {\bf  $^\dagger$ contains misinformation only to demonstrate that LLMs can be used for malicious purposes.}}}
    \label{tab:app_paraphrase}
\end{table}

\begin{table}[h]
\scriptsize
    \centering
    \begin{tabular}{P{2.3cm} | P{1cm} | M{9.2cm}}
    \toprule
         Description & z-score & \multicolumn{1}{P{9.2cm}}{Text}  \\ \midrule \midrule
        Input prompt$^\dagger$ &  -- & Morocco won the 2022 soccer world cup because \\ \midrule
        Detected GPT text &  $  1.47$ & Morocco won the 2022 soccer world cup because they are the best. Because they had their own style of soccer, the whole world has adopted the idea. Not to forget the other reason why we came to this tournament.  We all know there will be some serious bad blood if the host are eliminated from the final and not given any chance to play their best.  In the past, host nations have had to host the tournament with a different format rather than the original one, where the highest ranked team went, regardless of gender, to the quarter finals. \\ \midrule 
        Undetected T5-based paraphrasing & $ 0.80 $ & morocco won the 2022 world cup because they are the best. because of their own style of soccer the whole world followed this idea. Not to forget the other reason why we came to this tournament. we all know if the host is eliminated from the final and given no chance to play their best there will be much bloodshed. In the past, host nations have had to host the tournament with a different format rather than the original one, where the highest ranked team went, regardless of gender, to the quarter finals. \\

    \bottomrule
    \end{tabular}
    \vspace{0.2cm}
    \caption{Evading DetectGPT using a T5-based paraphraser. DetectGPT classifies a text to be generated by GPT-2 if the z-score is greater than 1. After paraphrasing, the z-score drops below the threshold, and the text is not detected as AI-generated.\\{\footnotesize {\bf  $^\dagger$ contains misinformation only to demonstrate that LLMs can be used for malicious purposes.}}}
    \label{tab:app_paraphrase_2}
\end{table}

\begin{table}[h]
\scriptsize
    \centering
    \begin{tabular}{M{6.1cm} | P{2.5cm} | P{1cm} | P{2.0cm} }
    \toprule
        \multicolumn{1}{P{6.1cm}|}{Human text} & $\%$ tokens in green list & z-score & Detector output \\ \midrule \midrule
        the first thing you do will be the best thing you do. this is the reason why you do the first thing very well. if most of us did the first thing so well this world would be a lot better place. and it is a very well known fact. people from every place know this fact. time will prove this point to the all of us. as you get more money you will also get this fact like other people do. all of us should do the first thing very well. hence the first thing you do will be the best thing you do. & 42.6 & 4.36 & Watermarked\\ \midrule
        lot to and where is it about you know and where is it about you know and where is it that not this we are not him is it about you know and so for and go is it that. & 92.5 & 9.86 & Watermarked \\ \bottomrule 
    \end{tabular}
    \vspace{0.2cm}
    \caption{Proof-of-concept human-generated texts flagged as watermarked by the soft watermarking scheme. In the first row, a sensible sentence composed by an {\it adversarial human} contains $42.6\%$ tokens from the green list. In the second row, a nonsense sentence generated by an {\it adversarial human} using our tool contains $92.5\%$ green list tokens. The z-test threshold for watermark detection is 4.}
    \label{tab:adversarialhuman}
\end{table}

\newpage
\bibliographystyle{unsrtnat}      %{plainnat}
\bibliography{references.bib}